\newtheorem{thm}{Theorem}[section]
\newtheorem{lem}{Lemma}[section]
\newtheorem{prop}{Proposition}[section]
\theoremstyle{definition}
\theoremstyle{remark}
\newtheorem*{rem}{Remark}
\title{Complete Dictionary Learning via $\ell_p$-norm Maximization}
\author{Yifei Shen$^{1}$\thanks{\quad These two authors contributed equally.}, Ye Xue$^{1}$\footnotemark[1], Jun Zhang$^{2}$, Khaled B. Letaief$^{1}$, Vincent Lau$^{1}$\\
	{\small $^1$Hong Kong University of Science and Technology, Hong Kong, China,}
	\emph{\small \{yshenaw, ye.xue,eekhaled,eeknlau\}@ust.hk}\\
	{\small $^2$The Hong Kong Polytechnic University, Hong Kong, China,}
	\emph{\small jun-eie.zhang@polyu.edu.hk}
}
\begin{document}

\maketitle

\begin{abstract}
Dictionary learning is a classic representation learning method that has been widely applied in signal processing and data analytics. In this paper, we investigate a family of $\ell_p$-norm ($p>2,p \in \mathbb{N}$) maximization approaches for the complete dictionary learning problem from theoretical and algorithmic aspects. Specifically, we prove that the global maximizers of these formulations are very close to the true dictionary with high probability, even when Gaussian noise is present. Based on the generalized power method (GPM), an efficient algorithm is then developed for the $\ell_p$-based formulations. We further show the efficacy of the developed algorithm: for the population GPM algorithm over the sphere constraint, it first quickly enters the neighborhood of a global maximizer, and then converges linearly in this region. Extensive experiments demonstrate that the $\ell_p$-based approaches enjoy a higher computational efficiency and better robustness than conventional approaches and $p=3$ performs the best.
\end{abstract}

\section{INTRODUCTION}
\emph{Dictionary learning} is a classic unsupervised representation learning method \cite{olshausen1996emergence}. Given data $\bm{Y}$, it identifies a representation basis $\bm{D}_0$ and the corresponding coefficients $\bm{X}_0$ such that $\bm{Y} \approx \bm{D}_0\bm{X}_0$ with $\bm{X}_0$ being sufficiently sparse. Given its powerful capability of exploiting low-dimensional structures in high-dimensional data, dictionary learning has found wide applications in signal and image processing \cite{elad2010high}.

Solving the dictionary learning problem inevitably involves non-convex optimization \cite{sun2015nonconvex}, and is thus highly challenging. A key ingredient underlying the recent advancement in this area is innovative mathematical formulations. A natural formulation is to minimize the $\ell_0$ norm of $\bm{X}_0$ for inducing sparsity. However, $\ell_0$ minimization with a non-convex constraint is computationally intractable. Thus, surrogate objective functions are explored, which ideally should come with theoretical guarantees for recovering the dictionary, as well as efficient algorithms. In particular, $\ell_1$-norm minimization based formulations, which promote sparsity in problems such as compressive sensing, have been widely adopted in dictionary learning \cite{sun2015complete,bai2018subgradient,gilboa2018efficient,wang2019unique}. While such formulations enjoy theoretical guarantees, they face computational challenges for high-dimensional data. Particularly, existing algorithms can only deal with one row of a dictionary at a time. Given the high computational complexity of classic algorithms for recovering one row, e.g., the Riemannian trust region algorithm or subgradient descent, repeatedly solving the problem for $n$ times to recover the whole dictionary leads to prohibitive complexity. Additionally, $\ell_1$-based methods are known to be sensitive to noise in the observation \cite{wang2019unique,zhai2019understanding}. Thus, formulations that lead to more efficient and robust methods are needed.

Recently, a novel $\ell_4$-norm maximization formulation was proposed in \cite{zhai2019complete,zhai2019understanding}, which is able to recover the entire dictionary at once. It was shown that the global maximizers of the $\ell_4$-based formulation are very close to the true dictionary. Moreover, the concaveness of the formulation enables a fast fixed-point type algorithm, named \emph{matching, stretching, and projection} (MSP), which achieves hundreds of times speedup compared with existing methods. This new formulation is motivated by the fact that maximizing $\ell_{2k+2}$-norm promotes spikiness and sparsity \cite{li2018global,zhang2019structured}. In experiments, $\ell_4$-norm was found to be the best among all the $\ell_{2k+2}$-norms in terms of the sample complexity and computational efficiency.

The essence of the $\ell_{2k+2}$-norm approach is to maximize an $\ell_{2k+2}$-norm over an $\ell_2$-norm constraint. In principle, maximizing \emph{any} higher-order norm over a lower-order norm constraint leads to sparse and spiky solutions. Thus, we conjecture that the dictionary can be recovered via maximizing \emph{any} $\ell_p$-norm ($p>2$), which is tested numerically in Fig. \ref{fig:l3-6}. It is demonstrated that the dictionary is recovered with high probability for $p=3, 4, 5, 6$. Particularly, the $\ell_3$-based formulation enjoys the lowest sample complexity\footnote{Sample complexity here means the minimal required number of samples to successfully recover the true dictionary.}. These observations lead to the following intriguing questions:
\begin{itemize}
	\item Can all the $\ell_p$-norm maximization based formulations provably recover the true dictionary?
	\item Which $p$ should we pick for practical applications?	
	\item What advantages do they enjoy over the traditional approaches, e.g., $\ell_1$-based approaches?
\end{itemize}

Unfortunately, the analysis in \cite{zhai2019complete} cannot be extended to $\ell_{2k+1}$-norms, and thus could not address the above questions. In this paper, we endeavor to develop more general theoretical results for $\ell_p$-norm ($p>2, p \in \mathbb{N}$) based formulations\footnote{Our analysis can be extended to $2<p<\infty$ with minor modification. However, due to the high computational complexity of taking fractional power, we will not discuss $p \notin \mathbb{N}$ in this paper.}, which will lead to a better understanding of such methods and also provide guidelines to find more efficient algorithms.

\begin{figure}[htb]
	\centering
	\includegraphics[width=0.45\textwidth]{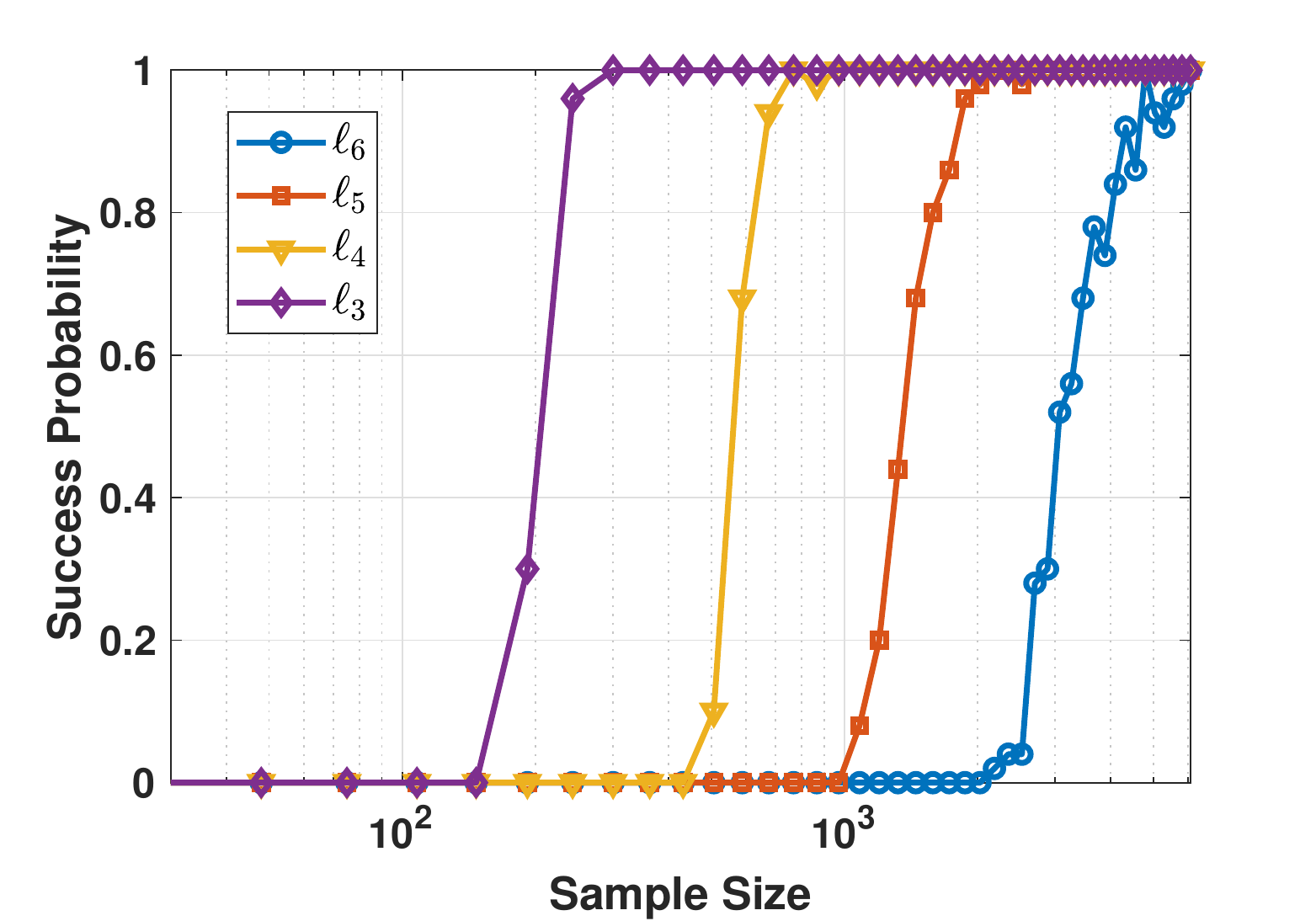}
	\caption{The successful dictionary recover probability with different values of $p$ when $n=30$ and $\theta=0.3$.}
	\label{fig:l3-6}
\end{figure}

\subsection{Contributions}
In this paper, we study the $\ell_p$-norm ($p>2, p \in \mathbb{N}$) maximization based dictionary learning. 
\begin{itemize}
	\item We prove that as long as the number of samples is larger than $\Omega\left(nk^{\frac{p}{2}}\log^{\frac{p}{2}+1}(n)\right)$, where $n$ is the size of the dictionary and $k$ is the sparsity level, the global maximizers of all $\ell_p$-norm maximization based formulations are very close to the true dictionary with high probability. The dictionary can be recovered even in the presence of the Gaussian noise.
	\item An efficient algorithm is developed based on the generalized power method \cite{journee2010generalized}, which applies to $p>2$. It is proved that the population generalized power method enjoys a desirable global convergence behavior over the sphere constraint. Specifically, the convergence involves two stages, where the first stage only takes a few iterations and the second stage enjoys a linear convergence rate.
	\item To guide the practical application, we prove that the $\ell_3$-based approach enjoys the lowest sample complexity and is the most robust among all the $\ell_p$-norm maximization based formulations. The experiments will further demonstrate that the $\ell_3$-based approach is also more time-efficient and robust than existing methods, including K-SVD, $\ell_1$ and $\ell_4$-based approaches.
\end{itemize}

\subsection{Notations and Terminologies}
\paragraph{Asymptotic notations:} Throughout the paper, $f(n) = \mathcal{O}(g(n))$ means that there exists a constant $c>0$ such that $f(n) \leq c|g(n)|$; $f(n) = \Theta(g(n))$ means that there exists constants $c_1,c_2$ such that $c_1|g(n)|\leq f(n) \leq c_2 |g(n)|$; $f(n) = \Omega(g(n))$ means that there exists constants $c_3 > 0$ such that $c_3|g(n)|\leq f(n)$

\paragraph{Norms:} $\|\cdot\|_p$ is the element-wise $p$-th norm of a vector or matrix, i.e., $\|\bm{A}\|_p = (\sum_{i}\sum_{j} |A_{ij}|^p)^{\frac{1}{p}}$. Likewise, $\|\cdot\|_F$ and $\|\cdot\|$ denote the Frobenius norm and operator norm of a matrix, respectively.

\paragraph{Stiefel manifold:}
The Stiefel manifold $\text{St}(n,m)$ is defined as the subspace of orthonormal N-frames in $\mathbb{R}^n$, namely,
\begin{equation}
\text{St}(n,m) = \{\bm{\Gamma} \in \mathbb{R}^{n\times m}: \bm{\Gamma}^*\bm{\Gamma} = \bm{I}_m  \}
\end{equation}
where $\bm{I}_m$ is the $m \times m$ identity matrix. We further denote the orthogonal group as $\mathbb{O}(n) = \text{St}(n,n)$

\paragraph{Distributions:} We denote a Bernoulli distribution with $\theta$ non-zero probability as $\text{Ber}(\theta)$. The Bernoulli-Gaussian distribution is denoted by $\mathcal{BG}(\theta)$ and defined as $x=b\cdot g$, where $b \sim \text{Ber}(\theta)$ and $g \sim \mathcal{N}(0,1)$.

\paragraph{Sign-permutation ambiguity:} Note that for a sparse matrix $\bm{X}_0$ and any signed permutation matrix $\bm{P}$, $\bm{X}_0$ and $\bm{X}_0\bm{P}$ are equally sparse. Thus, we consider that a dictionary $\bm{D}_0$ is successfully recovered if we find any signed permutation of $\bm{D}_0$.

\section{$\ell_p$-NORM MAXIMIZATION BASED COMPLETE DICTIONARY LEARNING}
In this section, we study the statistical performance of $\ell_p$-based formulations of complete dictionary learning and investigate their robustness.
\subsection{Problem Formulation}
We consider an orthogonal (complete) dictionary learning problem with a Bernoulli-Gaussian model, with the following three justifications. First, it has been demonstrated that the performance of complete bases is competitive to over-complete dictionaries in real applications \cite{bao2013fast}. Second, the complete dictionary learning problem can be converted into the orthogonal case through a simple preconditioning \cite{sun2015complete}. Third, the Bernoulli-Gaussian model is a reasonable model for generic sparse coefficients \cite{spielman2012exact,sun2015complete,bai2018subgradient,zhai2019complete,zhai2019understanding}.

Specifically, we assume that each sample $\bm{y}_i \in \mathbb{R}^n$ is generated from a sparse superposition of an orthogonal dictionary $\bm{D}_0 \in \mathbb{O}(n)$, i.e., $\bm{y}_i=\bm{D}_0\bm{x}_i$, where each element in $\bm{x}_i \in \mathbb{R}^n$ is i.i.d. Bernoulli-Gaussian, i.e., $x_{i,j} \sim \mathcal{BG}(\theta)$. Denote $\bm{Y}=\{\bm{y}_1,\bm{y}_2,\cdots,\bm{y}_r\}$, $\bm{X}_0=\{\bm{x}_1,\bm{x}_2,\cdots,\bm{x}_r\}$ and thus $\bm{Y}=\bm{D}_0\bm{X}_0$. Our goal is to simultaneously recover the dictionary $\bm{D}_0$ and coefficients $\bm{X}_0$ from the observation $\bm{Y}$. A good estimate of $\bm{D}_0$, denoted as $\bm{D}$, should maximize the sparsity of the associated coefficients $\bm{X}$. Therefore, a natural $\ell_0$-based formulation of the orthogonal dictionary learning problem is
\begin{equation} \label{eq:l0_st}
\begin{aligned}
&\underset{\bm{D},\bm{X}}{\text{minimize}}
&& \|\bm{X}\|_0 \\
& \text{ subject to } 
&&\bm{Y}=\bm{D}\bm{X}, \bm{D} \in \mathbb{O}(n).
\end{aligned}
\end{equation}
As $\bm{D} \in \mathbb{O}(n)$, we can write $\bm{X} = \bm{D}^*\bm{Y}$. Denote $\bm{D}=[\bm{d}_1,\cdots,\bm{d}_n]$ and $\bm{A} = \bm{D}^*$, and then the formulation (\ref{eq:l0_st}) can be transformed into 
\begin{equation} \label{eq:l0_st2}
\begin{aligned}
&\underset{\bm{A}}{\text{minimize}}
& & \|\bm{A}\bm{Y}\|_0 \\
& \text{ subject to }
& & \bm{A} \in \mathbb{O}(n).
\end{aligned}
\end{equation}

Nevertheless, Problem (\ref{eq:l0_st2}) is difficult to solve due to the combinatorial nature of $\|\cdot \|_0$ and the non-convex constraint. To motivate $\ell_p$-based formulations, we first consider a simpler case of (\ref{eq:l0_st2}), i.e., solving one column of the dictionary by
\begin{equation} \label{eq:l0_s}
\begin{aligned}
&\underset{\bm{a}}{\text{minimize}}
& & \|\bm{a}^*\bm{Y}\|_0
& \text{ subject to }
& & \|\bm{a}\|_2 = 1.
\end{aligned}
\end{equation}

The essence of the existing heuristic formulations, either the $\ell_1$ or $\ell_{4}$ based one, to solve (\ref{eq:l0_s}) is to promote sparsity over an $\ell_2$ constraint. Note that the landscapes of higher-order norms are sharper than lower-order norms. Thus, maximizing \emph{any} higher-order norm over a lower-order norm constraint pushes the variables to extreme values, i.e., $0$ or the maximal, which leads to sparse solutions. Fig. \ref{fig:l4} illustrates this phenomenon on $\mathbb{S}^2$. Therefore, heuristically, we can adopt \emph{any} $\ell_p$-based ($p>2$) formulation for dictionary learning. Specifically, we expect one column of the dictionary to be recovered by solving the following problem

\begin{equation} \label{eq:lp}
\begin{aligned}
&\underset{\bm{a}}{\text{maximize}}
& & \|\bm{a}^*\bm{Y}\|_p^p = \|\bm{a}^*\bm{D}_0\bm{X}_0\|_p^p \\
&\text{ subject to } 
& &\|\bm{a}\|_2=1,
\end{aligned}
\end{equation}
where $p \in \mathbb{N}$ and $p>2$. 

Similarly, $m$ columns of $\bm{D}_0$ can be recovered at once by considering the following optimization problem

\begin{equation} \label{eq:lp_st}
\begin{aligned}
\mathscr{P}:\quad & \underset{\bm{A}}{\text{maximize}}
& & \|\bm{A}\bm{Y}\|_p^p = \|\bm{A}\bm{D}_0\bm{X}_0\|_p^p \\
& \text{ subject to } 
& &\bm{A}^* \in \text{St}(n,m).
\end{aligned}
\end{equation}

The whole dictionary can be recovered at once if $m=n$.

\begin{figure}[htb]
	\centering
	\includegraphics[width=0.4\textwidth]{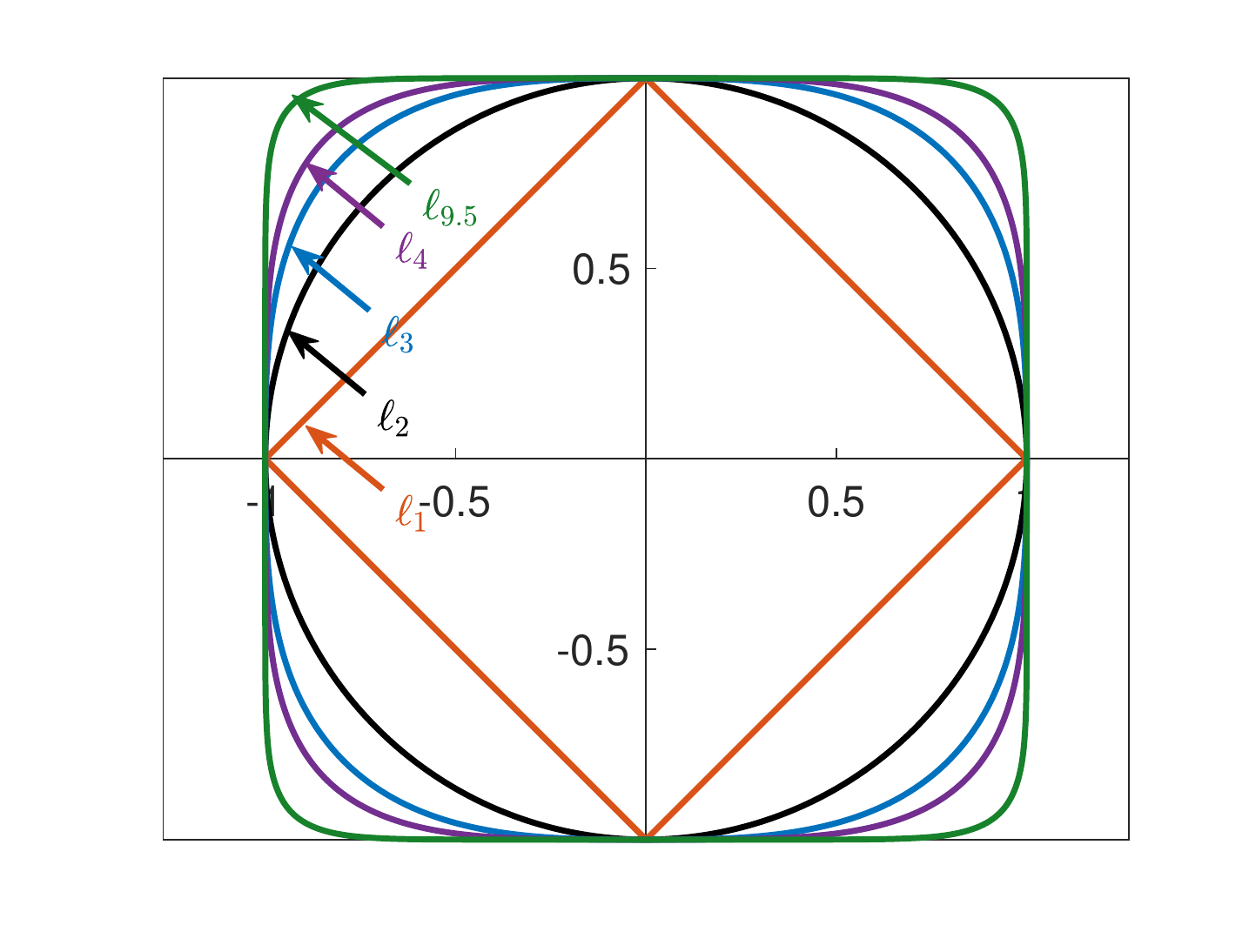}
	\caption{The figure of $\ell_p$-norms. Maximizing any $\ell_p$-norm ($p>2$) over the sphere leads to the solutions on the coordinates, i.e., sparse solutions.}
	\label{fig:l4}
\end{figure}

\subsection{Statistical justification}
In this subsection, we offer statistical justification to our $\ell_p$-based formulation in (\ref{eq:lp_st}). We consider $m=n$ for simplicity in this subsection and $m < n$ can be derived in the same way with minor modifications. We prove that the global maximizers of all $\ell_p$-based formulations are very close to the true dictionary, which is formally stated as below.

\begin{thm} \label{thm:ex_re}
	Let $\bm{X} \in \mathbb{R}^{n \times r}, x_{i,j} \sim \mathcal{BG}(\theta)$ with $\theta \in (0,1)$, $\bm{D}_0 \in \mathbb{O}(n)$ be an orthogonal dictionary, and $\bm{Y} = \bm{D}_0\bm{X}$. Suppose $\hat{\bm{A}}$ is a global maximizer to 
	\begin{equation} 
	\begin{aligned}
	&\underset{\bm{A}}{\text{maximize}}
	& & \|\bm{A}\bm{Y}\|_p^p & & &\text{ subject to } \bm{A} \in \mathbb{O}(n). \notag
	\end{aligned}
	\end{equation}
	Provided that the sample size $r = \Omega\left( \delta^{-2} n\log (n/\delta) (\theta n \log^2 n)^{\frac{p}{2}}\right)$, then for $\delta > 0$, there exists a signed permutation $\bm{\Pi}$, such that
	\begin{align*}
		\frac{1}{n}\left\|\hat{\bm{A}}^* - \bm{D}_0\bm{\Pi} \right\|_F^2 \leq C_{\theta} \delta
	\end{align*}
	with probability at least $1-r^{-1}$ and $C_{\theta}$ is a constant that depends on $\theta$.
\end{thm}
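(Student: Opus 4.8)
The plan is to eliminate the unknown dictionary by an orthogonal change of variables, analyze the resulting \emph{population} objective whose maximizers I can pin down exactly, and then transport that landscape to the empirical objective via a uniform concentration bound. Since $\bm{D}_0 \in \mathbb{O}(n)$ and the feasible set is the whole orthogonal group, I substitute $\bm{Q} = \bm{A}\bm{D}_0 \in \mathbb{O}(n)$, so that $\bm{A}\bm{Y} = \bm{Q}\bm{X}$ and the program becomes $\max_{\bm{Q}\in\mathbb{O}(n)}\|\bm{Q}\bm{X}\|_p^p$. Writing $\hat{\bm{Q}} = \hat{\bm{A}}\bm{D}_0$, so that $\hat{\bm{A}}^* = \bm{D}_0\hat{\bm{Q}}^*$, orthogonal invariance of the Frobenius norm gives $\|\hat{\bm{A}}^* - \bm{D}_0\bm{\Pi}\|_F = \|\hat{\bm{Q}}^* - \bm{\Pi}\|_F$ for any signed permutation $\bm{\Pi}$. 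Because signed permutations are closed under transposition, it suffices to show that the empirical maximizer $\hat{\bm{Q}}$ is within Frobenius distance $O(\sqrt{n\delta})$ of the set of signed permutations.

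Next I compute the population objective $\phi(\bm{Q}) := \mathbb{E}\,\|\bm{Q}\bm{x}\|_p^p$. Conditioning on the Bernoulli support $\bm{b}$, the $j$-th entry $\bm{q}_j^*\bm{x}\mid\bm{b}$ is $\mathcal{N}\!\big(0,\sum_k q_{jk}^2 b_k\big)$, so $\phi(\bm{Q}) = \mu_p \sum_j \mathbb{E}\big[(\sum_k q_{jk}^2 b_k)^{p/2}\big]$, where $\mu_p = \mathbb{E}_{g\sim\mathcal{N}(0,1)}|g|^p$. Setting $w_{jk}=q_{jk}^2$, orthogonality makes $\bm{W}=(w_{jk})$ doubly stochastic. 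For $p>2$ the map $t\mapsto t^{p/2}$ is convex, and since $\sum_k w_{jk}b_k \le \sum_k w_{jk} = 1$, each summand is a convex function of the probability vector $\bm{w}_j$ on the simplex, hence maximized at a vertex $\bm{e}_k$, where it equals $\mathbb{E}[b_k^{p/2}]=\theta$. Therefore $\phi(\bm{Q}) \le n\mu_p\theta$, with equality iff every row of $\bm{Q}$ is a signed standard basis vector, i.e. iff $\bm{Q}$ is a signed permutation. For the quantitative part I lower-bound the deficiency: using $\theta = \mathbb{E}[\sum_k w_{jk}b_k]$ and $t^{p/2}\le t$ on $[0,1]$, the per-row gap is $\mu_p\,\mathbb{E}[t_j - t_j^{p/2}]$ with $t_j=\sum_k w_{jk}b_k$, which I bound below by $c_\theta\sum_k w_{jk}(1-w_{jk})$ (isolating the events where a single dominant coordinate flips). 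Summing gives $n\mu_p\theta - \phi(\bm{Q}) \ge c_\theta'\,(n - \|\bm{W}\|_F^2)$, and a standard comparison relates $n-\|\bm{W}\|_F^2 = n-\sum_{j,k}q_{jk}^4$ to $\min_{\bm{\Pi}}\|\bm{Q}-\bm{\Pi}\|_F^2$; this is where the constant $C_\theta$ is produced.

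The core technical step is a uniform deviation bound $\sup_{\bm{Q}\in\mathbb{O}(n)}\big|\tfrac{1}{r}\|\bm{Q}\bm{X}\|_p^p - \phi(\bm{Q})\big| \le \epsilon$ with high probability. For general $p$ the integrand $|\bm{q}^*\bm{x}|^p$ is not a polynomial, which is precisely why the degree-$4$ tensor argument of \cite{zhai2019complete} does not extend to odd $p$; I would instead proceed by truncation. First, a high-probability uniform bound $\max_{i,j}|(\bm{Q}\bm{x}_i)_j| \le c\sqrt{\theta n}\,\log n$ holds because each entry is a sub-Gaussian mixture of about $\theta n$ active Gaussians; this is the source of the $(\theta n\log^2 n)^{p/2}$ factor. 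On this event I truncate $|\cdot|^p$ at that level and apply a Bernstein/bounded-moment bound to the i.i.d. sum over the $r$ samples at a fixed $\bm{Q}$, the $\delta^{-2}$ reflecting the $1/\sqrt{r}$ rate. A covering-number argument over $\mathbb{O}(n)$, combined with the Lipschitz constant of $\bm{Q}\mapsto\|\bm{Q}\bm{X}\|_p^p$ on the truncated event, then yields the remaining $n\log(n/\delta)$ factor through the net entropy and union bound.

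Finally I combine the pieces. Optimality of $\hat{\bm{Q}}$ together with the uniform bound gives, for any fixed signed permutation $\bm{\Pi}_0$,
\[
\phi(\hat{\bm{Q}}) \ge \tfrac{1}{r}\|\hat{\bm{Q}}\bm{X}\|_p^p - \epsilon \ge \tfrac{1}{r}\|\bm{\Pi}_0\bm{X}\|_p^p - \epsilon \ge \phi(\bm{\Pi}_0) - 2\epsilon = n\mu_p\theta - 2\epsilon .
\]
Hence the population deficiency at $\hat{\bm{Q}}$ is at most $2\epsilon$, and the stability estimate converts this into $\tfrac{1}{n}\|\hat{\bm{Q}} - \bm{\Pi}\|_F^2 \le C_\theta\delta$ once $\epsilon$ is matched to $r$ through the stated sample complexity. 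I expect the two hardest parts to be the quantitative stability estimate, namely obtaining a clean per-row lower bound on $\mathbb{E}[t-t^{p/2}]$ with the correct $\theta$-dependence, and the uniform concentration, where the non-polynomial $|\cdot|^p$ forces the truncation argument whose tail and net bookkeeping must produce exactly the $(\theta n\log^2 n)^{p/2}$ and $n\log(n/\delta)$ factors.
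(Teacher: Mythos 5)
Your proposal is correct and follows essentially the same route as the paper: it is precisely the concentration-plus-sharpness scheme of Theorem \ref{thm:cor_g}, instantiated by the same truncation/Bernstein/$\epsilon$-net uniform deviation bound over the orthogonal group (Lemmas \ref{lem:c_st} and \ref{lem:c_o}, with the identical $(\theta n\log^2 n)^{p/2}$ truncation bookkeeping) and a quadratic population sharpness estimate (Lemmas \ref{lem:shp} and \ref{lem:shp_o}), finished by the same two-$\epsilon$ sandwich between $f(\hat{\bm{A}},\bm{Y})$ and $g(\bm{D}_0^*)$. Your only deviations are at the sub-lemma level — certifying the population maximizers by convexity of $\bm{w}_j \mapsto \mathbb{E}[(\bm{w}_j^*\bm{b})^{p/2}]$ over the simplex of squared rows instead of the paper's $\|\cdot\|_2^p \le \|\cdot\|_2^2$ comparison, and parametrizing sharpness by $n-\|\bm{W}\|_F^2$ (exact for $p=4$, immediate from $t^{p/2}\le t^2$ for $p\ge 4$, and for $p=3$ requiring the same extremal two-sparse reduction the paper carries out in Lemma \ref{lem:shp}) — which are equivalent up to constants to the paper's per-row analysis.
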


\begin{rem}
	The derivation for the correctness of global maximizers of the $\ell_4$-based formulation in \cite{zhai2019complete} requires a closed-form expression of the expected objective function, which cannot be obtained for $\ell_{2k+1}$-based formulations. In order to develop general theoretical results, we first show that if a formulation satisfies \emph{concentration} and \emph{sharpness conditions} defined in Theorem \ref{thm:cor_g}, its global maximizers are very close to the true dictionary. Then we prove that all $\ell_p$-based ($p>2$) formulations satisfy these two conditions. Our result is consistent with the result in \cite{qu2019geometric} when $p=4$. Please refer to Section \ref{sec:noiseless} for a detailed proof.
\end{rem}

We next present two lemmas to give a better understanding of Theorem \ref{thm:ex_re}. First, Lemma \ref{lem:c_lp} shows that the global maximizers of the population objective are the true dictionary. Second, we figure out how many samples are needed for the concentration of the empirical objective around the population objective in Lemma \ref{lem:c_obj}. 

We first show that the global maximizers are the true dictionary in expectation. 

\begin{lem} (Correctness in expectation) \label{lem:c_lp}
	Denote $\mathcal{D}$ as the set of global maximizers to
	\begin{equation} \label{eq:lp_st_e}
	\begin{aligned}
	&\underset{\bm{A}}{\text{maximize}}
	& & \mathbb{E}_{\bm{Y}}\|\bm{A}\bm{Y}\|_p^p & & &\text{ subject to } \bm{A} \in \mathbb{O}(n), \notag
	\end{aligned}
	\end{equation}
	then $\mathcal{D}=\{\bm{D}_0^* \bm{\Pi}^*| \bm{\Pi} \in \text{SP}(n) \}$, where $\text{SP}(n)$ denotes the group of the signed permutation matrices.  
\end{lem}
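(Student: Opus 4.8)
The plan is to eliminate the dictionary by the orthogonal substitution $\bm{B}=\bm{A}\bm{D}_0$. Since $\bm{A},\bm{D}_0\in\mathbb{O}(n)$, the matrix $\bm{B}$ ranges bijectively over $\mathbb{O}(n)$, and $\bm{A}\bm{Y}=\bm{A}\bm{D}_0\bm{X}_0=\bm{B}\bm{X}_0$, so the population problem is equivalent to maximizing $\mathbb{E}\|\bm{B}\bm{X}_0\|_p^p$ over $\bm{B}\in\mathbb{O}(n)$. Let $\bm{b}_1,\dots,\bm{b}_n$ denote the rows of $\bm{B}$ and let $\bm{x}$ be a generic column of $\bm{X}_0$ (i.i.d.\ $\mathcal{BG}(\theta)$ entries). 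By linearity of expectation,
\begin{align*}
\mathbb{E}_{\bm{Y}}\|\bm{A}\bm{Y}\|_p^p \;=\; r\sum_{i=1}^n \mathbb{E}_{\bm{x}}\bigl|\langle \bm{b}_i,\bm{x}\rangle\bigr|^p ,
\end{align*}
where $r$ is the number of samples. It therefore suffices to show that $\sum_{i}\mathbb{E}|\langle\bm{b}_i,\bm{x}\rangle|^p$ is maximized over $\bm{B}\in\mathbb{O}(n)$ \emph{exactly} when $\bm{B}$ is a signed permutation matrix; undoing the substitution via $\bm{A}=\bm{B}\bm{D}_0^*$ then recovers the set $\mathcal{D}$.

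The crux is a sharp bound on the single-vector functional $\psi(\bm{b}):=\mathbb{E}|\langle\bm{b},\bm{x}\rangle|^p$ for a unit vector $\bm{b}$, which I would obtain by conditioning on the random support $S=\{j:x_j\neq 0\}$ (each $j\in S$ independently with probability $\theta$). Conditioned on $S$, the quantity $\langle\bm{b},\bm{x}\rangle=\sum_{j\in S}b_j g_j$ is Gaussian with variance $\beta_S:=\sum_{j\in S}b_j^2$, so that $\mathbb{E}\bigl[|\langle\bm{b},\bm{x}\rangle|^p\mid S\bigr]=\beta_S^{p/2}\,\mathbb{E}|g|^p$ with $g\sim\mathcal{N}(0,1)$. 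Writing $\beta_S=\sum_j b_j^2\xi_j$ with $\xi_j\sim\text{Ber}(\theta)$ i.i.d., setting $Z:=\beta_S\in[0,1]$ (since $\sum_j b_j^2=1$) and $q:=p/2>1$, the elementary inequality $Z^q\le Z$ on $[0,1]$ gives
\begin{align*}
\psi(\bm{b})=\mathbb{E}|g|^p\,\mathbb{E}[Z^q]\;\le\;\mathbb{E}|g|^p\,\mathbb{E}[Z]=\theta\,\mathbb{E}|g|^p .
\end{align*}
Because $q>1$, the equality $Z^q=Z$ holds almost surely only if $Z\in\{0,1\}$ almost surely, which forces $\bm{b}$ to have a single nonzero coordinate, i.e.\ $\bm{b}=\pm\bm{e}_j$. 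This route never uses a closed form of the objective, so it applies to every $p>2$, consistent with the paper's footnote.

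Assembling the pieces, each of the $n$ rows satisfies $\mathbb{E}|\langle\bm{b}_i,\bm{x}\rangle|^p\le\theta\,\mathbb{E}|g|^p$, hence the objective is at most $rn\theta\,\mathbb{E}|g|^p$, attained if and only if every row $\bm{b}_i$ is a signed standard basis vector. Orthonormality of the rows then forces the active coordinates to be distinct, so equality holds precisely when $\bm{B}$ is a signed permutation matrix, and $\bm{A}=\bm{B}\bm{D}_0^*$ yields exactly $\mathcal{D}$. I expect the main obstacle to be the equality analysis rather than the inequality itself: I must verify that the characterization $Z\in\{0,1\}$ a.s.\ genuinely excludes every non-spike $\bm{b}$ (a realization of $\xi$ activating two coordinates of $\mathrm{supp}(\bm{b})$ does this), and then argue that row-wise spikiness together with orthogonality leaves only signed permutations, in particular that no maximizer can share an active coordinate between two rows.
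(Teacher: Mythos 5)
Your proposal is correct and follows essentially the same route as the paper's own proof (Lemmas \ref{lem:go_s} and \ref{lem:go_st}): the paper likewise absorbs $\bm{D}_0$ into the variable, conditions on the Bernoulli support $\Omega$, uses Gaussian rotation invariance to reduce the objective to $\gamma_p\,\mathbb{E}_{\Omega}\|\bm{b}_{\Omega}\|_2^p$, applies the inequality $t^{p/2}\leq t$ on $[0,1]$ (your $Z^q\leq Z$), and concludes that equality forces each row to be one-sparse, with orthonormality then yielding exactly the signed permutations. One small correction to your closing remark: the witness event for ruling out a non-spike $\bm{b}$ should activate exactly \emph{one} coordinate of a multi-element support (activating two fails when $|\mathrm{supp}(\bm{b})|=2$, since then $Z=1$), though your main claim that $Z\in\{0,1\}$ almost surely forces one-sparsity is correct as stated.
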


Lemma \ref{lem:c_lp} states the correctness of $\ell_p$-based formulations, namely, the global maximizers of the population objective are the true dictionary up to some signed permutations. Due to the law of large numbers, the gap between the empirical objective and expectation objective vanishes as the number of samples goes to infinity. Nevertheless, the sample complexity is an important consideration in dictionary learning. We hope to learn the true dictionary from as few samples as possible. The next proposition states the finite sample concentration, namely, the empirical objective concentrates on the expectation objective as long as the number of samples is $\tilde{\Omega}(nk^{\frac{p}{2}})$, where $k$ is the number of non-zero elements.

\begin{lem} \label{lem:c_obj}
	(Concentration bound of the objective) Suppose $\bm{X} \in \mathbb{R}^{n \times r}$ follows $\mathcal{BG}(\theta)$. For any given $\theta \in (0,1)$ and $\delta > 0$, whenever
	$$r \geq C \delta^{-2} n\log (n/\delta) (\theta n \log^2 n)^{\frac{p}{2}},$$
	we have 
	$$\underset{\bm{A} \in \mathbb{O}(n) }{\sup} \frac{1}{nr}\left| \|\bm{A}\bm{Y}\|_p^p - \mathbb{E}(\|\bm{A}\bm{Y}\|_p^p) \right| \leq \delta$$
	with probability at least $1-r^{-1}$.
\end{lem}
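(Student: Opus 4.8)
The plan is to strip away the problem's structure in two reductions and then run a truncation--net--Bernstein argument. First, since $\bm{D}_0\in\mathbb{O}(n)$, the product $\bm{B}:=\bm{A}\bm{D}_0$ ranges over all of $\mathbb{O}(n)$ as $\bm{A}$ does, while $\bm{A}\bm{Y}=\bm{B}\bm{X}$; so it is enough to control $\sup_{\bm{B}\in\mathbb{O}(n)}\frac{1}{nr}\big|\,\|\bm{B}\bm{X}\|_p^p-\mathbb{E}\|\bm{B}\bm{X}\|_p^p\big|$. Writing $\bm{B}$ by its rows $\bm{b}_i^*$ gives $\|\bm{B}\bm{X}\|_p^p=\sum_{i=1}^n\|\bm{b}_i^*\bm{X}\|_p^p$, and the triangle inequality bounds the deviation of \emph{any} $\bm{B}$ by $\sum_{i=1}^n\sup_{\|\bm{b}\|_2=1}\big|\,\|\bm{b}^*\bm{X}\|_p^p-\mathbb{E}\|\bm{b}^*\bm{X}\|_p^p\big|$. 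Thus the orthogonality constraint can be discarded, and after dividing by $nr$ it suffices to show $\sup_{\bm{b}\in\mathbb{S}^{n-1}}\frac{1}{r}\big|\sum_{j=1}^r\big(|\bm{b}^*\bm{x}_j|^p-\mathbb{E}|\bm{b}^*\bm{x}_j|^p\big)\big|\le\delta$. Replacing $\mathbb{O}(n)$ (whose covering entropy scales like $n^2$) by $\mathbb{S}^{n-1}$ (entropy $\sim n$) is precisely what produces the single factor of $n$ in the sample complexity.

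Second, I would fix a truncation event. By Cauchy--Schwarz $|\bm{b}^*\bm{x}_j|\le\|\bm{x}_j\|_2$, and $\|\bm{x}_j\|_2^2$ is a sum of $\mathrm{Ber}(\theta)$-many squared standard Gaussians; a chi-square tail estimate with a union bound over the $r$ columns shows that with probability at least $1-\tfrac12 r^{-1}$ one has $\max_j\|\bm{x}_j\|_2^2\le C\theta n\log^2 n$. On this event every summand obeys $|\bm{b}^*\bm{x}_j|^p\le M:=(C\theta n\log^2 n)^{p/2}$, which supplies the range bound needed to tame the heavy-tailed variables $|\bm{b}^*\bm{x}_j|^p$.

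Third, for a fixed $\bm{b}\in\mathbb{S}^{n-1}$ the average $\frac1r\sum_j|\bm{b}^*\bm{x}_j|^p$ is a mean of i.i.d. terms of range $M$ and variance $\mathrm{Var}(|\bm{b}^*\bm{x}|^p)\le\mathbb{E}|\bm{b}^*\bm{x}|^{2p}$; since $\bm{b}^*\bm{x}$ is a unit-weighted sum of i.i.d. sub-Gaussian Bernoulli--Gaussian entries, its $\psi_2$-norm is $O(1)$ and this $2p$-th moment is bounded by a constant $v=v(p)$. Bernstein's inequality then gives the pointwise tail $2\exp\!\big(-cr\delta^2/(v+M\delta)\big)$. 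I would discretize the sphere with an $\epsilon$-net of cardinality at most $(3/\epsilon)^n$, take a union bound over it, and transfer the estimate to all of $\mathbb{S}^{n-1}$ using that $\bm{b}\mapsto\frac1r\|\bm{b}^*\bm{X}\|_p^p$ and its expectation are Lipschitz with constant of order $M$ on the truncation event, since the gradient $\frac pr\sum_j|\bm{b}^*\bm{x}_j|^{p-1}\mathrm{sign}(\bm{b}^*\bm{x}_j)\bm{x}_j$ has norm at most $\frac pr\sum_j\|\bm{x}_j\|_2^p\le pM$. Taking $\epsilon$ of order $\delta/M$ makes the discretization error at most $\delta/2$ and turns the net entropy into $n\log(1/\epsilon)$ of order $n\log(n/\delta)$.

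Finally, forcing the Bernstein exponent to exceed the entropy, $cr\delta^2/(v+M\delta)\ge C'\big(n\log(n/\delta)+\log r\big)$, and bounding both the variance-dominated and range-dominated regimes by the single quantity $\delta^{-2}nM\log(n/\delta)$, gives $r\ge C\delta^{-2}n\log(n/\delta)(\theta n\log^2 n)^{p/2}$, with the claimed failure probability obtained by adding the two bad events. I expect the main obstacle to be that $|\bm{b}^*\bm{x}|^p$ is only sub-Weibull (of order $2/p$): a crude Hoeffding bound using the range alone would cost $M^2\sim(\theta n)^p$, so one must exploit the $O(1)$ variance through Bernstein to shave this down to $M\sim(\theta n)^{p/2}$, while the large ($\sim M$) Lipschitz constant forces a correspondingly fine net --- hence the truncation level, the variance bound, and the net resolution all have to be balanced simultaneously against the entropy of the sphere.
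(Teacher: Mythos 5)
Your proposal is correct in substance and rests on the same core machinery as the paper's proof (truncation, Bernstein's inequality for bounded variables, an $\epsilon$-net, and a Lipschitz transfer), but the decomposition is genuinely different. The paper proves a general concentration result on the Stiefel manifold (Lemma \ref{lem:c_st}): it nets $\text{St}(n,m)$ directly, paying entropy $nm\log\left(12(L_f+\bar{L}_f)/t\right)$, and recovers the stated $n\log(n/\delta)$ sample complexity only because the deviation is measured at scale $t=m\delta$ after the $\frac{1}{mr}$ normalization; its range bound $R_1=(4B^2\theta n\log r)^{p/2}$ comes from entrywise truncation at $B=2\sigma\sqrt{\log(n_1 r)}$ combined with the Bernoulli support bound of Lemma \ref{lem:ber}, which is exactly your $M\sim(\theta n\log^2 n)^{p/2}$ obtained instead by a chi-square tail on $\|\bm{x}_j\|_2^2$. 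You exploit the row-wise decomposition $\|\bm{B}\bm{X}\|_p^p=\sum_i\|\bm{b}_i^*\bm{X}\|_p^p$ to collapse the supremum over $\mathbb{O}(n)$ into a single supremum over $\mathbb{S}^{n-1}$, paying a factor $n$ that the $\frac{1}{nr}$ normalization absorbs; this makes the $n\log(n/\delta)$ entropy count transparent rather than an artifact of the $t=m\delta$ rescaling, and it lets you use the clean per-direction moment bound $\mathbb{E}|\bm{b}^*\bm{x}|^{2p}=O(1)$ in place of the paper's matrix-level second moment $R_2=\gamma_{2p}\theta n+\theta^2\gamma_p^2(n^2-n)$. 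What the paper's formulation buys in exchange is reusability: Lemma \ref{lem:c_st} is stated for vector-valued $f_{\bm{Q}}$ on $\text{St}(n,m)$ for arbitrary $m$ and is applied verbatim to the noisy objective in Lemma \ref{lem:con_obj_n}, whereas your sphere reduction relies on the objective decomposing as a sum of row functionals (which it does here). One step you should make explicit: conditioning on the event $\max_j\|\bm{x}_j\|_2^2\le C\theta n\log^2 n$ (or truncating) perturbs the mean, so Bernstein controls deviations from the \emph{truncated} expectation, and you need the bias estimate $\left|\mathbb{E}|\bm{b}^*\bm{x}|^p-\mathbb{E}\left[|\bm{b}^*\bm{x}|^p\bm{1}_{\{\|\bm{x}\|_2\le T\}}\right]\right|\le\left(\mathbb{E}|\bm{b}^*\bm{x}|^{2p}\right)^{1/2}\mathbb{P}\left(\|\bm{x}\|_2>T\right)^{1/2}\ll\delta$, the counterpart of the term $\mathcal{T}_2$ in the paper's proof; it is routine here, but without it the phrase ``mean of i.i.d.\ terms of range $M$'' is not literally correct as written.
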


\begin{rem}
	We provide experimental validations for the phase transitions of $\ell_p$-based formulations in Fig. \ref{fig:pt_hm}.
\end{rem}

From Lemma \ref{lem:c_obj}, we see that more samples are required as $p$ increases, and thus a smaller $p$ leads to a lower sample complexity. Fig. \ref{fig:l3-6} illustrates the successful recovery probability versus the number of samples for different values of $p$. It also confirms the exponential increase in the sample complexity as $p$ increases.

\subsection{Robustness}
In practice, the observations are usually noisy. Intuitively, the regions around the global maximizers of the $\ell_p$-norms ($p>2$) are flatter than that of the $\ell_1$-norm, which leads to a higher tolerance to noise. In this subsection, we investigate the robustness of the $\ell_p$-based formulations to Gaussian noise. Other typical noise will be tested in Section \ref{sec:exp} via experiments. 

We consider noisy measurements $\bm{Y}_{N} = \bm{Y} + \bm{G}$ where $\bm{G} \in \mathbb{R}^{n \times r}$ with $G_{i,j} \sim \mathcal{N}(0,\eta^2)$. We find that when the number of samples goes to infinity, the Gaussian noise will not change the global maximizers. However, the noise makes it harder for the objective value to concentrate and thus more samples are needed compared to the clean objective. The next theorem shows that the global maximizers are very close to the true dictionary under Gaussian noise.
\begin{thm} \label{thm:c_obj_n}
	Let $\bm{X} \in \mathbb{R}^{n \times r}, x_{i,j} \sim \mathcal{BG}(\theta)$, $\bm{D}_0 \in \mathbb{O}(n)$ be an orthogonal dictionary, and $\bm{Y}_N = \bm{D}_0\bm{X} + \bm{G}$, $\bm{G} \in \mathbb{R}^{n \times r}$ with $G_{i,j} \sim \mathcal{N}(0,\eta^2)$. Suppose $\hat{\bm{A}}$ is a global maximizer to 
	\begin{equation} 
	\begin{aligned}
	&\underset{\bm{A}}{\text{maximize}}
	& & \|\bm{A}\bm{Y}_N\|_p^p & & &\text{ subject to } \bm{A} \in \mathbb{O}(n) \notag
	\end{aligned}
	\end{equation}
	then for $\delta > 0$, there exists a signed permutation $\bm{\Pi}$, such that
	\begin{align*}
	\frac{1}{n}\left\|\hat{\bm{A}}^* - \bm{D}_0\bm{\Pi} \right\|_F^2 \leq C_{\theta} \delta
	\end{align*}
	with probability at least $1-r^{-1}$ as long as $r=\Omega(\delta^{-2}n\log (n/\delta)((1+\eta^2) n \log n)^{\frac{p}{2}}\xi_\eta^2)$, where $\xi_\eta = (1+\eta^2)^{p/2} + \eta^p - 2(0.5 + \eta^2)^{p/2}$, and $C_{\theta}$ is a constant that depends on $\theta$.
\end{thm}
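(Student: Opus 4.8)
The plan is to mirror the noiseless argument behind Theorem~\ref{thm:ex_re}, splitting the problem into a \emph{correctness-in-expectation} step and a \emph{uniform concentration} step and then recombining them. The first move is to absorb the noise into the coefficients. Since $\bm{D}_0 \in \mathbb{O}(n)$ and $\bm{G}$ has i.i.d.\ $\mathcal{N}(0,\eta^2)$ entries, rotational invariance of the Gaussian gives $\bm{G} \stackrel{d}{=} \bm{D}_0\bm{G}'$ with $\bm{G}'$ i.i.d.\ $\mathcal{N}(0,\eta^2)$ and independent of $\bm{X}$, so that $\bm{Y}_N = \bm{D}_0(\bm{X}+\bm{G}')$. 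Writing $\bm{Q}=\bm{A}\bm{D}_0 \in \mathbb{O}(n)$ and $\bm{X}'=\bm{X}+\bm{G}'$ turns the task into $\max_{\bm{Q}\in\mathbb{O}(n)}\|\bm{Q}\bm{X}'\|_p^p$, i.e.\ a noiseless-type problem in which each entry of $\bm{X}'$ is the mixture equal to $\mathcal{N}(0,1+\eta^2)$ with probability $\theta$ and to $\mathcal{N}(0,\eta^2)$ with probability $1-\theta$. This reduction lets me reuse the machinery of Lemmas~\ref{lem:c_lp} and~\ref{lem:c_obj} almost verbatim, with the single Bernoulli--Gaussian coordinate replaced by this two-component mixture.

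For correctness in expectation I would re-run the proof of Lemma~\ref{lem:c_lp} for the mixture. Entry-wise, $\mathbb{E}\|\bm{Q}\bm{X}'\|_p^p = r\sum_j \mathbb{E}|\bm{q}_j^*\bm{x}'|^p$, and because the entries of $\bm{x}'$ remain i.i.d.\ and symmetric, the same symmetrization/Schur-convexity argument shows that $\sum_j \mathbb{E}|\bm{q}_j^*\bm{x}'|^p$, subject to the doubly-stochastic constraint that $\bm{Q}\in\mathbb{O}(n)$ imposes on $(q_{ji}^2)$, is maximized exactly at the signed permutations; hence the population maximizers still coincide with $\{\bm{D}_0^*\bm{\Pi}^*\}$, confirming that the noise does not move them. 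The curvature that enforces this maximum is the discrete second difference of $f(t)=(t+\eta^2)^{p/2}$ at $t\in\{0,\tfrac12,1\}$, namely $f(1)-2f(\tfrac12)+f(0)=\xi_\eta=(1+\eta^2)^{p/2}+\eta^p-2(0.5+\eta^2)^{p/2}$, which measures how much concentrating the signal variance ($t=1$) beats splitting it over two coordinates ($t=\tfrac12$), and is strictly positive for $p>2$ by convexity of $f$.

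The concentration step is where the noise costs samples, and I would adapt Lemma~\ref{lem:c_obj} to $\bm{X}'$. Every entry of $\bm{X}'$ now carries a Gaussian component of variance $\eta^2$, so the effective per-coordinate scale becomes $(1+\eta^2)$ rather than $\theta$, and the moment/tail bounds feeding the covering argument over $\mathbb{O}(n)$ inflate accordingly; propagating these through a truncation of the Gaussian part and a union/chaining bound yields the requirement $r=\Omega\!\left(\delta^{-2}n\log(n/\delta)\big((1+\eta^2)n\log n\big)^{p/2}\xi_\eta^2\right)$, with the $\xi_\eta^2$ factor capturing the fluctuation scale of the per-coordinate contributions (the very second difference that governs the sharpness). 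Combining the population sharpness from the second paragraph with the uniform concentration $\sup_{\bm{A}\in\mathbb{O}(n)}\frac{1}{nr}\big|\,\|\bm{A}\bm{Y}_N\|_p^p-\mathbb{E}\|\bm{A}\bm{Y}_N\|_p^p\,\big|\le\delta$, exactly as in the passage from Lemmas~\ref{lem:c_lp}--\ref{lem:c_obj} to Theorem~\ref{thm:ex_re}, forces any empirical global maximizer $\hat{\bm{A}}$ into a neighborhood of a signed permutation of $\bm{D}_0$, giving $\frac1n\|\hat{\bm{A}}^*-\bm{D}_0\bm{\Pi}\|_F^2\le C_\theta\delta$ with probability at least $1-r^{-1}$.

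The hard part will be the uniform concentration under unbounded noise. Unlike the clean case, where only a $\theta$-fraction of coordinates of each column are active, every coordinate of $\bm{X}'$ now has a heavy (Gaussian $p$-th moment) tail, so controlling $\|\bm{Q}\bm{X}'\|_p^p$ simultaneously over all of $\mathbb{O}(n)$ demands a careful truncation of the Gaussian part, matching moment bounds for the product of a row of $\bm{Q}$ with the mixture columns, and an honest accounting of how $\eta$ enters both the variance proxy $(1+\eta^2)$ and the fluctuation factor $\xi_\eta$. Getting these dependences explicit---rather than hiding them inside an unspecified constant---is the delicate part of the argument.
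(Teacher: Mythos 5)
Your proposal is correct and follows essentially the same route as the paper: the paper likewise absorbs the noise by noting $\bm{D}_0^*\bm{G}$ is again i.i.d.\ Gaussian, re-establishes population correctness and sharpness for the noisy objective (Lemmas \ref{lem:go_st_n}, \ref{lem:shp_n}, \ref{lem:shp_o_n}, where the sharpness constant is exactly your second difference $f(1)-2f(\tfrac12)+f(0)=\xi_\eta$ of $f(t)=(t+\eta^2)^{p/2}$), proves uniform concentration with the $(1+\eta^2)$-inflated truncation/covering bounds over $\mathbb{O}(n)$ (Lemma \ref{lem:con_obj_n}), and combines the two through the general concentration-plus-sharpness framework of Theorem \ref{thm:cor_g}. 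One small correction to your attribution: the $\xi_\eta$ factor in the sample complexity does not arise from the fluctuation scale in the concentration step (the paper's Lemma \ref{lem:con_obj_n} carries no $\xi_\eta$) but from rescaling the required concentration accuracy by the sharpness constant $C_{\eta,p}=\xi_\eta^{-1}$ so that the final constant $C_\theta$ depends only on $\theta$.
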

\begin{rem}
	Please refer to Section \ref{sec:noisy} for a detailed proof.
\end{rem}

Theorem \ref{thm:c_obj_n} shows that as the number of samples is sufficiently large, the global maximizers are very close to the true dictionary. It also suggests that the sample complexity increases as $p$ becomes larger, i.e., a smaller $p$ is more robust to Gaussian noise.

\section{AN EFFICIENT ALGORITHM}
In this section, we develop an efficient algorithm for $\ell_p$-based dictionary learning, and investigate its convergence property. Particularly, an interesting two-stage convergence behavior is revealed and explained.
\subsection{Algorithm for $\ell_p$-based dictionary learning}
We develop our algorithm based on the generalized power method (GPM) algorithm \cite{journee2010generalized}, which is a general optimization method to deal with concave objective functions. From the GPM algorithm, we can derive efficient algorithms for many applications, e.g., \emph{subspace iteration} for finding $k$-largest eigenvectors \cite{bathe2013subspace}, the \emph{matching, stretching, and projection} algorithm for $\ell_4$-based dictionary learning \cite{zhai2019complete}, and efficient algorithms for sparse principle component analysis \cite{journee2010generalized}. These algorithms have been shown to enjoy linear or super-linear convergence rates as well as cheap per-iteration cost (the same cost as the gradient method) in experiments, which motivates us to apply GPM to $\ell_p$-based dictionary learning.

The GPM algorithm aims at maximizing a convex function $f(\cdot)$ over a compact constraint $Q$.

\begin{equation}
\underset{x \in Q}{\text{max} } \quad \underbrace{f(x)}_{\text{Convex}}\notag
\end{equation}

In each iteration, the GPM algorithm maximizes a linear surrogate of the objective function. The procedure of the GPM algorithm is shown in Algorithm \ref{alg:gpm}, where $f' \in \partial f$ is any subgradient.
\begin{algorithm}  
	\caption{Generalized power method \cite{journee2010generalized}}  
	\label{alg:gpm}  
	\begin{algorithmic}[1]
		\State Initialize $\bm{x}^{(0)} \in D$.
		\For{$t=0...T$}
		\State $\bm{x}^{(t+1)} = \underset{\bm{s} \in D}{\text{argmax}} \langle \bm{s}, f'(\bm{x}^{(t)})\rangle,$
		\EndFor
	\end{algorithmic}  
\end{algorithm}

We develop an algorithm for $\ell_p$-based dictionary learning based on the GPM algorithm. Let $f(\bm{A}) = \|\bm{A}\bm{Y}\|_p^p$ and thus $\nabla f(\bm{A}) = \left( |(\bm{A}\bm{Y})^{\circ (p-1)}| \circ \text{sign}(\bm{A}\bm{Y}) \right)\bm{Y}^*$. The update for the GPM algorithm is 

\begin{equation}\label{eq:gpm_lp}
\bm{A}^{(t+1)} = \underset{\bm{s}^* \in \text{St}(n,m)}{\text{argmax}} \langle \bm{s},  \nabla f(\bm{A}^{(t)})\rangle.
\end{equation}

The only thing left is to compute the maximizer over the Stiefel manifold, which is stated in the next lemma.

\begin{lem} \cite{journee2010generalized}
	Let $\bm{C} \in \mathbb{R}^{m \times n}$ with $m \leq n$, and the singular values of $\bm{C}$ is denoted by $\sigma_i(\bm{C}), i=1,\cdots,m$. Then,
	\begin{align*}
		\underset{\bm{s} \in \text{St}(n,m)}{\max} \langle \bm{s},  \bm{C} \rangle = \sum_{i=1}^n \sigma_i(\bm{C})
	\end{align*}
	with maximizer $\bm{s} = \text{Polar}(\bm{C})$. $\text{Polar}(\bm{C})$ denotes the $\bm{U}$ factor of the polar decomposition of the matrix $\bm{C}$ such that
	\begin{align*}
		&\bm{C}=\bm{UP}, && \bm{U} \in \text{St}(n,m), && \bm{P} \in S^m_+.
	\end{align*}
	where $S^m_+$ denotes positive semi-definite matrices. 
\end{lem}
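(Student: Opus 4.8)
The plan is to prove this classical characterization of linear maximization over the Stiefel manifold by reducing it to a statement about singular values, treating the matrices as conformable so that $\langle \bm{s}, \bm{C}\rangle = \trace(\bm{s}^*\bm{C})$ (a linear functional, so the maximum is attained on the compact set $\text{St}(n,m)$). The key observation is that any $\bm{s} \in \text{St}(n,m)$ satisfies $\bm{s}^*\bm{s} = \bm{I}_m$, so $\bm{s}$ has full column rank with \emph{all} $m$ of its singular values equal to $1$. I would then invoke von Neumann's trace inequality, which for conformable matrices gives $\trace(\bm{s}^*\bm{C}) \leq \sum_{i} \sigma_i(\bm{s})\,\sigma_i(\bm{C}) = \sum_{i=1}^m \sigma_i(\bm{C})$, where the last equality uses $\sigma_i(\bm{s}) = 1$ together with $\sigma_i(\bm{C}) \geq 0$. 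This supplies the desired upper bound on the maximum, so it remains only to exhibit a feasible point attaining it.

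For achievability I would take the thin SVD $\bm{C} = \bm{U}\bm{\Sigma}\bm{V}^*$ with $\bm{U} \in \text{St}(n,m)$, $\bm{\Sigma} = \diagg\!\big(\sigma_1(\bm{C}),\dots,\sigma_m(\bm{C})\big)$, and $\bm{V} \in \mathbb{O}(m)$, and verify the candidate $\bm{s} = \bm{U}\bm{V}^*$. Feasibility is immediate since $(\bm{U}\bm{V}^*)^*(\bm{U}\bm{V}^*) = \bm{V}\bm{U}^*\bm{U}\bm{V}^* = \bm{I}_m$, and the objective evaluates to $\trace\big((\bm{U}\bm{V}^*)^*\bm{C}\big) = \trace(\bm{V}\bm{U}^*\bm{U}\bm{\Sigma}\bm{V}^*) = \trace(\bm{\Sigma}) = \sum_{i=1}^m \sigma_i(\bm{C})$, matching the upper bound. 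To identify this maximizer with $\text{Polar}(\bm{C})$, I would display the factorization $\bm{C} = (\bm{U}\bm{V}^*)(\bm{V}\bm{\Sigma}\bm{V}^*)$, in which the first factor lies in $\text{St}(n,m)$ and the second, $\bm{V}\bm{\Sigma}\bm{V}^* \in S^m_+$, is symmetric positive semidefinite; by definition of the polar decomposition this means $\bm{s} = \text{Polar}(\bm{C})$.

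As a self-contained alternative that avoids citing von Neumann's inequality, I would derive the upper bound directly from the same SVD: substituting $\bm{C} = \bm{U}\bm{\Sigma}\bm{V}^*$ yields $\langle \bm{s}, \bm{C}\rangle = \trace(\bm{W}\bm{\Sigma})$ with $\bm{W} = \bm{V}^*\bm{s}^*\bm{U}$. Since $\bm{W}$ is a product of matrices with orthonormal columns, its operator norm is at most $1$, whence every diagonal entry obeys $W_{ii} \leq 1$; therefore $\trace(\bm{W}\bm{\Sigma}) = \sum_i W_{ii}\,\sigma_i(\bm{C}) \leq \sum_i \sigma_i(\bm{C})$, with equality forcing $\bm{W} = \bm{I}_m$ and hence $\bm{s} = \bm{U}\bm{V}^*$ as before.

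I do not anticipate a genuine obstacle, since this is a standard orthogonal-Procrustes-type fact; the only points demanding care are the dimension bookkeeping (using the thin rather than the full SVD so that $\bm{\Sigma}$ is $m \times m$, and noting that the stated sum should range over $i=1,\dots,m$), and the elementary but crucial observation that the diagonal entries of a matrix with operator norm at most one are themselves bounded by one — the step that simultaneously pins down the value of the maximum and the identity of the optimal $\bm{s} = \text{Polar}(\bm{C})$.
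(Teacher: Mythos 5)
Your proposal is correct, and in fact it supplies something the paper itself does not: the paper states this lemma with only a citation to Journ\'ee et al.\ and gives no proof, so there is no in-paper argument to compare against. Your two routes --- von Neumann's trace inequality combined with the observation that every $\bm{s} \in \text{St}(n,m)$ has all $m$ singular values equal to $1$, and the self-contained variant writing $\langle \bm{s}, \bm{C}\rangle = \trace(\bm{W}\bm{\Sigma})$ with $\bm{W} = \bm{V}^*\bm{s}^*\bm{U}$ and bounding the diagonal of a matrix of operator norm at most one --- are both sound, and the second is essentially the classical orthogonal-Procrustes argument underlying the cited result. You are also right to flag and repair two defects in the statement as printed: the sum should run over $i = 1, \dots, m$ rather than $i = 1, \dots, n$, and the dimensions of $\bm{C}$ must be read so that $\bm{s}^*\bm{C}$ (equivalently the factorization $\bm{C} = \bm{U}\bm{P}$ with $\bm{U} \in \text{St}(n,m)$) is conformable, i.e.\ $\bm{C}$ is effectively $n \times m$; the paper's algorithm silently handles this with transposes. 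The identification $\bm{C} = (\bm{U}\bm{V}^*)(\bm{V}\bm{\Sigma}\bm{V}^*)$ of your maximizer with $\text{Polar}(\bm{C})$ is exactly right.

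One minor overstatement: your claim that equality in $\trace(\bm{W}\bm{\Sigma}) \leq \sum_i \sigma_i(\bm{C})$ forces $\bm{W} = \bm{I}_m$ holds only when $\bm{C}$ has full rank $m$, since equality pins down $W_{ii} = 1$ only at indices with $\sigma_i(\bm{C}) > 0$ (the operator-norm bound then does force the corresponding columns of $\bm{W}$ to be $\bm{e}_i$). For rank-deficient $\bm{C}$ the maximizer, like the polar factor itself, is not unique. This does not affect the lemma, which asserts only that $\text{Polar}(\bm{C})$ attains the maximum, but the uniqueness clause should be stated with the full-rank caveat.
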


The whole algorithm is shown in Algorithm \ref{alg:gpm_lp}, where $^{\circ (p-1)}$ denotes the element-wise $(p-1)$-th power and $\circ$ denotes the element-wise product.

\begin{algorithm}  
	\small
	\caption{The GPM algorithm for $\ell_p$-based dictionary learning}  
	\label{alg:gpm_lp}  
	\begin{algorithmic}[1]
		\State Initialize $\bm{A}^{(0)*} \in \text{St}(n,m)$.
		\For{$t=0...T$}
		\State $\nabla f(\bm{A}^{(t)}) = \left( |(\bm{A}^{(t)}\bm{Y})^{\circ (p-1)}| \circ \text{sign}(\bm{A}^{(t)}\bm{Y}) \right)\bm{Y}^*$
		\State $\bm{A}^{(t+1)} = \text{Polar}(\nabla^* f(\bm{A}^{(t)}))^*$
		\EndFor
	\end{algorithmic}  
\end{algorithm}

In general, the GPM algorithm only converges to a critical point with a sub-linear rate since $\mathscr{P}$ in (\ref{eq:lp_st}) is non-convex \cite{journee2010generalized}. Nevertheless, with $\bm{X}_0$ following the Bernoulli-Gaussian model, experiments show that the GPM algorithm converges to the global maximizer in a very fast rate. We explain this phenomenon in the next subsection.

\subsection{Global convergence over the sphere constraint}
In this subsection, we investigate why and how the GPM algorithm converges to the global maximizer despite the non-convexity of problem $\mathscr{P}$ in (\ref{eq:lp_st}). Unfortunately, the global optimality analysis over the Stiefel manifold is extremely difficult. In fact, even whether a local maxima of $\|\bm{A}\|_4^4$ exists on $\bm{A}\in \mathbb{O}(n)$ is still an open problem \cite{ma2019complete}. We instead analyze a special case of $\mathscr{P}$ when $m=1$ with population GPM to see how the GPM algorithm converges to the global optimizer for a non-convex problem. The sphere constraint enables a fine characterization of the gradient dynamics, from which we can derive global convergence results.

In the rest of this subsection, we show the global convergence of the population\footnote{Due to the concentration, the empirical gradient will be very close to the population gradient when the sample size is large. Hence, we study the population GPM here to reveal the convergence behavior.} GPM algorithm over the sphere constraint. In the population version, we consider to solve the following problem with Algorithm \ref{alg:gpm_lp_s}.

\begin{equation} \label{eq:lp_s2}
\begin{aligned}
&\underset{\bm{a}}{\text{maximize}}
& & \frac{1}{r}\mathbb{E}_{\bm{Y}}\|\bm{a}^*\bm{Y}\|_p^p = \frac{1}{r}\mathbb{E}_{\bm{X}_0}\|\bm{a}^*\bm{D}_0\bm{X}_0\|_p^p\\
&\text{ subject to } 
& &\|\bm{a}\|_2=1.
\end{aligned}
\end{equation}

\begin{algorithm}  
	\small
	\caption{The population GPM algorithm over the sphere constraint}  
	\label{alg:gpm_lp_s}  
	\begin{algorithmic}[1]
		\State Initialize $\|\bm{a}^{(0)}\|_2=1$.
		\For{$t=0...T$}
		\State $\nabla f(\bm{a}^{(t)}) = \bm{Y}\left( |(\bm{a}^{(t)*}\bm{Y})^{\circ (p-1)}| \circ \text{sign}(\bm{a}^{(t)*}\bm{Y}) \right)^*$
		\State $\bm{a}^{(t+1)} = \frac{\mathbb{E}_{\bm{Y}}[\nabla f(\bm{a}^{(t)})]}{\|\mathbb{E}_{\bm{Y}}[\nabla f(\bm{a}^{(t)})]\|_2}$
		\EndFor
	\end{algorithmic}  
\end{algorithm}

We assume  $\bm{D}_0 = \bm{I}$ without loss of generality since the orthogonal transformation has no impact on the convergence \cite{sun2015complete}. Moreover, according to the statistical analysis, the global maximizer satisfies $\|\bm{a}\|_0 = 1$ \cite{spielman2012exact}. Thus, there are $2n$ ground-truth vectors in the dictionary learning problem, i.e., $\bm{a}=\pm\bm{e}_i$. We can safely assume that the desired global maximizer is $\bm{e}_n$ and $a_n \geq |a_i|,\forall i$ at initialization.

Accordingly, given the ground-truth $\bm{e}_n$, a vector $\bm{a}$ can be decomposed into two components: a parallel component $a_n$ and a perpendicular components $\bm{a}_{-n} := \bm{a}_{1:n-1}$. The parallel component is the signal component.

To study the dynamics of the population GPM algorithm, we start by considering the case where the sequences $\{\bm{a}^{(t)}\}$ are generated by the population gradient 
\begin{align*}
	\bm{a}^{(t+1)} = \frac{\nabla F(\bm{a}^{(t)})}{\|\nabla F(\bm{a}^{(t)})\|_2},
\end{align*}
where $\nabla F(\bm{a}^{(t)})$ denotes the population gradient, given by
$$\nabla F(\bm{a}^{(t)}) = \frac{1}{r}\mathbb{E}_{\bm{Y}}\nabla f(\bm{a}^{(t)}) = c_p\mathbb{E}_{\Omega}[\|\bm{a}_{\Omega}\|_2^{p-2 }\bm{a}_{\Omega}],$$ 
where $c_p$ is a constant only related to $p$, and $\Omega$ denotes the support of a random Bernoulli vector $\bm{b} \in \mathbb{R}^n$ with $b_i \sim \text{Ber}(\theta)$.

With simple calculations, the dynamics for both the signal and orthogonal components with respect to the global maximizer $\bm{e}_n$ are given by
\begin{align*}
	&\text{Signal}: &&a^{(t+1)}_n = \frac{c_p\mathbb{E}_{\Omega}[\|\bm{a}^{(t)}_{\Omega}\|_2^{p-2 }a_{n, \Omega}]}{\|\nabla F(\bm{a}^{(t)})\|_2}\\
	&\text{Orthogonal}: &&a^{(t+1)}_{i} = \frac{c_p\mathbb{E}_{\Omega}[\|\bm{a}^{(t)}_{\Omega}\|_2^{p-2 }a_{i,\Omega}]}{\|\nabla F(\bm{a}^{(t)})\|_2}, i \neq n
\end{align*}

To simplify the analysis, we define the signal-to-orthogonal-ratio (SOR) and signal-to-orthogonal at the $i$-th coordinate ratio ($\text{SOR}_i$) as 
\begin{align*}
	\text{SOR} =  \frac{a_n}{\|\bm{a}_{-n}\|_2}, \quad \text{and} \quad  \text{SOR}_i =  \frac{a_n}{a_i}.
\end{align*}

A unique advantage for studying SOR and $\text{SOR}_i$ is that they are projection-invariant, i.e., $\frac{(\mathcal{P}_{\mathbb{S}^{n-1} }\bm{q})_n}{\|(\mathcal{P}_{\mathbb{S}^{n-1} }\bm{q})_{-n}\|_2} = \frac{q_n}{\|\bm{q}_{-n}\|_2}$ and $\frac{(\mathcal{P}_{\mathbb{S}^{n-1} }\bm{q})_n}{(\mathcal{P}_{\mathbb{S}^{n-1} }\bm{q})_i} = \frac{q_n}{q_i}$. This allows us to bypass the study of projection and makes the results easy to interpret.

The SOR and $\text{SOR}_i$ can also be viewed as an error metric since 
\begin{align*}
	\|\bm{a}- \bm{e}_n\|_2^2 = 2 - 2\sqrt{\frac{\text{SOR}^2}{\text{SOR}^2+1}  },
\end{align*}
where $\left\|\bm{a} - \bm{e}_n\right\|_2^2$ is the squared $\ell_2$ error. 
\begin{figure*}[htbp]
	\centering
	\subfigure[Convergence over the sphere. The error metric is $\min_{1\leq i \leq n} \|\bm{a} - \bm{D}_0\bm{e}_i\|_2$.]{
		\begin{minipage}[t]{0.4\linewidth}
			\centering
			\includegraphics[width=1\linewidth]{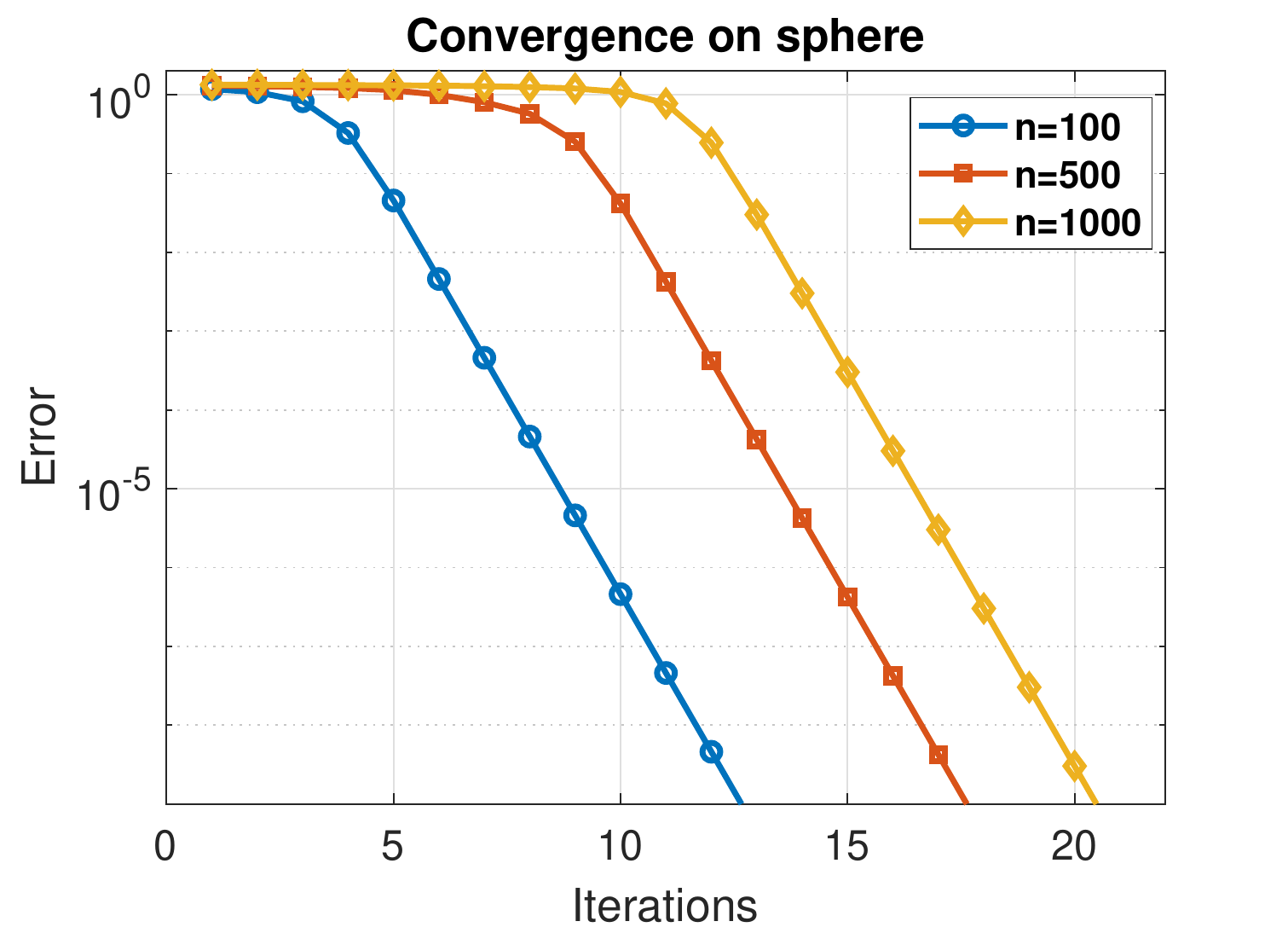}
			\label{fig:sphere} 
		\end{minipage}%
	}%
	\hspace{.1in}
	\subfigure[Convergence over the orthogonal group. The error metric is $\min_{\bm{\Pi} \in \text{SP}(n)}\frac{\|\bm{A}^*-\bm{D}_0\bm{\Pi}\|_F}{\|\bm{D}_0\|_F}$.]{
		\begin{minipage}[t]{0.4\linewidth}
			\centering
			\includegraphics[width=1\linewidth]{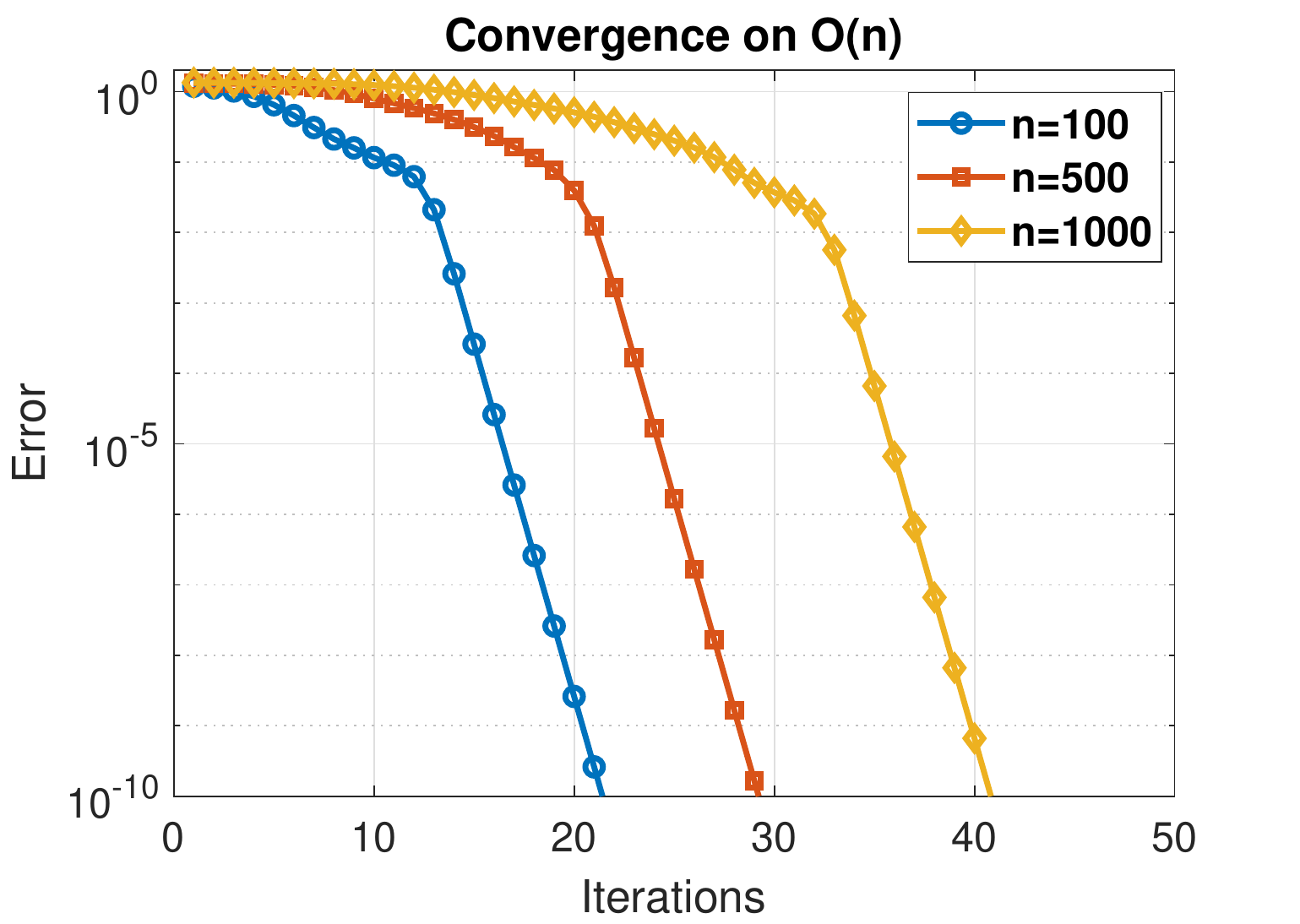}
			\label{fig:On}
		\end{minipage}%
	}%
	\centering
	\caption{The convergence of the population GPM for $\ell_4$-based formulation. The sparsity level is $\theta=0.1$. From both figures, we see the first stage is short and the second stage has a linear convergence with rate $\frac{1}{10}$.}
\end{figure*}

The following proposition shows the evolution of $\text{SOR}_i$ during the iterations, which will be used to derive the global convergence result.
\begin{prop} \label{prop:snr}
	Denote $\text{SOR}_i^{(t)}$ as the value of $\text{SOR}_i$ at the $t$-th iteration and $\bm{q} = \bm{a}^{(t)}$ as the variable at the $t$-th iteration. Then $\text{SOR}_i$ evolves as
	\begin{align*}
	\text{SOR}_i^{(t+1)} = \text{SOR}^{(t)}_i \left(1+ \tau_i(\bm{q}) \right),
	\end{align*}
	and 
	\begin{align*}
	\tau_i(\bm{q}) = \frac{\mathbb{E}_{\Omega'} \|[\bm{q}_{\Omega'},q_n]\|_2^k -  \mathbb{E}_{\Omega'} \|[\bm{q}_{\Omega'},q_i]\|_2^k }{ \frac{\theta}{1-\theta} \mathbb{E}_{\Omega'} \|[\bm{q}_{\Omega'},q_i,q_n]\|_2^k +  \mathbb{E}_{\Omega'} \|[\bm{q}_{\Omega'},q_i]\|_2^k },
	\end{align*} 
	where $\Omega' = \Omega\backslash\{n\}\backslash\{i\}$ and $k=p-2$. Two properties of $\tau_i(\bm{q})$ are listed below
	\begin{enumerate}
		\item $0 \leq \tau_i(\bm{q}) \leq \frac{1-\theta}{\theta}$ always holds and $\tau_i(\bm{q}) > 0$ if $q_n > q_i$.
		\item $\tau_i(\bm{q})$ is monotonically increasing in $q_n$ and decreasing in $q_i$. 
	\end{enumerate}
\end{prop}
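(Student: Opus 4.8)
The plan is to first derive the recursion for $\text{SOR}_i$ by reducing the normalized gradient step to a ratio of two conditional expectations, and then to read off the two properties from the three ``building-block'' quantities that appear. Since $\text{SOR}_i^{(t+1)}=a_n^{(t+1)}/a_i^{(t+1)}$ and both coordinates come from the same population gradient $\nabla F(\bm q)=c_p\,\mathbb{E}_\Omega[\|\bm q_\Omega\|_2^{k}\bm q_\Omega]$ (with $k=p-2$) followed by the common normalization, the scalar $c_p$ and the factor $\|\nabla F(\bm q)\|_2$ cancel, leaving $\text{SOR}_i^{(t+1)}=\mathbb{E}_\Omega[\|\bm q_\Omega\|_2^{k}q_{n,\Omega}]\big/\mathbb{E}_\Omega[\|\bm q_\Omega\|_2^{k}q_{i,\Omega}]$. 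I would evaluate the numerator by writing $q_{n,\Omega}=q_n\mathbb{1}[n\in\Omega]$ and conditioning on the two \emph{independent} events $\{n\in\Omega\}$ and $\{i\in\Omega\}$; because coordinate inclusions are i.i.d.\ $\text{Ber}(\theta)$, this conditioning does not alter the law of $\Omega'=\Omega\setminus\{i,n\}$ and peels the coordinates $n,i$ out of the norm. This yields $\mathbb{E}_\Omega[\|\bm q_\Omega\|_2^{k}q_{n,\Omega}]=\theta q_n\big(\theta A+(1-\theta)B_n\big)$ with $A=\mathbb{E}_{\Omega'}\|[\bm q_{\Omega'},q_i,q_n]\|_2^{k}$ and $B_n=\mathbb{E}_{\Omega'}\|[\bm q_{\Omega'},q_n]\|_2^{k}$, and symmetrically $\mathbb{E}_\Omega[\|\bm q_\Omega\|_2^{k}q_{i,\Omega}]=\theta q_i\big(\theta A+(1-\theta)B_i\big)$ with $B_i=\mathbb{E}_{\Omega'}\|[\bm q_{\Omega'},q_i]\|_2^{k}$. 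Dividing, the $\theta$'s cancel and $\text{SOR}_i^{(t+1)}=\text{SOR}_i^{(t)}\,\frac{\theta A+(1-\theta)B_n}{\theta A+(1-\theta)B_i}$; subtracting $1$ and dividing through by $1-\theta$ produces exactly the stated $\tau_i$.

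For Property~1 both bounds follow from the pointwise monotone chain $\|[\bm q_{\Omega'},q_i]\|_2\le\|[\bm q_{\Omega'},q_n]\|_2\le\|[\bm q_{\Omega'},q_i,q_n]\|_2$, where the first inequality uses $q_i^2\le q_n^2$ (the running assumption $q_n\ge|q_i|$) and the second uses that adjoining a coordinate can only increase a norm. Raising to the $k$-th power ($k>0$) and taking expectations gives $B_i\le B_n\le A$. Hence $\tau_i=\frac{B_n-B_i}{\frac{\theta}{1-\theta}A+B_i}\ge 0$, with strict positivity whenever $q_n^2>q_i^2$; and $\tau_i\le\frac{B_n}{\frac{\theta}{1-\theta}A}\le\frac{A}{\frac{\theta}{1-\theta}A}=\frac{1-\theta}{\theta}$.

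For Property~2 I would rewrite $1+\tau_i=\frac{\frac{\theta}{1-\theta}A+B_n}{\frac{\theta}{1-\theta}A+B_i}$ and observe that $A,B_n,B_i$ depend on $q_n,q_i$ only through $s:=q_n^2$ and $u:=q_i^2$, so these may be treated as free variables and differentiated. With $g(x)=x^{k/2}$ and $\alpha=\theta/(1-\theta)$, the numerator of $\partial\tau_i/\partial s$ works out to $\alpha\big(B_n'A-A'B_n\big)+B_i\big(\alpha A'+B_n'\big)$, and the $q_i$-derivative gives the mirror expression; the second summand is manifestly nonnegative. Thus monotonicity reduces to controlling the cross term $B_n'A-A'B_n=\mathbb{E}[g'(R^2+s)]\,\mathbb{E}[g(R^2+s+u)]-\mathbb{E}[g'(R^2+s+u)]\,\mathbb{E}[g(R^2+s)]$ with $R^2=\|\bm q_{\Omega'}\|_2^2$, i.e.\ to whether $c\mapsto\log\mathbb{E}[(R^2+s+c)^{k/2}]$ is concave.

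The hard part is precisely this cross term: the clean sufficient condition (log-concavity, equivalently the moment inequality $(m-1)\mathbb{E}[Z^{m-2}]\mathbb{E}[Z^{m}]\le m(\mathbb{E}[Z^{m-1}])^2$ with $m=k/2$) is \emph{false} for an arbitrary positive $Z$, since a heavy, low-probability point mass makes the Cauchy--Schwarz defect blow up. Consequently one cannot simply discard the nonnegative term $B_i(\alpha A'+B_n')$. My plan is to retain that term and exploit the two structural features special to this setting, namely the ordering $s\ge u$ coming from $q_n\ge|q_i|$ and the bounded support $R^2\le(n-2)q_n^2$ forced by $\|\bm q\|_2=1$ with $q_n$ the largest coordinate, which together rule out the pathological configurations. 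Concretely, I would express $B_n'A-A'B_n$ using two independent copies of $\Omega'$ and symmetrize, so that its sign, or its domination by the retained positive term, follows from a correlation/rearrangement inequality driven by the monotonicity of $g$ and $g'$. Turning this symmetrization into a quantitative bound is where the real effort lies.
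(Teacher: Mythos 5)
Your derivation of the recursion is exactly the paper's own argument (Proposition \ref{prop:snr2} in Section \ref{sec:sor}): condition on the independent events $\{n\in\Omega\}$ and $\{i\in\Omega\}$, peel $q_n,q_i$ out of the norm, and cancel $c_p$ together with the normalization, arriving at $\text{SOR}_i^{(t+1)}=\text{SOR}_i^{(t)}\,\frac{\theta A+(1-\theta)B_n}{\theta A+(1-\theta)B_i}$ with $A=\mathbb{E}_{\Omega'}\|[\bm{q}_{\Omega'},q_i,q_n]\|_2^k$, $B_n=\mathbb{E}_{\Omega'}\|[\bm{q}_{\Omega'},q_n]\|_2^k$, $B_i=\mathbb{E}_{\Omega'}\|[\bm{q}_{\Omega'},q_i]\|_2^k$; this part is correct and identical in substance to the paper. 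Be aware, though, that the paper's appendix proof \emph{stops at the recursion}: properties 1 and 2 are stated in the proposition but never proved there. Your property-1 argument ($B_i\le B_n\le A$ from $q_i^2\le q_n^2$ and from the fact that adjoining a coordinate can only increase the norm, hence $0\le\tau_i\le B_n\big/\bigl(\tfrac{\theta}{1-\theta}A\bigr)\le\tfrac{1-\theta}{\theta}$) is correct under the paper's standing assumption $q_n\ge|q_i|$ (which is also what makes ``always holds'' true), and your strictness condition $q_n^2>q_i^2$ is in fact the accurate one — the statement's ``$q_n>q_i$'' is loose when $q_i<0$. So on property 1 you supply a proof the paper does not contain.

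The genuine gap is property 2, and you flag it yourself: your treatment of the cross term $B_n'A-A'B_n$ is a plan, not a proof, and your observation that the generic moment inequality $(m-1)\mathbb{E}[Z^{m-2}]\mathbb{E}[Z^m]\le m(\mathbb{E}[Z^{m-1}])^2$ fails for arbitrary positive $Z$ is accurate. Two concrete points. First, the two directions are \emph{not} mirror images, contrary to your framing: with $s=q_n^2$, $u=q_i^2$, $\alpha=\theta/(1-\theta)$, $g(x)=x^{k/2}$, $R^2=\|\bm{q}_{\Omega'}\|_2^2$, and $A'=\mathbb{E}[g'(R^2+s+u)]$, $B_i'=\mathbb{E}[g'(R^2+u)]$, $B_n'=\mathbb{E}[g'(R^2+s)]$, the sign of $\partial_u(1+\tau_i)$ is that of $\alpha A'(B_i-B_n)-B_i'(\alpha A+B_n)$, which is nonpositive outright once $B_i\le B_n$ — so the decrease in $q_i$ needs no cross-term control at all. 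Second, for $k=p-2\le 2$ (i.e.\ $p\in\{3,4\}$, the cases the paper actually emphasizes), $g'$ is nonincreasing, hence $B_n'\ge A'$, which combined with $A\ge B_n$ makes the cross term itself nonnegative and closes the $q_n$-direction immediately; your difficulty is real only for $p\ge 5$, where one must show the retained term $B_i(\alpha A'+B_n')$ dominates under the structural constraints $R^2\le 1-s-u$, $s\ge u$, $s\ge 1/n$ — the quantitative step you defer. Since the paper asserts both properties without any proof, you have not missed an argument present in the paper; your write-up establishes strictly more than the paper's proof does, but property 2 for $p\ge 5$ remains unproven in both.
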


\begin{rem} 
	Please refer to Section \ref{sec:sor} for the proof of this proposition.
\end{rem}

According to Proposition \ref{prop:snr}, $\text{SOR}_i$ grows at an exponential rate. Thus, Algorithm (\ref{alg:gpm_lp_s}) converges to the global maximizer of Problem (\ref{eq:lp_s2}) and the convergence is stated as below.

\begin{thm} \label{thm:conv}
	Apply Algorithm \ref{alg:gpm_lp_s} to solve problem (\ref{eq:lp_s2}) and assume $\bm{a}^{(0)}$ follows a uniform distribution over the sphere. Denote $\tau(\bm{q}) = \min_{i=1,\cdots,n-1} \tau_i(\bm{q})$, then there exists $T_{\tau} \leq  \log_{1+\tau(\bm{a}^{(0)})}\left(\sqrt{n}\right)$, $1 \leq i \leq n$, such that
	\begin{align*}
	\left\|\bm{a}^{(t)} - \bm{D}_0\bm{e}_i\right\|_2 \leq \left(1+\tau\left(\bm{a}^{(T_{\tau})}\right)\right)^{T_{\tau} - t}, \forall t \geq T_{\tau}, 
	\end{align*}
	almost surely and the convergence rate is given by $\lim_{k\rightarrow \infty} \frac{\|\bm{a}^{(k+1)} - \bm{D}_0\bm{e}_i\|_2}{\|\bm{a}^{(k)} -  \bm{D}_0\bm{e}_i\|_2} = \frac{1}{\theta}$.
\end{thm}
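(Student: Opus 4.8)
The plan is to reduce the entire analysis to tracking the coordinate ratios $\text{SOR}_i$, whose multiplicative dynamics are already pinned down by Proposition~\ref{prop:snr}, and to split the argument into the two stages promised by the statement. By invariance under orthogonal transformations I take $\bm{D}_0 = \bm{I}$, so the target is $\bm{e}_n$, and after relabeling and a sign flip of the largest-magnitude coordinate I assume $a_n^{(0)} \ge |a_i^{(0)}|$. Since $\bm{a}^{(0)}$ is uniform on the sphere, almost surely this maximizer is unique and $a_n^{(0)} > 0$, so every $\text{SOR}_i^{(0)} \ge 1$ and, by Property~1 of Proposition~\ref{prop:snr}, $\tau(\bm{a}^{(0)}) = \min_i \tau_i(\bm{a}^{(0)}) > 0$; this positivity is exactly what makes $T_\tau$ finite and the statement hold almost surely.

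For the first (escape) stage I would establish geometric growth of the ratios with a uniform base. Because $\tau_i \ge 0$, each $\text{SOR}_i^{(t)}$ is nondecreasing, hence so is the aggregate $\text{SOR}^{(t)} = a_n^{(t)}/\|\bm{a}_{-n}^{(t)}\|_2$ and the signal coordinate $a_n^{(t)} = \text{SOR}^{(t)}/\sqrt{(\text{SOR}^{(t)})^2+1}$, while the normalized orthogonal coordinates $1/\text{SOR}_i^{(t)}$ all decrease. Feeding this monotonicity into Property~2 of Proposition~\ref{prop:snr} yields the uniform bound $\tau_i(\bm{a}^{(t)}) \ge \tau_i(\bm{a}^{(0)}) \ge \tau(\bm{a}^{(0)})$ for all $t$. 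Unrolling the recursion then gives $\text{SOR}_i^{(t)} \ge (1+\tau(\bm{a}^{(0)}))^{t}$, and since the identity $(\text{SOR}^{(t)})^{-2} = \sum_{i\ne n}(\text{SOR}_i^{(t)})^{-2} \le (n-1)(1+\tau(\bm{a}^{(0)}))^{-2t}$ holds, once $(1+\tau(\bm{a}^{(0)}))^{t} \ge \sqrt{n}$ we obtain $\text{SOR}^{(t)} \ge 1$. This is precisely the escape time $T_\tau \le \log_{1+\tau(\bm{a}^{(0)})}(\sqrt{n})$.

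For $t \ge T_\tau$ I would convert this growth into the claimed error bound using the identity $\|\bm{a} - \bm{e}_n\|_2^2 = 2 - 2\sqrt{\text{SOR}^2/(\text{SOR}^2+1)}$ from the setup, which after rationalizing gives the one-sided bound $\|\bm{a} - \bm{e}_n\|_2 \le 1/\text{SOR}$ whenever $\text{SOR} \ge 1$. Restarting the unrolled recursion at $T_\tau$ with the lower bound $\tau_i(\bm{a}^{(s)}) \ge \tau(\bm{a}^{(T_\tau)})$ gives $\text{SOR}_i^{(t)} \ge \text{SOR}_i^{(T_\tau)}(1+\tau(\bm{a}^{(T_\tau)}))^{t-T_\tau}$, hence $\text{SOR}^{(t)} \ge (1+\tau(\bm{a}^{(T_\tau)}))^{t-T_\tau}$, and combining with the one-sided bound yields $\|\bm{a}^{(t)} - \bm{e}_n\|_2 \le (1+\tau(\bm{a}^{(T_\tau)}))^{T_\tau - t}$, the asserted linear bound.

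The asymptotic rate then follows from evaluating $\tau_i$ at the limit: as $\bm{a}^{(t)} \to \bm{e}_n$ we have $q_n \to 1$ and $q_i,\bm{q}_{\Omega'} \to 0$, so each norm term in Proposition~\ref{prop:snr} tends to $1$ or $0$ and $\tau_i(\bm{a}^{(t)}) \to (1-\theta)/\theta$; thus the per-iteration growth factor $1+\tau_i \to 1/\theta$, so every $\text{SOR}_i$ and hence $\text{SOR}$ is eventually multiplied by $1/\theta$ each step, which through $\|\bm{a}-\bm{e}_n\|_2$ scaling like $1/\text{SOR}$ pins down the limiting ratio of successive errors (the errors contracting at the reciprocal factor). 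I expect the main obstacle to be rigorously justifying the uniform lower bound $\tau_i(\bm{a}^{(t)}) \ge \tau(\bm{a}^{(0)})$: Property~2 only asserts monotonicity of $\tau_i$ in the single coordinates $q_n$ and $q_i$, yet $\tau_i$ depends on all remaining orthogonal coordinates through $\bm{q}_{\Omega'}$, which enter every norm term simultaneously. Making the monotonicity along the coupled trajectory precise—most cleanly by exploiting the scale-invariance of $\tau_i$ to work with the normalized ratios $q_j/q_n = 1/\text{SOR}_j$, all of which decrease monotonically—is the delicate step on which both the escape-time and the linear-rate bounds ultimately rest.
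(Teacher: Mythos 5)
Your proposal is correct and follows essentially the same route as the paper's own proof: the almost-sure initialization bound $\text{SOR}^{(0)} > 1/\sqrt{n}$, geometric growth of the SOR driven by Proposition \ref{prop:snr} to get $T_{\tau} \leq \log_{1+\tau(\bm{a}^{(0)})}(\sqrt{n})$, and the identity $\left\|\bm{a}-\bm{e}_n\right\|_2^2 = 2 - 2\sqrt{\text{SOR}^2/(\text{SOR}^2+1)} \leq \text{SOR}^{-2}$ for the linear stage (your per-coordinate version via $\text{SOR}^{-2} = \sum_{i \neq n}\text{SOR}_i^{-2}$ is an equivalent repackaging of the paper's aggregate argument). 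The trajectory-monotonicity of $\tau$ that you flag as the delicate step is precisely what the paper also assumes without proof in its display (\ref{eq:sor_inc}), and your limit computation $\tau_i(\bm{a}^{(t)}) \to (1-\theta)/\theta$ supplies the asymptotic-rate claim that the paper's written proof omits entirely, so your attempt is, if anything, more complete than the paper's.
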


\begin{rem} 
	Please refer to Section \ref{sec:conv} for the proof of this theorem.
\end{rem}

From Theorem \ref{thm:conv}, we observe a two-stage convergence. Specifically, the first stage only takes $T_\tau$ iterations, which is short, and the second stage enjoys a linear convergence rate. Fig. \ref{fig:sphere} illustrates the convergence of the population GPM algorithm for the $\ell_4$-based formulation over the sphere. We clearly see a two-stage convergence in the figure: The first stage only lasts for a few iterations and the second stage has a linear convergence rate of $\frac{1}{10}$. The convergence speed is much faster than Riemannian gradient (RGD), which is the most commonly adopted method on Riemannian manifold \cite{absil2009optimization}. We provide a comparison of the GPM algorithm and RGD in Section \ref{sec:rgd}. The convergence over $\mathbb{O}(n)$ is shown in Fig. \ref{fig:On}, from which we also observe the two-stage convergence with a $\frac{1}{10}$ rate.

\section{EXPERIMENTS} \label{sec:exp}
In this section, we test the performance of the $\ell_p$-based approaches under both noiseless and noisy conditions. Specifically, we verify the advantages of the $\ell_3$-based method revealed in the theoretical analysis, especially its robustness. All the experiments are conducted with Matlab 2019a running on Intel Core i7-6700 CPU @ 3.40Ghz.

The following three benchmarks are considered:
\begin{itemize}
	\item K-SVD \cite{aharon2006k,rubinstein2008efficient}: K-SVD is a classic dictionary learning algorithm. We adopt the K-SVD toolbox \cite{rubinstein2008efficient} for an efficient implementation.
	\item Riemannian trust region (RTR) \cite{sun2015complete}: This method adopts the smoothed $\ell_1$-based formulation and Riemannian trust region algorithm \cite{absil2009optimization}, which enjoys an exact recovery guarantee in the noiseless case.
	\item $\ell_4$-based \cite{zhai2019complete}: It can be viewed as a special case of the $\ell_p$-based formulation studied in this paper when $p=4$. The MSP algorithm proposed in \cite{zhai2019complete} is adopted.
\end{itemize}

\paragraph{Scalability in the noiseless case:} We first test the scalability of different algorithms. Specifically, we compare the accuracy and running time of different methods with different dictionary sizes $n$ and sparsity levels $\theta$ in the noiseless case. For the tested problem size, RTR failed to terminate within ten hours. Hence, we do not include it as a baseline. The error and running time are taken average over 10 independent random trials. The results are shown in Table \ref{tab:lp}.

From Table \ref{tab:lp}, we see that all $\ell_p$-based methods achieve better performance and significant speedup compared to the K-SVD algorithm. As the problem size increases, the error of K-SVD increases while the error of $\ell_p$-based approaches remain stable.

We also observe that a smaller $p$ leads to a smaller error, which is consistent with the results in Theorem \ref{thm:ex_re}. Moreover, the $\ell_3$-based approach enjoys the least running time because of its lower per-iteration complexity. Thus, the $\ell_3$-based approach is the best choice in terms of both accuracy and speed for the noiseless case.

\begin{table*}[]
	\caption{The performance of different algorithms for noiseless objectives. Since the dictionary recovery is up to some signed permutations, we adopt the error metric $1-\|\bm{A}\bm{D}_0\|_4^4/n$ in \cite{zhai2019complete}, which gives $0\%$ error for a perfect recovery.}
	\centering
	\resizebox{0.9\textwidth}{!}{\begin{tabular}{|ccc|cc|cc|cc|cc|}
		\hline
		\multicolumn{3}{|c|}{Settings} & \multicolumn{2}{c|}{$\ell_3$-based} & \multicolumn{2}{c|}{$\ell_4$-based \cite{zhai2019complete}}  & \multicolumn{2}{c|}{$\ell_5$-based} & \multicolumn{2}{c|}{K-SVD\cite{rubinstein2008efficient}}  \\ \hline
	  	n &   $\theta$    &   p($\times 10^{4}$)    &    Time      &    Error     & Time    &  Error         & Time          &    Error  & Time          &    Error       \\ \hline
		100 &    0.1   &   4    &     \bf{0.8s}      &     \bf{0.056\%}      &    1.8s       &     0.21\%      &    1.7s      &    0.50\% &    61s       &   1.45\%       \\ \hline
		200 &   0.1    &   8    &      \bf{4.1s}     &      \bf{0.056\%}     &     9.3s      &     0.21\%      &     8.0s      &    0.51\% &   131s        &   3.03\%        \\ \hline
		400 &   0.1    &   16    &     \bf{35s}      &      \bf{0.056\%}     &     50s      &      0.21\%     &      41s     &    0.50\% &   315s        &   6.45\%       \\ \hline
		100 &   0.3    &   4    &     \bf{1.2s}      &      \bf{0.094\%}     &     3.4s      &     0.34\%      &    3.1s       &    0.84\%  &   98s        &    2.60\%      \\ \hline
		200 &   0.3    &    8   &     \bf{10s}      &      \bf{0.094\%}     &     18s      &      0.35\%     &      15s     &     0.85\% &    215s       &    6.41\%      \\ \hline
		400 &   0.3    &    16   &     \bf{91s}      &      \bf{0.096\%}     &      122s     &      0.35\%     &     146s      &    1.00\%   &   589s        &    8.25\%      \\ \hline
	\end{tabular}}
	\label{tab:lp}
\end{table*}

\begin{table*}[]
	\caption{The performance of different algorithms under Gaussian noise. We set sparsity level $\theta=0.3$.}
	\centering
	\resizebox{0.9\textwidth}{!}{\begin{tabular}{|ccc|cc|cc|cc|cc|}
			\hline
			\multicolumn{3}{|c|}{Settings} & \multicolumn{2}{c|}{$\ell_3$-based} & \multicolumn{2}{c|}{$\ell_4$-based \cite{zhai2019complete}}  & \multicolumn{2}{c|}{RTR\cite{sun2015complete}} & \multicolumn{2}{c|}{K-SVD\cite{rubinstein2008efficient}}  \\ \hline
			n &   p($\times 10^{4}$)    &  $\sigma$     &    Time      &    Error     & Time    &  Error         & Time          &    Error  & Time          &    Error       \\ \hline
			32 &    1   &   0    &     \bf{0.05s}      &     0.10\%      &    0.24s       &     0.4\%      &    100s      &    \bf{0.05\%} &    25s       &   0.2\%       \\ \hline
			32 &   1   &   0.2    &      \bf{0.05s}     &      \bf{0.27\%}     &     0.24s      &     0.6\%      &     250s      &    0.5\% &   25s        &   0.37\%        \\ \hline
			32 &   1    &   0.4    &     \bf{0.1s}      &      \bf{0.79\%}     &     0.36s      &      1.2\%     &      577s     &  4.27\%   &   25s        &   2.0\%       \\ \hline
			32 &   1   &    0.6   &     \bf{0.2s}      &      \bf{2.3\%}     &     0.7s      &     3.4\%     &      823s     &   57.4\%   &    25s       &    57.4\%      \\ \hline
			100 &   4    &    0   &     \bf{1.2s}      &      0.1\%     &      3.4s     &      0.35\%     &     863s      &    \bf{0.05\%}   &   98s        &    2.60\%      \\ \hline
			100 &   4    &    0.2   &     \bf{2.2s}      &      \bf{0.2\%}    &      4.2s     &      0.5\%     &     1643s      &  0.3\%     &   104s        &    3.46\%      \\ \hline
			100 &   4    &    0.4   &     \bf{3.5s}      &      \bf{0.6\%}     &      6.1s     &      1.1\%     &     3796s      &   5.26\%   &   105s        &    3.56\%      \\ \hline
			100 &   4    &    0.6   &     \bf{8.4s}      &      \bf{1.95\%}     &      13.5s     &      2.63\%     &     5412s      &  50.5\%    &   104s        &    51.26\%      \\ \hline
	\end{tabular}}
	\label{tab:lp_gn}
\end{table*}

\begin{table*}[]
	\caption{The performance of different algorithms under sparse noise. We set sparsity level $\theta=0.3$. The sparsity of noise is set to $\vartheta = 0.1$.}
	\centering
	\resizebox{0.9\textwidth}{!}{\begin{tabular}{|ccc|cc|cc|cc|cc|}
			\hline
			\multicolumn{3}{|c|}{Settings} & \multicolumn{2}{c|}{$\ell_3$-based} & \multicolumn{2}{c|}{$\ell_4$-based \cite{zhai2019complete}}  & \multicolumn{2}{c|}{RTR\cite{sun2015complete}} & \multicolumn{2}{c|}{K-SVD\cite{rubinstein2008efficient}}  \\ \hline
			n &   p($\times 10^{4}$)    &  $\sigma$     &    Time      &    Error     & Time    &  Error         & Time          &    Error  & Time          &    Error       \\ \hline
			32 &    1   &   0.5    &     \bf{0.06s}      &     0.20\%      &    0.25s       &     0.57\%      &    362s      &    \bf{0.10\%} &    25s       &   0.37\%       \\ \hline
			32 &   1   &   1    &      \bf{0.09s}     &      \bf{0.50}\%     &     0.35s      &     0.93\%      &     421s      &    1.4\% &   25s        &   2.0\%        \\ \hline
			32 &   1    &   1.5    &     \bf{0.14s}      &      \bf{1.65\%}     &     0.47s      &      2.26\%     &      420s     &    13.4\% &   25s        &   57.4\%       \\ \hline
			100 &   1   &    0.5   &     \bf{1.9s}      &      \bf{0.20\%}     &     4.0s      &     0.40\%     &      1649s     &     \bf{0.2\%} &    104s       &    3.04\%      \\ \hline
			100 &   4    &    1   &     \bf{2.6s}      &      \bf{0.40\%}     &      5.5s     &      0.80\%     &     2737s      &    1.3\%   &   105s        &    3.55\%      \\ \hline
			100 &   4    &    1.5   &     \bf{4.6s}      &      \bf{1.02\%}    &      7.7s     &      1.49\%     &     5395s      &    36.8\%   &   104s        &    5.83\%      \\ \hline
	\end{tabular}}
	\label{tab:lp_sn}
\end{table*}

\paragraph{Gaussian noise:} A small Gaussian noise usually appears in dictionary learning applications.  We consider the noisy observation $\bm{Y}_N = \bm{Y} + \sigma \bm{G}$, where $G_{i,j} \sim \mathcal{N}(0,1)$. The results of different algorithms are shown in Table \ref{tab:lp_gn}. While RTR achieves the smallest error on a clean objective (i.e., the cases with $\sigma=0$), it is very time-consuming. When the noise is large, both RTR and K-SVD fail to recover the dictionary while the $\ell_3$ and $\ell_4$-based methods still have a relatively low error. This is because for the $\ell_3$ and $\ell_4$-norm, the regions around the global maximizers are very flat, leading to a higher tolerance to noise. Table \ref{tab:lp_gn} also shows that the $\ell_3$-based method is the most time-efficient in all cases and achieves the best performance when the noise is present.

\paragraph{Sparse corruptions:}
Sparse corruptions are another kind of noise usually present in images \cite{candes2011robust}. We consider the noisy observation $\bm{Y}_S = \bm{Y} + \sigma \bm{B} \circ \bm{R}$, where $B_{i,j} \sim \text{Ber}(\vartheta)$ and $R_{i,j}$ has equal probability to be $-1$ and $1$. The results are shown in Table \ref{tab:lp_sn}. The performance of different methods are similar to the Gaussian noise case. RTR performs the best when the noise is relatively small but very unstable as the noise increases. The $\ell_3$ and $\ell_4$-based methods are more stable as the noise increases and the $\ell_3$-based method is still the most time-efficient and most robust one.

All the above experiments demonstrate that the $\ell_3$-based approach is more time efficient and robust than existing methods, and thus it is a preferred method to use in practice.

\section{RELATED WORKS}
In this section, we discuss some existing literature related to our work.

\paragraph{Applications of $\ell_p$-norm:} The $\ell_p$-norm plays an important role in machine learning. $\ell_1$-norm can induce sparsity and has been widely applied in compressive sensing \cite{foucart2017mathematical,candes2011robust}. $\ell_2$-norm has strong geometric connections to eigenvectors as well as variance. Thus, it is often used in different kinds of principle component pursuits \cite{vidal2015generalized}. Maximizing $\ell_{2k+2}$-norm can thus promote the spikiness, which has found applications in independent component analysis \cite{hyvarinen1997fast}, sparse blind deconvolution \cite{li2018global,zhang2019structured}, and dictionary learning \cite{zhai2019complete,qu2019geometric}. General $\ell_p$-norms have also been applied for deconvolution applications in geophysics \cite{debeye1990lp,nose2014On}, but with little theory. Our study showed the importance of investigating more general $\ell_p$-norm based methods, and developed general frameworks for effective theoretical analysis and algorithm design. In particular, $\ell_3$-norm maximization stands out as a promising method for solving dictionary learning problems.

\paragraph{Independent Component Analysis:} ICA factors a data matrix $\bm{Y}$ as $\bm{Y} = \bm{A}\bm{X}$ such that $\bm{A}$ is square with $\bm{X}$ as independent as possible. One popular contrast function for ICA is the kurtosis \cite{hyvarinen1997fast}, which has the same formulation with (\ref{eq:lp_s2}) when $p=4$. It suggests that our general $\ell_p$-based formulation and analysis might be able to extend to ICA. The major difference is that ICA is to find statistically independent components while dictionary learning is to find the sparsest representation of the given data. As is revealed in Section 1.5 of \cite{sun2015complete}, it is the sparsity rather than independence shapes the benign landscape of the orthogonal dictionary learning problem.


\section{CONCLUSIONS}

In this paper, we considered the $\ell_p$-based ($p>2,p\in \mathbb{N}$) formulations for the orthogonal dictionary learning problem. We showed that the global maximizers of these formulations are very close to the true dictionary in both noiseless and noisy cases. In addition, we developed an efficient algorithm based on the generalized power method, and demonstrated its fast global convergence. We further conducted various experiments to show the benefits of adopting the $\ell_3$-based approach.

\section*{Acknowledgement}

We would like to thank professor Yi Ma of Berkeley EECS Department for his lectures and talks at Tsinghua-Berkeley Shenzhen Institute and Yuexiang Zhai of Berkeley for stimulating discussions during preparation of this manuscript.

\newpage

\bibliographystyle{plain}
\bibliography{Reference}

\newpage
\onecolumn
\appendix
\noindent {\Large \textbf{Supplementary Materials}}
\section{Additional Experiments}
\subsection{Comparison between GPM and RGD} \label{sec:rgd}
In this subsection, we present the comparison between the population generalized power method (GPM) and population Riemannian gradient  (RGD) for the $\ell_p$-based objective. RGD \cite{absil2009optimization} is a popular method for optimization over manifold, which is known for its low computational cost \cite{bai2018subgradient}. In the experiment, we set $\theta=0.3$ and $p=4$ to compare the convergence speed of the two methods in different scales. For RGD, we fix the step size as $\frac{1}{4}$. The results on the sphere and orthogonal group are shown in Fig. \ref{fig:sphere_RGD} and Fig. \ref{fig:On_RGD} respectively. From both figures, we see that the convergence speed of GPM is much faster than RGD, especially when the problem size is large. In addition, we observe both GPM and RGD have a two stage convergence.

\begin{figure*}[htbp]
	\centering
	\subfigure[Convergence over the sphere. The error metric is $\min_{1\leq i \leq n} \|\bm{a} - \bm{D}_0\bm{e}_i\|_2$.]{
		\begin{minipage}[t]{0.47\linewidth}
			\centering
			\includegraphics[width=1\linewidth]{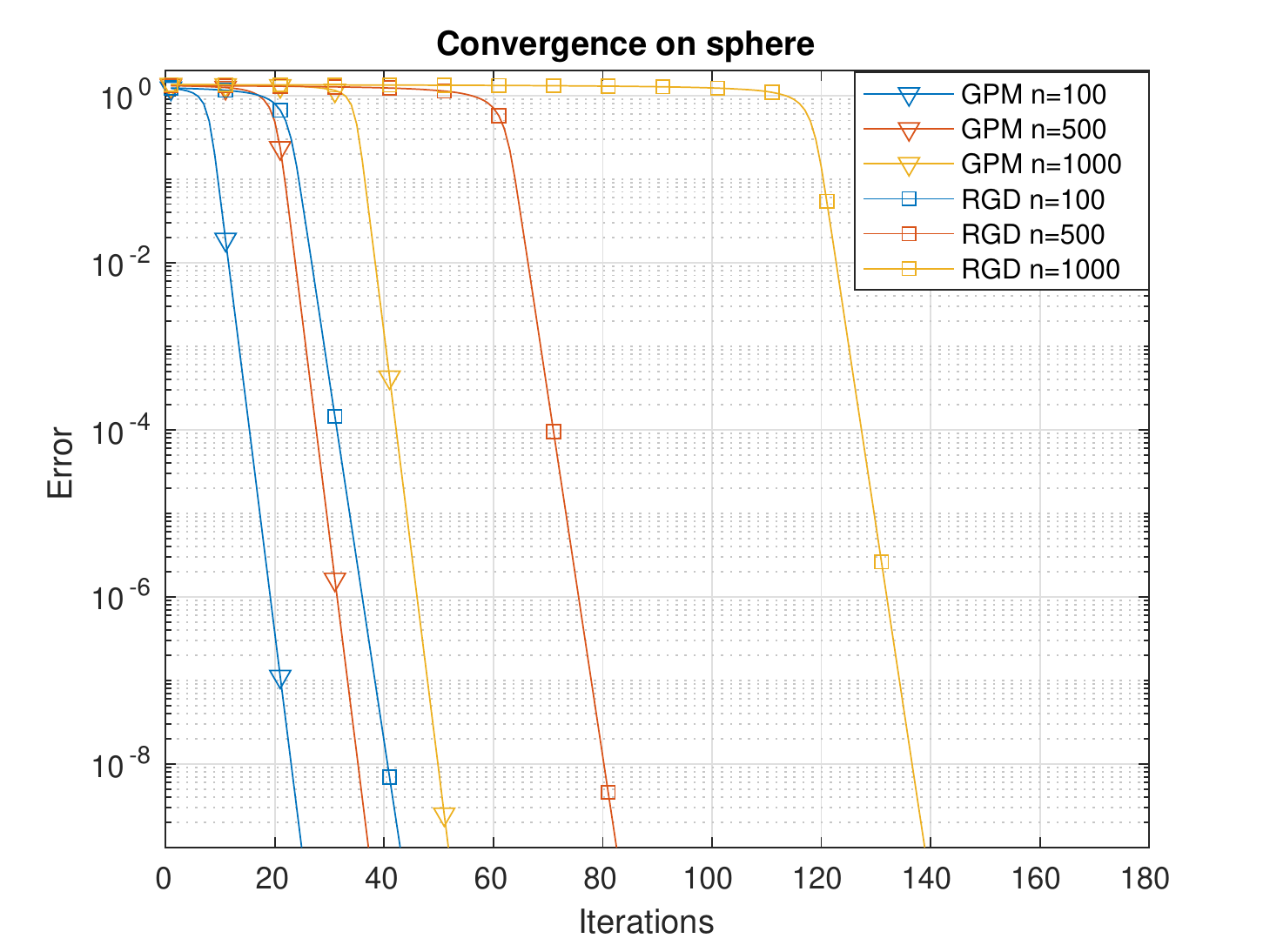}
			\label{fig:sphere_RGD} 
		\end{minipage}%
	}%
	\hspace{.1in}
	\subfigure[Convergence over the orthogonal group. The error metric is $\min_{\bm{\Pi} \in \text{SP}(n)}\frac{\|\bm{A}^*-\bm{D}_0\bm{\Pi}\|_F}{\|\bm{D}_0\|_F}$.]{
		\begin{minipage}[t]{0.47\linewidth}
			\centering
			\includegraphics[width=1\linewidth]{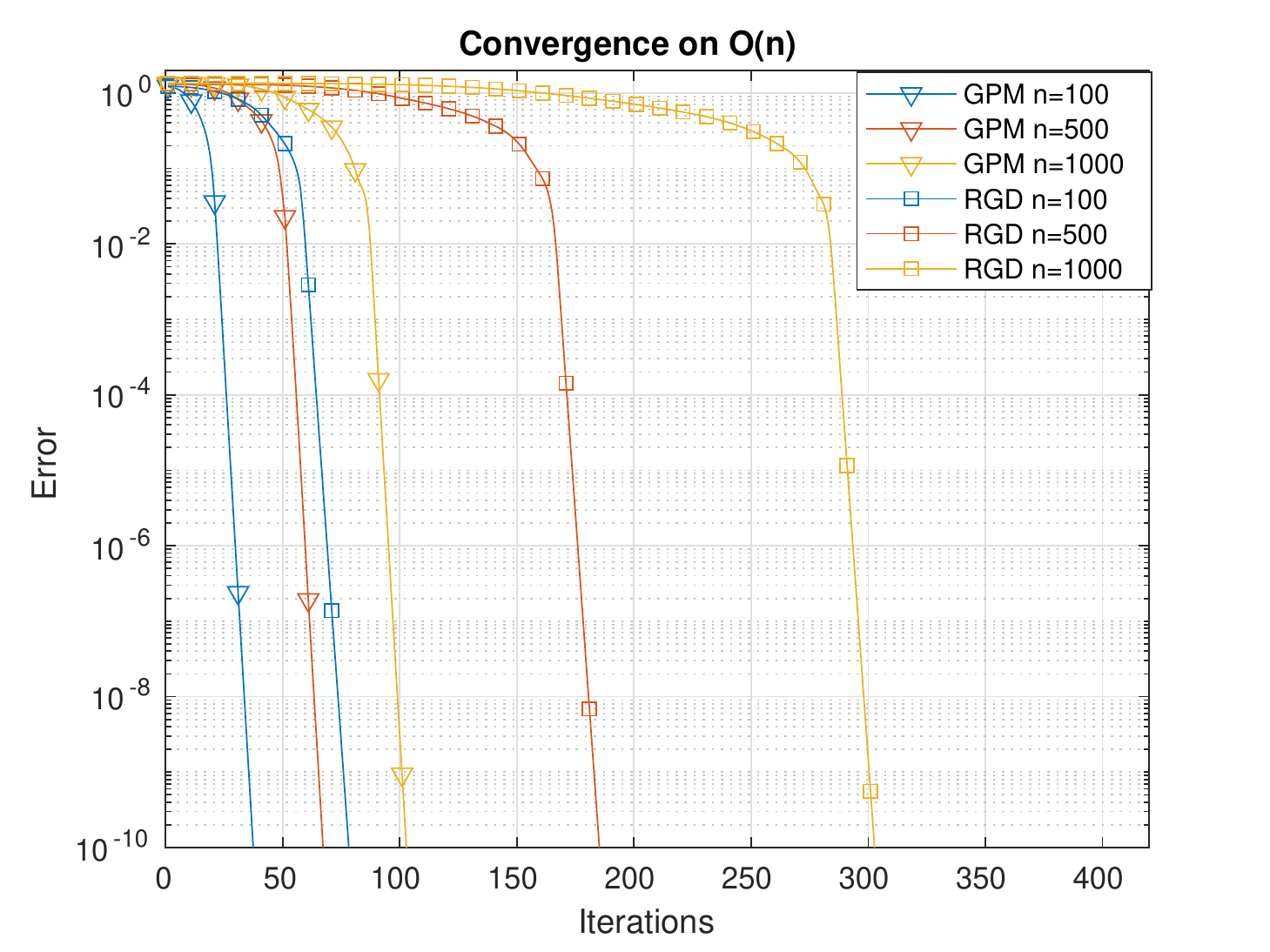}
			\label{fig:On_RGD}
		\end{minipage}%
	}%
	\centering
	\caption{The convergence of the population GPM for $\ell_4$-based formulation.}
\end{figure*}

\subsection{Phase Transition Heatmaps}
\begin{figure*}[htbp]
	\centering
	\subfigure[Fix $p=3$ and $\sigma=0.2$, varying $r$ and $\theta$.]{
		\begin{minipage}[t]{0.3\linewidth}
			\centering
			\includegraphics[width=1\linewidth]{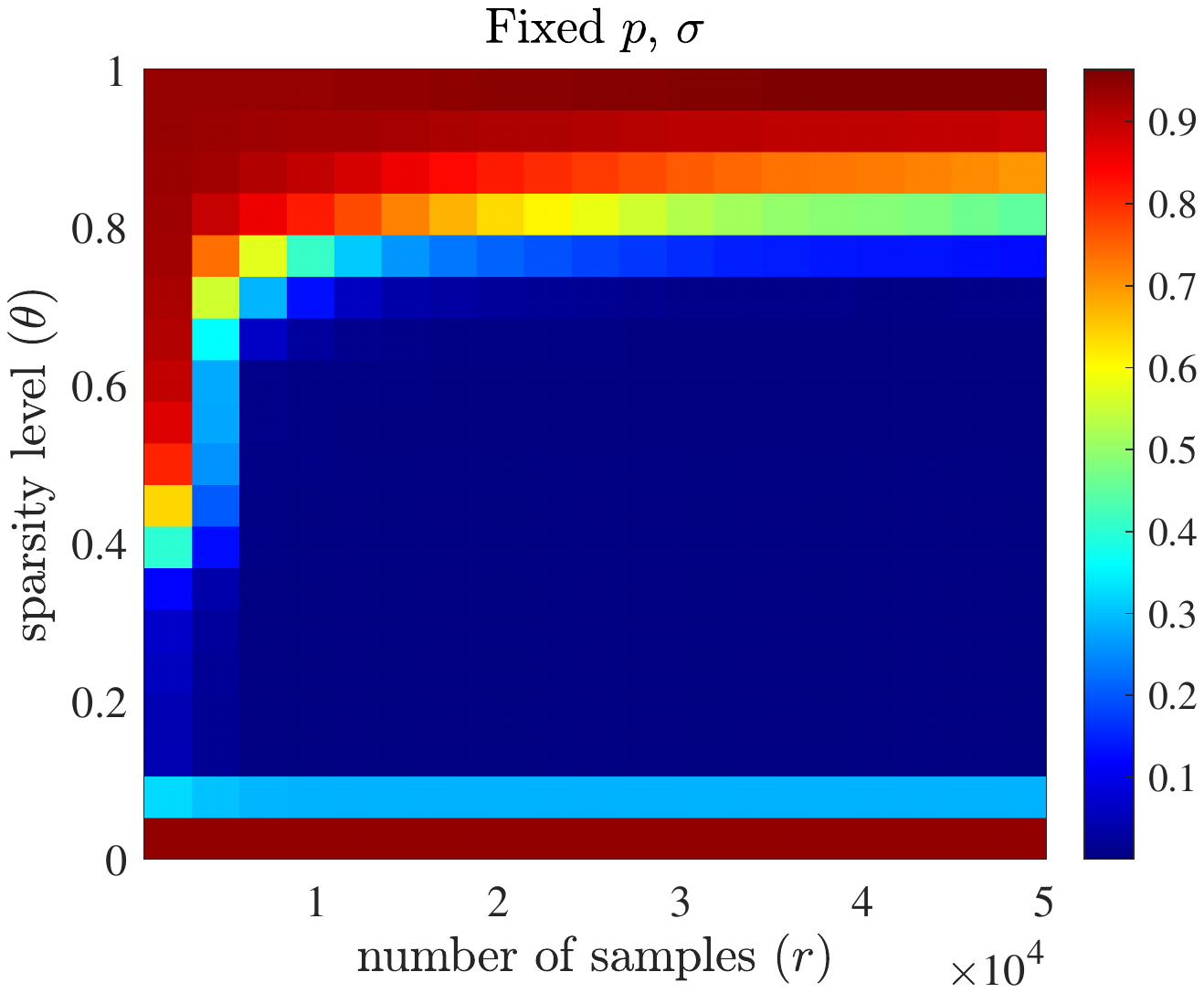}
		\end{minipage}%
	}%
	\subfigure[Fix $p=3$ and $\theta=0.3$, varying $r$ and $\sigma$.]{
		\begin{minipage}[t]{0.3\linewidth}
			\centering
			\includegraphics[width=1\linewidth]{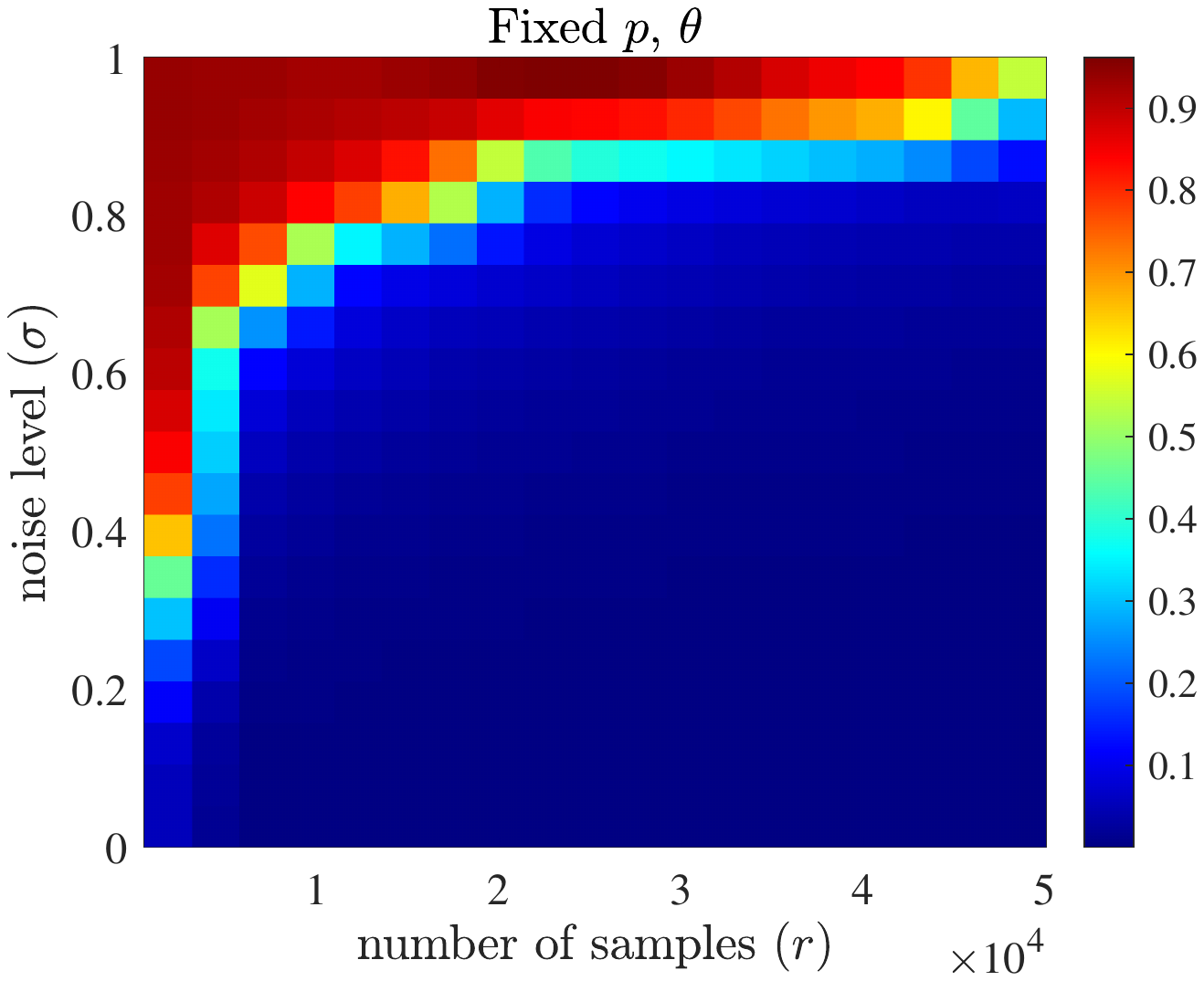}
		\end{minipage}%
	}%
	\subfigure[Fix $r=10000$ and $\sigma=0.2$, varying $p$ and $\theta$.]{
		\begin{minipage}[t]{0.3\linewidth}
			\centering
			\includegraphics[width=1\linewidth]{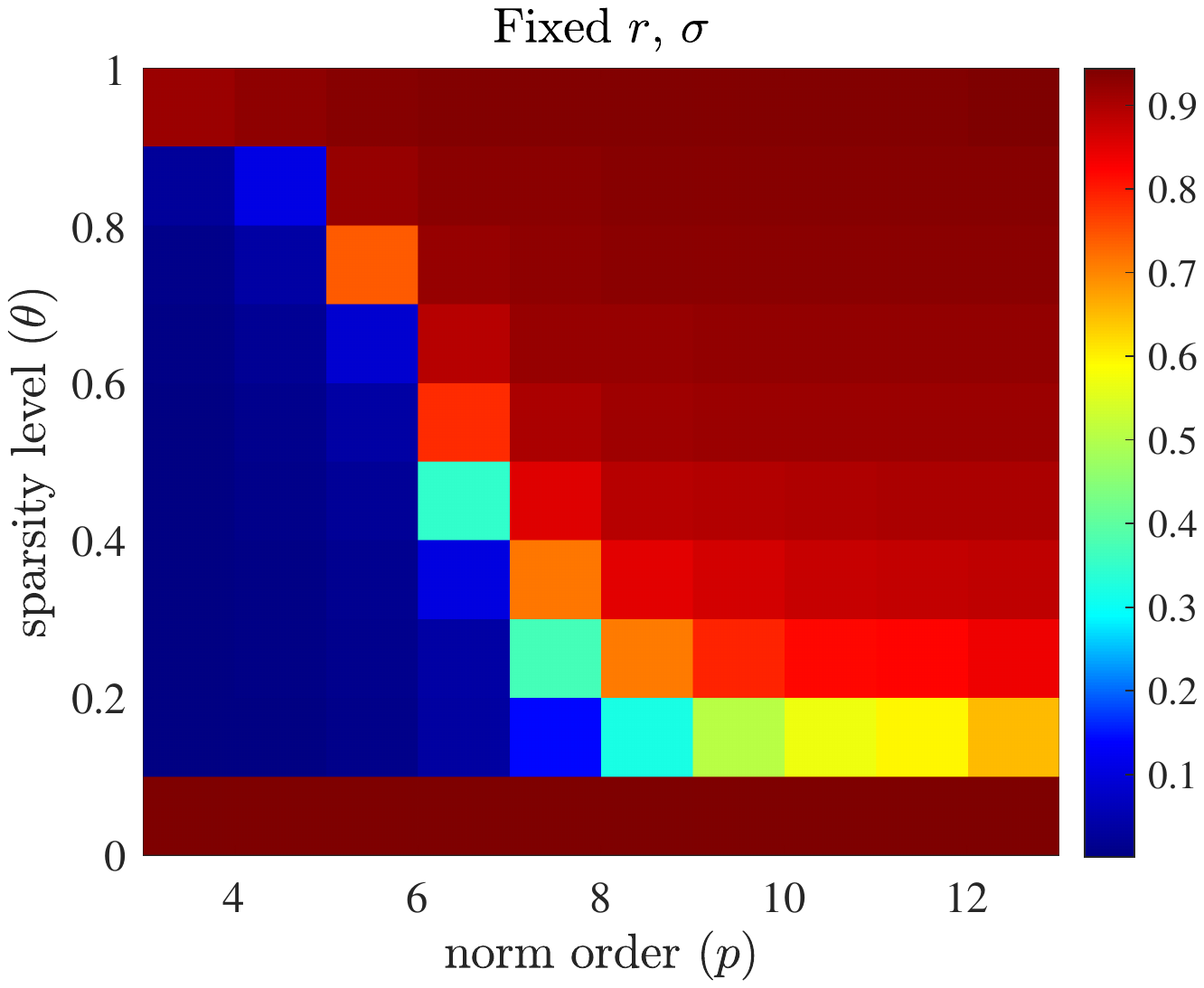}
		\end{minipage}%
	}%

	\subfigure[Fix $r=10000$ and $\theta=0.3$, varying $p$ and $\sigma$.]{
		\begin{minipage}[t]{0.3\linewidth}
			\centering
			\includegraphics[width=1\linewidth]{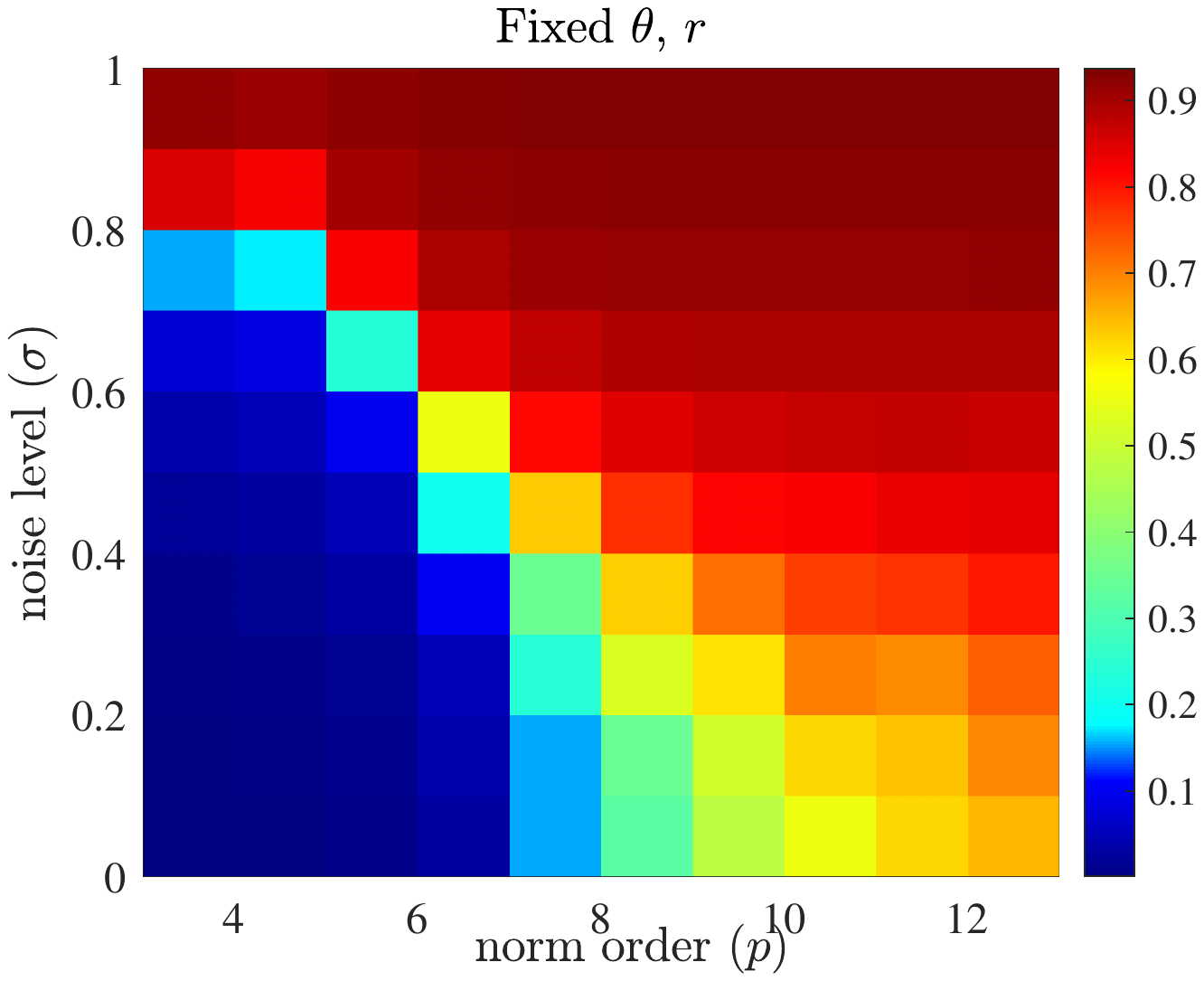}
			\label{fig:fixpsigma}
		\end{minipage}%
	}%
	\hspace{.1in}
	\subfigure[Fix $\sigma=0.2$ and $\theta=0.3$, varying $p$ and $r$.]{
		\begin{minipage}[t]{0.3\linewidth}
			\centering
			\includegraphics[width=1\linewidth]{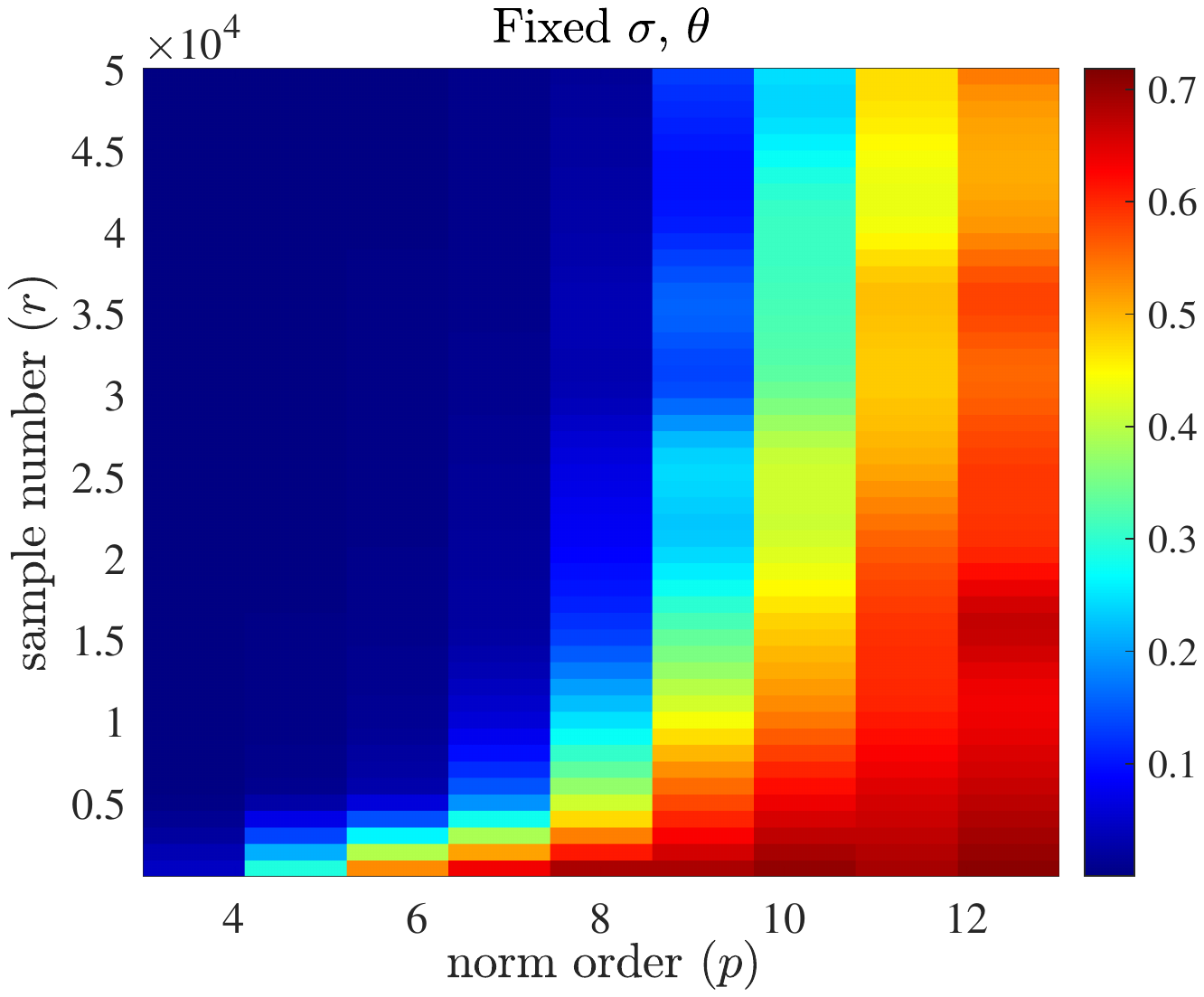}
		\end{minipage}%
	}%
	\hspace{.1in}
	\subfigure[Fix $p=3$ and $r=10000$, varying $p$ and $r$.]{
		\begin{minipage}[t]{0.3\linewidth}
			\centering
			\includegraphics[width=1\linewidth]{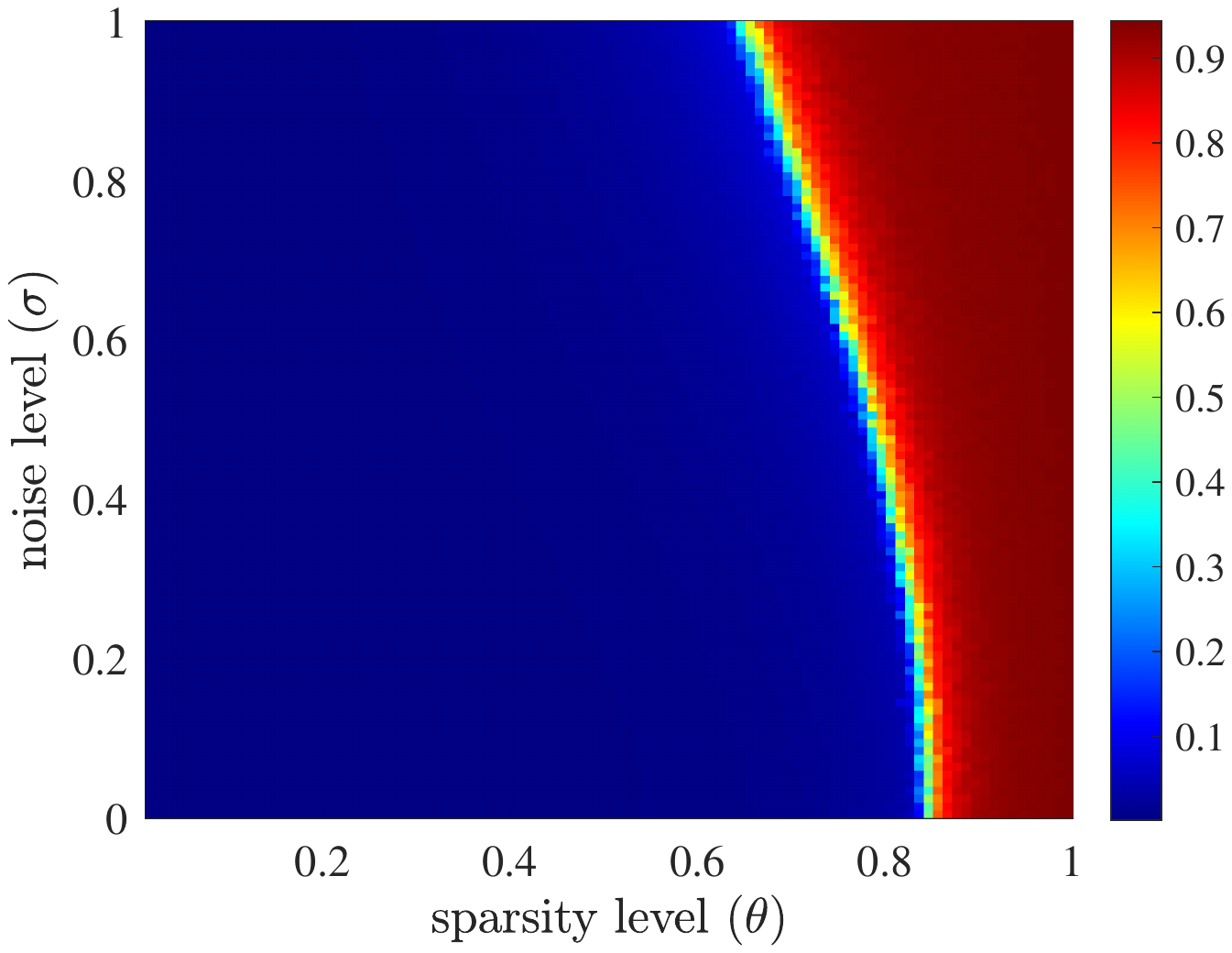}
		\end{minipage}%
	}%
	
	\centering
	\caption{Phase transitions when $n=50$.}
	\label{fig:pt_hm}
\end{figure*}

There are complicated interactions between the number of samples, $r$, the sparsity level, $\theta$, the order of the norm, $p$, and the variance, $\sigma$ of the additive Gaussian noise. For a more comprehensive illustration, we plot the heatmap by fixing two variables among $r$, $\theta$, $p$ and $\sigma$, and varying the other two to show these interactions. The results are shown in Fig. \ref{fig:pt_hm}, where the color reflects the error. 

\subsection{Experiments on Real Data}
In this subsection, we test the performance on the MNIST dataset to check whether $\ell_p$-based methods can learn a sparse representation on real images. We include principal component pursuit (PCA) as a benchmark. For each method, we take the top $5$ bases to recover the original images. The results are shown in Fig. \ref{fig:mnist}. With top $5$ bases, the recovered images by PCA can hardly be recognized while we can recognize the images recovered by $\ell_3$ and $\ell_4$-based method. This suggests that a sparse representation is learned. In addition, the recovered images via $\ell_3$-based method is a bit more identifiable than the images via $\ell_4$-based method from the figure.

\begin{figure*}[htbp]
	\centering
	\subfigure[Original images.]{
		\begin{minipage}[t]{0.5\linewidth}
			\centering
			\includegraphics[width=0.8\linewidth]{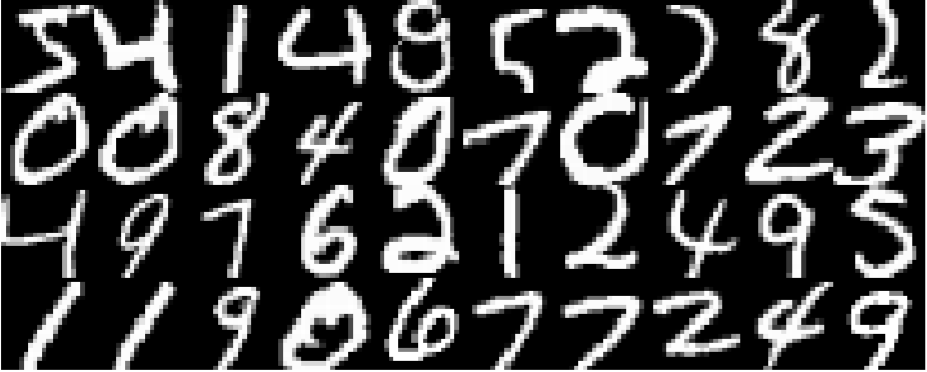}
		\end{minipage}%
	}%
	\subfigure[Recovered images via $\ell_3$-based method with top $5$ bases.]{
		\begin{minipage}[t]{0.5\linewidth}
			\centering
			\includegraphics[width=0.8\linewidth]{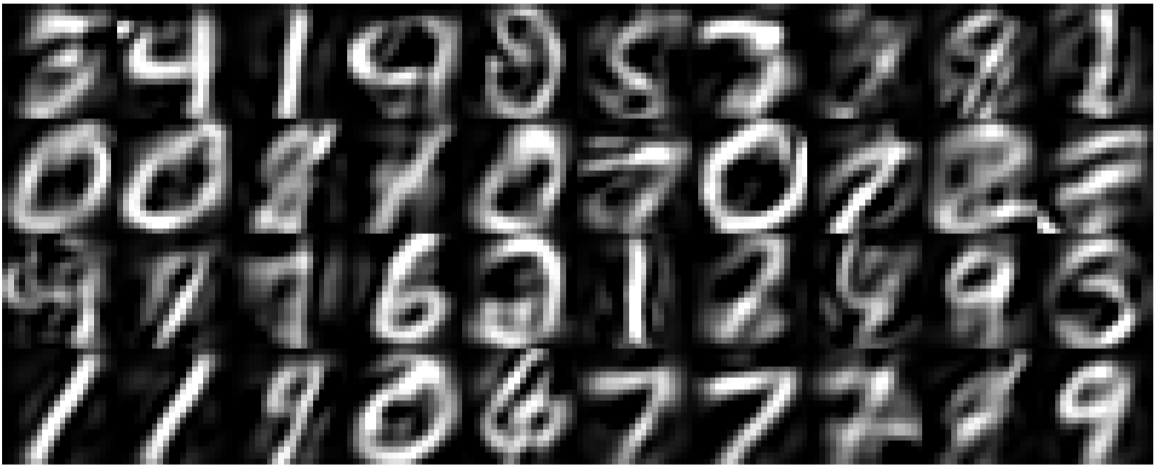}
		\end{minipage}%
	}%
	
	\subfigure[Recovered images via $\ell_4$-based method with top $5$ bases.]{
		\begin{minipage}[t]{0.5\linewidth}
			\centering
			\includegraphics[width=0.8\linewidth]{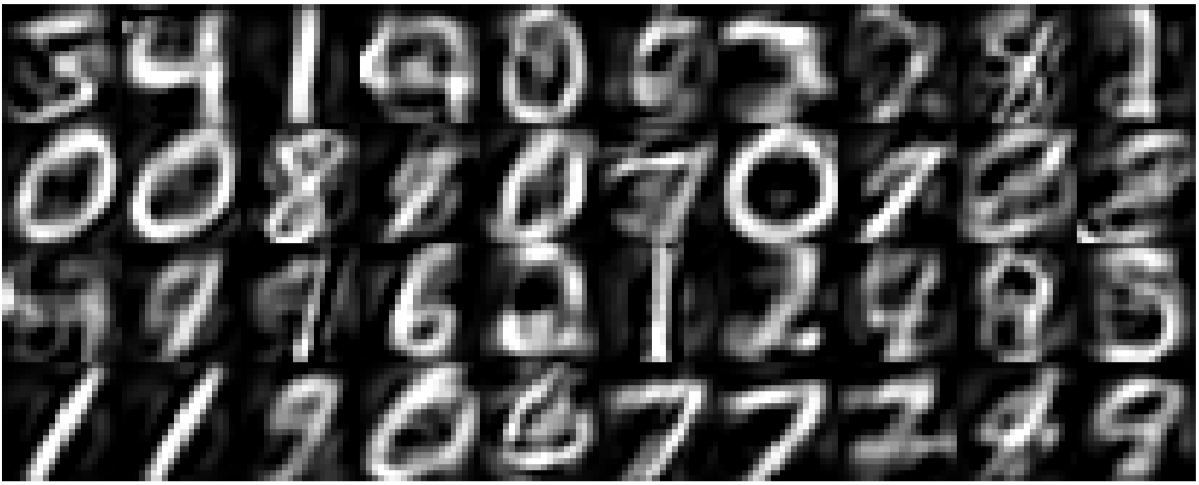}
		\end{minipage}%
	}%
	\subfigure[Recovered images via PCA with top $5$ bases.]{
		\begin{minipage}[t]{0.5\linewidth}
			\centering
			\includegraphics[width=0.8\linewidth]{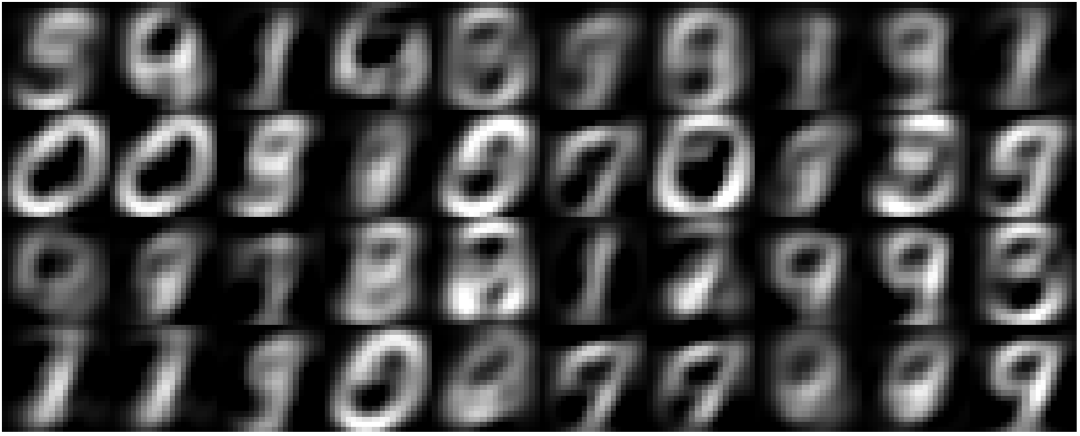}
		\end{minipage}%
	}%
	\centering
	\caption{Recovered images with top $5$ bases.}
	\label{fig:mnist}
\end{figure*}

\section{Correctness of the Global Maximizers}
In this section, we prove Theorem \ref{thm:ex_re} and Theorem \ref{thm:c_obj_n}. We first develop a general theory for the correctness of the global maximizers. Consider the following optimization problem to recover the dictionary
\begin{equation} 
\begin{aligned}
&\underset{\bm{A}}{\text{maximize}}
& & f(\bm{A},\bm{Y}) & & &\text{ subject to } \bm{A} \in \mathbb{O}(n). \notag
\end{aligned}
\end{equation}

The following result shows that if two conditions are satisfied, the global maximizers to this optimization problem are very close to the true dictionary with high probability.

\begin{thm} \label{thm:cor_g}
	Let $\bm{X} \in \mathbb{R}^{n \times r}, x_{i,j} \sim \mathcal{BG}(\theta)$, $\bm{D}_0 \in \mathbb{O}(n)$ be an orthogonal dictionary, and $\bm{Y} = \bm{D}_0\bm{X}$.

	Assume a function $f:\mathbb{R}^{n\times n} \times \mathbb{R}^{n \times r} \rightarrow \mathbb{R}$ that satisfies the following two conditions.
	
	1. (Concentration) Denote $g(\bm{A}) = \mathbb{E}_{\bm{Y}} f(\bm{A},\bm{Y})$, with sample size $r > \Phi(\delta,n)$, and then
	\begin{align} \label{cond:con}
		\sup_{\bm{A} \in \mathbb{O}(n)}|f(\bm{A},\bm{Y}) - g(\bm{A})| \leq \delta
	\end{align}
	holds with high probability.
	
	2. (Sharpness) For all $\bm{A} \in \mathbb{O}(n)$, there exists $\alpha>0$ and $\bm{\Pi} \in \text{SP}(n)$ such that
	\begin{align} \label{cond:shp}
		g(\bm{D}_0^*\bm{\Pi}^*) - g(\bm{A}) \geq \alpha \min_{\bm{\Pi} \in \text{SP}(n)} \|\bm{A} -  \bm{D}_0^*\bm{\Pi}^*\|_F^2.
	\end{align}
	
	Suppose $\hat{\bm{A}}$ is a global maximizer to 
	\begin{equation} \label{eq:g_for}
	\begin{aligned}
	&\underset{\bm{A}}{\text{maximize}}
	& & f(\bm{A},\bm{Y}) & & &\text{ subject to } \bm{A} \in \mathbb{O}(n). 
	\end{aligned}
	\end{equation}
	and $r > \Phi(\delta,n)$, then for any $\delta > 0$, there exists a signed permutation $\bm{\Pi}$, such that
	\begin{align*}
	\left\|\hat{\bm{A}}^* - \bm{D}_0\bm{\Pi} \right\|_F^2 \leq 2\alpha^{-1}\delta
	\end{align*}
	with high probability.
\end{thm}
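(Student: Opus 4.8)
The plan is to run the standard ``concentration plus sharpness implies recovery'' sandwich argument, working deterministically on the high-probability event on which the uniform bound (\ref{cond:con}) holds. I would condition on this event throughout, so that the inequality $|f(\bm{A},\bm{Y}) - g(\bm{A})| \leq \delta$ is available for \emph{every} $\bm{A} \in \mathbb{O}(n)$ — most importantly for the data-dependent maximizer $\hat{\bm{A}}$ itself.

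First I would apply the sharpness condition (\ref{cond:shp}) at the point $\bm{A} = \hat{\bm{A}}$ to produce a signed permutation $\bm{\Pi} \in \text{SP}(n)$ satisfying
\[
g(\bm{D}_0^*\bm{\Pi}^*) - g(\hat{\bm{A}}) \;\geq\; \alpha \min_{\bm{\Pi}' \in \text{SP}(n)} \|\hat{\bm{A}} - \bm{D}_0^*(\bm{\Pi}')^*\|_F^2 .
\]
The crucial bookkeeping point is to feed \emph{this same} reference matrix $\bm{D}_0^*\bm{\Pi}^*$, which is itself orthogonal and hence feasible for (\ref{eq:g_for}), into the optimality inequality: because $\hat{\bm{A}}$ is a global maximizer of $f(\cdot,\bm{Y})$ over $\mathbb{O}(n)$, we have $f(\hat{\bm{A}},\bm{Y}) \geq f(\bm{D}_0^*\bm{\Pi}^*,\bm{Y})$.

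Next I would chain the two one-sided concentration estimates around this inequality,
\[
g(\hat{\bm{A}}) + \delta \;\geq\; f(\hat{\bm{A}},\bm{Y}) \;\geq\; f(\bm{D}_0^*\bm{\Pi}^*,\bm{Y}) \;\geq\; g(\bm{D}_0^*\bm{\Pi}^*) - \delta ,
\]
which collapses to $g(\bm{D}_0^*\bm{\Pi}^*) - g(\hat{\bm{A}}) \leq 2\delta$. Combining with the sharpness lower bound then gives $\alpha \min_{\bm{\Pi}'} \|\hat{\bm{A}} - \bm{D}_0^*(\bm{\Pi}')^*\|_F^2 \leq 2\delta$, i.e. $\min_{\bm{\Pi}'}\|\hat{\bm{A}} - \bm{D}_0^*(\bm{\Pi}')^*\|_F^2 \leq 2\alpha^{-1}\delta$. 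The stated conclusion follows by taking transposes inside the Frobenius norm, which leaves it unchanged: $\|\hat{\bm{A}} - \bm{D}_0^*(\bm{\Pi}')^*\|_F = \|\hat{\bm{A}}^* - \bm{\Pi}'\bm{D}_0\|_F$, and the transpose can be absorbed into the signed-permutation ambiguity since $\text{SP}(n)$ is closed under inversion, yielding $\|\hat{\bm{A}}^* - \bm{D}_0\bm{\Pi}\|_F^2 \leq 2\alpha^{-1}\delta$ for a suitable $\bm{\Pi} \in \text{SP}(n)$.

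Since everything after conditioning on the concentration event is deterministic, there is no deep analytic difficulty in this theorem — the genuine work is deferred to verifying conditions (\ref{cond:con}) and (\ref{cond:shp}) for the specific $\ell_p$ objective in the lemmas that follow. The one point that must not be glossed over, and the reason the concentration hypothesis is posed as a \emph{uniform} supremum over $\mathbb{O}(n)$ rather than a pointwise statement, is that $\hat{\bm{A}}$ depends on the random data $\bm{Y}$; a pointwise bound could not legitimately be evaluated at $\hat{\bm{A}}$. The only other care needed is the alignment noted above — using the \emph{same} signed permutation returned by sharpness as the competitor in the optimality step, so that both estimates are anchored at the common point $\bm{D}_0^*\bm{\Pi}^*$ — and the high-probability qualifier is simply inherited verbatim from (\ref{cond:con}).
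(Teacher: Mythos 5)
Your proposal is correct and is essentially the paper's own proof: the same sandwich argument chaining the two concentration bounds around the global-optimality inequality $f(\hat{\bm{A}},\bm{Y}) \geq f(\bm{D}_0^*\bm{\Pi}^*,\bm{Y})$ to get $g(\bm{D}_0^*\bm{\Pi}^*) - g(\hat{\bm{A}}) \leq 2\delta$, followed by the sharpness condition to convert the objective gap into the Frobenius-distance bound. The only cosmetic difference is that the paper anchors both steps at $\bm{D}_0^*$ rather than at $\bm{D}_0^*\bm{\Pi}^*$, which is immaterial since the population objective is invariant under signed permutations, and your remarks on uniformity of the concentration event and on the transpose bookkeeping are consistent with (indeed slightly more careful than) the paper's treatment.
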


\begin{proof}
	From concentration, if $r > \Phi(\delta,n)$ we have 
	\begin{align*}
	&\left|f(\hat{\bm{A} },\bm{Y}) - g(\hat{\bm{A}}) \right| \leq \delta, &&\left| f(\bm{D}_0^*,\bm{Y}) - g(\bm{D}_0^*) \right| \leq \delta.
	\end{align*}
	with high probability. This leads to 
	\begin{align} \label{eq:ep_obj_g}
	g(\bm{D}_0^*) - 2\delta \leq g(\hat{\bm{A}}) \leq g(\bm{D}_0^*).
	\end{align}
	
	By sharpness, we have
	\begin{align*}
		\alpha \|\hat{\bm{A}}^* - \bm{D}_0\bm{\Pi}\|_F^2 \leq g(\bm{D}_0^*) - g(\hat{\bm{A}}) \leq 2\delta.
	\end{align*}
	
	This implies 
	\begin{align*}
	\|\hat{\bm{A}}^* - \bm{D}_0\bm{\Pi}\|_F^2 \leq 2\alpha^{-1}\delta
	\end{align*}
	with high probability.
	
\end{proof}

\begin{rem}
	The concentration condition determines the sample complexity of the formulation and ensures that the maximal objective values are close to the expected maximal objective value. The sharpness condition measures how close two variables are if the their objective values are close, which then leads to the closeness in the variable values.
\end{rem}

We check the concentration and sharpness of noiseless and noisy objectives in the following two subsections.

\subsection{Proof of Theorem \ref{thm:ex_re}} \label{sec:noiseless}
In this subsection, we prove Theorem \ref{thm:ex_re} via Theorem \ref{thm:cor_g}. Specifically, the concentration and sharpness conditions are established in Lemma \ref{lem:c_o} and Lemma \ref{lem:shp_o} respectively. Then we show the global maximizers of all $\ell_p$-based formulation are very close to the true dictionary with high probability via Theorem \ref{thm:cor_g}.

We begin with the correctness of the population objective. We first show the correctness over the sphere and then extend it to $\mathbb{O}(n)$.
\begin{lem} (Global optimums over the sphere) \label{lem:go_s} Consider the problem
	\begin{equation}
	\begin{aligned}
	&\underset{\bm{a} \in \mathbb{S}^{n-1}}{\text{maximize}}
	& & \mathbb{E}_{\bm{y}}|\bm{a}^*\bm{y}|^p,
	\end{aligned}		
	\end{equation}
	where $\bm{y} = \bm{D}_0 (\bm{b} \circ \bm{g})$ with $\bm{b} \sim \text{Ber}(\theta)$ and $\bm{g} \sim \mathcal{N}(0,\sigma^2)$, $\gamma_p = \sigma^{p} 2^{p/2} \frac{\Gamma(\frac{p+1}{2})}{\sqrt{\pi}}$ and $\Gamma(\cdot)$ is the Gamma function. The equality holds when $\|\bm{a}^*\bm{D}_0\|_0 = 1$ and the maximal objective value is $\gamma_{p} \theta$.
\end{lem}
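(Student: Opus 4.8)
The plan is to reduce the sphere maximization to an optimization of a convex function over the probability simplex, after which convexity forces the maximum to sit at the vertices, which are exactly the $1$-sparse directions.

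First I would make the orthogonal change of variables $\bm{w} = \bm{D}_0^*\bm{a}$. Since $\bm{D}_0 \in \mathbb{O}(n)$ and $\|\bm{a}\|_2 = 1$, we have $\|\bm{w}\|_2 = 1$ and $\|\bm{a}^*\bm{D}_0\|_0 = \|\bm{w}\|_0$, so both the constraint and the sparsity pattern are preserved. Writing $\bm{a}^*\bm{y} = \bm{w}^*(\bm{b}\circ\bm{g}) = \sum_i b_i g_i w_i$ and conditioning on the support $\Omega = \{i : b_i = 1\}$, the inner sum $\sum_{i\in\Omega} g_i w_i$ is a centered Gaussian with standard deviation $\sigma\|\bm{w}_\Omega\|_2$. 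The absolute-moment identity $\mathbb{E}|N|^p = 2^{p/2}\Gamma(\tfrac{p+1}{2})/\sqrt{\pi}$ for $N\sim\mathcal{N}(0,1)$ then gives $\mathbb{E}[\,|\bm{a}^*\bm{y}|^p \mid \Omega\,] = \gamma_p\|\bm{w}_\Omega\|_2^p$, and averaging over $\bm{b}$ yields the compact form
\[
\mathbb{E}_{\bm{y}}|\bm{a}^*\bm{y}|^p = \gamma_p\,\mathbb{E}_{\bm{b}}\Big(\textstyle\sum_i b_i w_i^2\Big)^{p/2}.
\]

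Next I would substitute $u_i = w_i^2$, so that $\bm{u}$ ranges over the simplex $\Delta = \{\bm{u}: u_i \ge 0,\ \sum_i u_i = 1\}$, every point of which is attained. The map $\bm{u}\mapsto(\langle\bm{b},\bm{u}\rangle)^{p/2}$ is a convex function of a linear form and hence convex, and expectation preserves convexity, so $h(\bm{u}) := \mathbb{E}_{\bm{b}}(\sum_i b_i u_i)^{p/2}$ is convex on $\Delta$. The vertices of $\Delta$ are the $\bm{e}_i$, with $h(\bm{e}_i) = \mathbb{E}[b_i^{p/2}] = \mathbb{E}[b_i] = \theta$ (using $b_i\in\{0,1\}$), so Jensen's inequality applied to $\bm{u} = \sum_i u_i\bm{e}_i$ gives $h(\bm{u}) \le \sum_i u_i h(\bm{e}_i) = \theta$. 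Multiplying by $\gamma_p$ establishes the upper bound $\mathbb{E}_{\bm{y}}|\bm{a}^*\bm{y}|^p \le \gamma_p\theta$, attained at the vertices $\bm{w} = \pm\bm{e}_i$, i.e. at $\|\bm{a}^*\bm{D}_0\|_0 = 1$.

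The delicate step I expect to be the main obstacle is ruling out maximizers other than the $1$-sparse points, since convexity alone only places \emph{a} maximizer at a vertex. To settle this I would restrict $h$ to an edge $\bm{u}(s) = (1-s)\bm{e}_i + s\bm{e}_j$, where a case split over $(b_i,b_j)$ gives $h(\bm{u}(s)) = \theta(1-\theta)\big[(1-s)^{p/2} + s^{p/2}\big] + \theta^2$; for $p>2$ the function $(1-s)^{p/2}+s^{p/2}$ is strictly convex in $s$ with boundary values $1$, so $h$ is strictly convex along every edge and $h(\bm{u}(s)) < \theta$ for $s\in(0,1)$. Consequently, for any non-vertex $\bm{u}$ the affine supporting function that would be needed for equality in Jensen must agree with $h$ on the segment joining two support vertices, contradicting this strict convexity; hence $h(\bm{u}) < \theta$ whenever $\|\bm{w}\|_0 \ge 2$, which yields the stated characterization that equality holds precisely when $\|\bm{a}^*\bm{D}_0\|_0 = 1$ and the maximal value is $\gamma_p\theta$.
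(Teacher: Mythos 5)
Your proof is correct, but after the common opening steps it takes a genuinely different route from the paper. Both arguments begin identically: rotate by $\bm{D}_0$, condition on the Bernoulli support $\Omega$, and apply the Gaussian absolute-moment identity (the paper's Lemma \ref{lem:gm}) to reduce the objective to $\gamma_p\,\mathbb{E}_{\Omega}\|\bm{w}_{\Omega}\|_2^p$ with $\bm{w}=\bm{D}_0^*\bm{a}$. From there the paper finishes in one line with the pointwise bound $\|\bm{w}_{\Omega}\|_2^p \leq \|\bm{w}_{\Omega}\|_2^2$ (valid since $\|\bm{w}_{\Omega}\|_2\leq 1$ and $p>2$), so that $\mathbb{E}_{\Omega}\|\bm{w}_{\Omega}\|_2^2=\theta\|\bm{w}\|_2^2=\theta$; its equality analysis is likewise pointwise per support, since equality forces $\|\bm{w}_{\Omega}\|_2\in\{0,1\}$ for every $\Omega$ of positive probability, and singleton supports $\Omega=\{i\}$ then rule out any $\bm{w}$ with two nonzero entries. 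You instead lift to the simplex via $u_i=w_i^2$, exploit convexity of $h(\bm{u})=\mathbb{E}_{\bm{b}}\langle\bm{b},\bm{u}\rangle^{p/2}$, and apply Jensen at the vertices, handling uniqueness through the standard equality-in-Jensen supporting-hyperplane argument combined with your edge computation $h((1-s)\bm{e}_i+s\bm{e}_j)=\theta^2+\theta(1-\theta)\left[(1-s)^{p/2}+s^{p/2}\right]$, whose strict convexity for $p>2$ precludes the affinity on $[\bm{e}_i,\bm{e}_j]$ that equality would require (note $h$ extends to a finite convex function on $\mathbb{R}^n$ via $|t|^{p/2}$, so subgradients exist everywhere and the argument is airtight even at boundary points of the simplex). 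I verified the edge formula and the case split over $(b_i,b_j)$; all steps check out. What the comparison buys: the paper's argument is shorter and its equality case is essentially immediate, while yours is heavier machinery for the same constants but makes the geometry transparent --- the feasible set becomes the probability simplex, the objective becomes convex, and the $1$-sparse maximizers are exactly its vertices --- a viewpoint that generalizes naturally to other supports and to non-Bernoulli sparsity models; your uniqueness step is where the extra work concentrates, and it is worth noting the paper's singleton-support trick would let you shortcut it if brevity were the goal.
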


\begin{proof}
	We prove this lemma by separating $\bm{b}$ and $\bm{g}$ 
	\begin{equation}
	\begin{aligned}
	&\mathbb{E}_{\bm{y}}|\bm{a}^*\bm{y}|^p =  \mathbb{E}_{\bm{b},\bm{g}}|\bm{a}^*\bm{D}_0(\bm{b} \circ \bm{g})|^p = \mathbb{E}_{\bm{b},\bm{g}}|(\bm{a}^*\bm{D}_0 \circ \bm{b})  \bm{g}|^p = \mathbb{E}_{\bm{g},\Omega}|(\bm{a}^*\bm{D}_0)_{\Omega} \bm{g}|^p \overset{(a)}{=} \gamma_p \mathbb{E}_{\Omega}\|(\bm{a}^*\bm{D}_0)_{\Omega}\|_2^p \\
	\overset{(b)}{\leq} & \gamma_p \mathbb{E}_{\Omega}\|(\bm{a}^*\bm{D}_0)_{\Omega}\|_2^2 = \gamma_p \theta, \notag
	\end{aligned}
	\end{equation}
	where $\Omega$ denotes the support of $\bm{b}$ and inequality (a) follows Lemma \ref{lem:gm}. The inequality in (b) is because $\|(\bm{a}^*\bm{D}_0)_{\Omega}\|_2 \leq 1$ and the equality holds only if $\|\bm{a}^*\bm{D}_0\|_0 = 1$.
\end{proof}

Extending Lemma \ref{lem:go_s} to $\mathbb{O}(n)$ results in the following lemma, which completes the proof for the correctness of the population objective. 
\begin{lem} \label{lem:go_st}
	The only global maximizers to 
	\begin{equation} 
	\begin{aligned}
	&\underset{\bm{A}}{\text{maximize}}
	& & \mathbb{E}_{\bm{X}_0}\|\bm{A}\bm{Y}\|_p^p & & &\text{ subject to } \bm{A} \in \mathbb{O}(n) \notag
	\end{aligned}
	\end{equation}
	are $\bm{A}^* = \bm{D}_0 \bm{P}$, where $\bm{P}$ is any signed permutation matrix.
\end{lem}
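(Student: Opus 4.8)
The plan is to reduce the optimization over $\mathbb{O}(n)$ to the per-column (sphere) problem already solved in Lemma \ref{lem:go_s}, exploiting the fact that the objective $\mathbb{E}_{\bm{X}_0}\|\bm{A}\bm{Y}\|_p^p$ decouples across the rows of $\bm{A}$. Writing $\bm{A} = [\bm{a}_1, \dots, \bm{a}_n]^*$ with rows $\bm{a}_i^*$, linearity of expectation gives
\begin{equation}
\mathbb{E}_{\bm{X}_0}\|\bm{A}\bm{Y}\|_p^p = \sum_{i=1}^n \mathbb{E}_{\bm{Y}}\|\bm{a}_i^*\bm{Y}\|_p^p = r\sum_{i=1}^n \mathbb{E}_{\bm{y}}|\bm{a}_i^*\bm{y}|^p, \notag
\end{equation}
where the last equality uses that the columns $\bm{y}_j$ of $\bm{Y}$ are i.i.d. Each term in the sum is exactly the sphere objective of Lemma \ref{lem:go_s}, so I would first invoke that lemma to bound each summand by $r\gamma_p\theta$, giving the global upper bound $r n \gamma_p \theta$ on the full objective.

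Next I would show this upper bound is attained \emph{only} by signed permutations. The orthogonality constraint $\bm{A} \in \mathbb{O}(n)$ means the rows $\bm{a}_i$ form an orthonormal basis, so $\bm{A}\bm{D}_0$ is itself orthogonal and each row $(\bm{a}_i^*\bm{D}_0)$ has unit $\ell_2$-norm. The key arithmetic point is that the per-row maximum $\gamma_p\theta$ in Lemma \ref{lem:go_s} is achieved if and only if $\|\bm{a}_i^*\bm{D}_0\|_0 = 1$, i.e. row $i$ of $\bm{A}\bm{D}_0$ is supported on a single coordinate (a $\pm 1$ times a standard basis vector). Since the sum of $n$ terms each bounded by $\gamma_p\theta$ equals $n\gamma_p\theta$ only when every term is individually maximal, attainment forces \emph{every} row of $\bm{A}\bm{D}_0$ to have exactly one nonzero entry. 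Combined with orthonormality of the rows, the distinct single-coordinate supports across rows must cover all $n$ coordinates, so $\bm{A}\bm{D}_0$ is a signed permutation matrix $\bm{P}$, whence $\bm{A} = \bm{P}\bm{D}_0^*$ and equivalently $\bm{A}^* = \bm{D}_0\bm{P}^*$. Relabeling $\bm{P}^*$ as a signed permutation $\bm{P}$ gives the claimed form.

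The main obstacle, and the step deserving the most care, is the uniqueness/rigidity argument: I must confirm that each row's support being a singleton, together with the global orthogonality, genuinely yields a permutation rather than, say, two rows sharing the same coordinate. This follows because orthonormal rows of $\bm{A}\bm{D}_0$ that are each supported on one coordinate and have unit norm must be supported on \emph{distinct} coordinates (two unit vectors supported on the same coordinate are parallel, violating orthogonality), so the map from rows to their support coordinates is injective and hence bijective on $\{1,\dots,n\}$. I would also verify the converse direction — that every signed permutation indeed attains the bound — which is immediate since $\bm{a}_i^*\bm{D}_0 = \pm\bm{e}_{\pi(i)}$ makes each summand equal to $\gamma_p\theta$. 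A minor bookkeeping point is reconciling the normalization (the $r$ and the expectation over $\bm{X}_0$ versus $\bm{y}$), but this is routine given the i.i.d. column structure.
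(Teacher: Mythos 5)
Your proposal is correct and follows essentially the same route as the paper's own proof: decompose the objective over the rows of $\bm{A}$ via linearity and the i.i.d.\ columns of $\bm{Y}$, invoke Lemma \ref{lem:go_s} to bound each summand by $\gamma_p\theta$ with equality forcing $\|\bm{a}_i^*\bm{D}_0\|_0=1$, and conclude that $\bm{A}\bm{D}_0$ is a signed permutation. Your treatment is in fact slightly more complete than the paper's, which asserts the final step tersely, whereas you explicitly verify that orthonormality forces the singleton supports to be distinct and that every signed permutation attains the bound.
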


\begin{proof} 
	We consider 
	\begin{equation} 
	\begin{aligned}
	&\underset{\bm{A}}{\text{maximize}}
	& & \mathbb{E}_{\bm{Y}}\|\bm{A}\bm{Y}\|_p^p & & &\text{ subject to } \bm{A} \in \mathbb{O}(n). \notag
	\end{aligned}
	\end{equation}
	
	Denoting $\bm{A} = [\bm{a}_1^*;\cdots;\bm{a}_m^*]$ where $\bm{a}_i \in \mathbb{R}^n$ and $\bm{Y} = [\bm{y}_1,\cdots,\bm{y}_r]$, we have
	\begin{equation}
	\begin{aligned}
	\mathbb{E}_{\bm{Y}} \|\bm{A}\bm{Y}\|_p^p = \sum_{i=1}^n \sum_{j=1}^r \mathbb{E}_{\bm{y}_j}|\bm{a}_i^*\bm{y}_j|_p^p = r \sum_{i=1}^n \mathbb{E}_{\bm{y}_j}|\bm{a}_i^*\bm{y}_j|_p^p \overset{(a)}{\leq} rn \gamma_p \theta.
	\end{aligned}
	\end{equation}
	Inequality (a) follows Lemma \ref{lem:go_s} and the equality holds only if $\|\bm{a}_i\bm{D}_0\|_0=1,\forall i$ and $\bm{A} \in \text{St}(n,m)$. Thus, the global maximum is achieved only if $\bm{A}^* = \bm{D}_0 \bm{P}$, where $\bm{P}$ is any signed permutation matrix.
\end{proof}

We then establish the concentration of the empirical objective. We first show a general heavy-tailed concentration bound over the Stiefel manifold, and then apply it to obtain the required sample complexity for concentration. 

\begin{lem}
	(Concentration on Stiefel manifold) \label{lem:c_st}
	Let $\bm{z}_1,\bm{z}_2,\cdots,\bm{z}_r \in \mathbb{R}^{n_1}$ be i.i.d. centered subgaussian random vectors, with $\bm{z}_i \equiv_d \bm{z} (1\leq i \leq r)$ such that 
	\begin{equation}
	\begin{aligned}
	&\mathbb{E}[z_i] = 0, &&\mathbb{P}(|z_i| > t) \leq 2\exp\left(-\frac{t^2}{2\sigma^2} \right). \notag
	\end{aligned}
	\end{equation}
	For fixed $\bm{Q} \in \text{St}(n,m)$, we define a function $f_{\bm{Q}}: \mathbb{R}^{n_1} \rightarrow \mathbb{R}^{d_1}$, such that 
	
	1. $f_{\bm{Q}}(\bm{z})$ is a heavy tailed process of $\bm{z}$, in the sense of 
	\begin{align} \label{eq:f1}
	\mathbb{P}(\|f_{\bm{Q}}(\bm{z})\|_2 \geq t)\leq 2\exp(-Ct^{2/p}).
	\end{align}
	
	2. The expectation $\mathbb{E}[f_{\bm{Q}}(\bm{z})]$ is bounded and $L_f$-Lipschitz, i.e.,
	\begin{equation}
	\begin{aligned} \label{eq:f2}
	&\|\mathbb{E}[f_{\bm{Q}}(\bm{z})]\| \leq B_f, &\text{and} &&\|\mathbb{E}[f_{\bm{Q}_1}(\bm{z})] - \mathbb{E}[f_{\bm{Q}_2}(\bm{z})]\| \leq L_f \|\bm{Q}_1 - \bm{Q}_2\|, \forall \bm{Q}_1, \bm{Q}_2 \in \text{St}(n,m).
	\end{aligned}
	\end{equation}
	
	3. Let $\bar{\bm{z}}$ be a truncated vector of $\bm{z}$, such that 
	\begin{equation}
	\begin{aligned} \label{eq:f3}
	&\bm{z} = \bar{\bm{z}} + \hat{\bm{z}}, &&\bar{z_i} = \left\{
	\begin{aligned}
	&z_i, &&\text{if } |z_i| \leq B \\
	&0, &&\text{otherwise}.
	\end{aligned}
	\right.
	\end{aligned} 
	\end{equation}
	with $B = 2\sigma\sqrt{\log(n_1r)}$. We further assume that 
	\begin{align} \label{eq:f4}
	\|f_{\bm{Q}}(\bar{\bm{z}})\| \leq &R_1, \quad \mathbb{E}\left[\|f_{\bm{Q}}(\bar{\bm{z}})\|^2 \right] \leq R_2 \notag\\
	\|f_{\bm{Q}_1}(\bar{\bm{z}}) - f_{\bm{Q}_2}(\bar{\bm{z}})\| &\leq \bar{L}_f\|\bm{Q}_1-\bm{Q}_2\|, \forall \bm{Q}_1, \bm{Q}_2 \in \text{St}(n,m).
	\end{align}
	Then for any $t>0$, we have
	\begin{equation}
	\begin{aligned}
	&\mathbb{P}\left(\underset{\bm{Q} \in \text{St}(n,m)}{\sup} \left\|\frac{1}{mr} \sum_{i=1}^r f_{\bm{Q}} (\bm{z}_i)   - \mathbb{E}[f_{\bm{Q}} (\bm{z}_i) ]\right\| > t \right) \\
	\leq &(n_1r)^{-1} + \exp \left(-\min \left\{ \frac{rm^2t^2}{64R_2}, \frac{3mrt}{32R_1}\right\} + nm\log\left(\frac{12(L_f + \bar{L}_f)}{t}\right) + \log(d_1)\right).
	\end{aligned}
	\end{equation}
	In other words, for any given $\delta$, whenever 
	$$r \geq Cn/\delta\log\left(\frac{(L_f + \bar{L}_f)d_1}{\delta}\right)\max\left\{R_2/(m\delta),R_1   \right\} + \frac{B_f}{\sqrt{n_1} \delta},$$
	we have
	$$\mathbb{P}\left(\underset{\bm{Q} \in \text{St}(n,m)}{\sup} \left\|\frac{1}{mr} \sum_{i=1}^r f_{\bm{Q}} (\bm{z}_i)   - \mathbb{E}[f_{\bm{Q}} (\bm{z}_i) ]\right\| > \delta \right) < (n_1r)^{-1} + (mn)^{-c\log\left(\frac{12(L_f + \bar{L}_f)}{\delta}\right)}.$$

\end{lem}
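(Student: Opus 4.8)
The plan is to control the supremum by combining four ingredients: a \emph{truncation} that tames the heavy tail in \eqref{eq:f1}, a vector Bernstein inequality for the resulting bounded summands, an $\epsilon$-net over $\text{St}(n,m)$ to make the deviation uniform in $\bm{Q}$, and the Lipschitz estimates \eqref{eq:f2} and \eqref{eq:f4} to pass from the net to the whole manifold. First I would fix the truncation \eqref{eq:f3}: with $B = 2\sigma\sqrt{\log(n_1 r)}$ the subgaussian tail gives $\mathbb{P}(|z_{i,j}| > B) \leq 2(n_1 r)^{-2}$, so a union bound over the $n_1 r$ coordinates shows that the event $\mathcal{E} = \{\bm{z}_i = \bar{\bm{z}}_i \text{ for all } i\}$ fails with probability at most $(n_1 r)^{-1}$; this is exactly the first term of the stated bound. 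On $\mathcal{E}$ the two empirical sums agree, so it suffices to study the truncated process, and I would split the deviation as
\[
\frac{1}{mr}\sum_i f_{\bm{Q}}(\bar{\bm{z}}_i) - \mathbb{E} f_{\bm{Q}}(\bm{z}) = \Big(\frac{1}{mr}\sum_i f_{\bm{Q}}(\bar{\bm{z}}_i) - \mathbb{E} f_{\bm{Q}}(\bar{\bm{z}})\Big) + \big(\mathbb{E} f_{\bm{Q}}(\bar{\bm{z}}) - \mathbb{E} f_{\bm{Q}}(\bm{z})\big),
\]
where the second, deterministic bias term is bounded using the heavy tail \eqref{eq:f1} together with the bound $B_f$ from \eqref{eq:f2}; this is the origin of the additive $B_f/(\sqrt{n_1}\delta)$ correction in the sample-complexity statement.

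Next I would make the stochastic term uniform over $\bm{Q}$. Since $\text{St}(n,m)$ lies in $\mathbb{R}^{n\times m}$, a volumetric estimate gives a Frobenius $\epsilon$-net $\mathcal{N}_\epsilon$ with $|\mathcal{N}_\epsilon| \leq (C/\epsilon)^{nm}$. For each fixed $\bm{Q} \in \mathcal{N}_\epsilon$ the variables $f_{\bm{Q}}(\bar{\bm{z}}_i)$ are i.i.d., bounded in norm by $R_1$ and with second moment at most $R_2$ by \eqref{eq:f4}, so a Bernstein inequality for $\mathbb{R}^{d_1}$-valued sums yields a deviation bound of the form $\exp\!\big(-\min\{rm^2t^2/(64R_2),\,3mrt/(32R_1)\} + \log d_1\big)$ at scale $t$, the $\log d_1$ being the dimensional price of the vector Bernstein bound. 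A union bound over $\mathcal{N}_\epsilon$ adds $nm\log(1/\epsilon)$ to the exponent. For an arbitrary $\bm{Q}$ I would then take the nearest net point $\bm{Q}'$ and bound the empirical gap $\|\tfrac{1}{mr}\sum_i f_{\bm{Q}}(\bar{\bm{z}}_i) - \tfrac{1}{mr}\sum_i f_{\bm{Q}'}(\bar{\bm{z}}_i)\| \leq \bar{L}_f \epsilon$ via \eqref{eq:f4} and the expectation gap by $L_f\epsilon$ via \eqref{eq:f2}. Choosing $\epsilon \asymp t/(L_f+\bar{L}_f)$ renders these discretization errors negligible at scale $t$ and converts the net cardinality into the claimed $nm\log\!\big(12(L_f+\bar{L}_f)/t\big)$ term. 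Assembling the truncation, Bernstein, and net contributions gives the probability bound; substituting $t=\delta$ and requiring $r$ large enough that the Bernstein exponent dominates $nm\log(\cdots)+\log d_1$ then yields the stated sample-complexity form.

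The main obstacle I anticipate is the bookkeeping around the truncation. One must check that the truncated summands genuinely inherit the boundedness, variance, and Lipschitz hypotheses in \eqref{eq:f4} with constants that grow no faster than the logarithmic truncation scale, and, crucially, that the bias term $\mathbb{E} f_{\bm{Q}}(\bar{\bm{z}}) - \mathbb{E} f_{\bm{Q}}(\bm{z})$ is controlled \emph{uniformly} in $\bm{Q}$ rather than only pointwise. Balancing the truncation level $B$, the net radius $\epsilon$, and the Bernstein scale $t$ so that every error term lands at the common order $\delta$, while keeping the covering exponent $nm\log(\cdots)$ subdominant to the Bernstein rate, is the delicate tuning that ultimately drives the final sample complexity.
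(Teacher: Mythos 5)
Your proposal follows essentially the same route as the paper's proof: the same truncation at $B=2\sigma\sqrt{\log(n_1r)}$ with a coordinate union bound yielding the $(n_1r)^{-1}$ term, the same three-way decomposition (stochastic truncated deviation, deterministic bias $\mathbb{E}f_{\bm{Q}}(\bar{\bm{z}})-\mathbb{E}f_{\bm{Q}}(\bm{z})$ controlled via $B_f$ to produce the $B_f/(\sqrt{n_1}\delta)$ correction, and the truncation-failure event), a vector Bernstein bound with parameters $R_1,R_2$ at each point of a $(6/\epsilon)^{nm}$-net on the Stiefel manifold, and the Lipschitz passage with $\epsilon\asymp t/(L_f+\bar{L}_f)$. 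The only cosmetic difference is bookkeeping: the paper finishes by setting $t=m\delta$ to absorb the $1/(mr)$ normalization, which your substitution $t=\delta$ should be adjusted to match.
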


\begin{proof}
	The proof is heavily based on the concentration of random vectors over the sphere (Theorem F.1 in \cite{qu2019geometric}). 
	
	We employ truncation to deal with the heavy-tailed phenomenon and thus the bounds for bounded random variables \cite{vershynin2018high} can be applied here. The truncation level is set as $B=2\sigma\sqrt{\log(n_1r)}$. We first separate the concentration into three parts
	
	\begin{align*}
	&\mathbb{P}\left( \sup_{\bm{Q} \in \text{St}(n,m)} \left\|\frac{1}{r}\sum_{i=1}^r f_{\bm{Q} }(\bm{z}_i)-\mathbb{E}f_{\bm{Q}}(\bm{z}) \right\| \geq t \right) \\
	\leq & \underbrace{\mathbb{P} \left(\sup_{\bm{Q} \in \text{St}(n,m)}  \left\|\frac{1}{r} \sum_{i=1}^r f_{\bm{Q}}(\bar{\bm{z}}_i) - \mathbb{E}f_{\bm{Q}}(\bar{\bm{z}})\right\| \geq \frac{t}{2}  \right) }_{\mathcal{T}_1} + \underbrace{\mathbb{P} \left(\sup_{\bm{Q} \in \text{St}(n,m)}  \left\|\mathbb{E} f_{\bm{Q}}(\bar{\bm{z}}) - \mathbb{E}f_{\bm{Q}}(\bm{z})\right\| \geq \frac{t}{2}  \right) }_{\mathcal{T}_2} + \underbrace{\mathbb{P} \left(\max_{1\leq i \leq r} \|\bm{z}_i\|_{\infty} \leq B \right)}_{\mathcal{T}_3}.
	\end{align*} 
	
	Denote the $\epsilon$-net on Stiefel manifold as $N(\epsilon)$.
	\paragraph{Bound $\mathcal{T}_3$}
	\begin{align*}
		\mathcal{T}_3 = \mathbb{P}\left(\max_{1\leq i \leq r} \|\bm{z}_i\|_{\infty} \geq B   \right) \leq n_1r\mathbb{P}(|z_{i,j}|\geq B) \leq \exp\left(-\frac{B^2}{2\sigma^2} + \log(n_1r)   \right) = (n_1r)^{-1}.
	\end{align*}
	
	\paragraph{Bound $\mathcal{T}_2$}
	Note that
	\begin{align*}
	\left\|\mathbb{E} f_{\bm{Q}}(\bm{z}) - \mathbb{E}f_{\bm{Q}}(\bm{z})\right\| \leq \|\mathbb{E} f_{\bm{Q}}(\bm{z}) \circ \bm{1}_{\bm{z} \neq \bar{\bm{z}}}\|_2 \leq \|\mathbb{E} f_{\bm{Q}}(\bm{z}) \|_2 \|\bm{1}_{\bm{z} \neq \bar{\bm{z}}}\|_2 \leq B_f r^{-1}n_1^{-\frac{1}{2}}.
	\end{align*}
	
	Thus, we have $\mathcal{T}_2 = 0$ if $r\geq 2B_f t^{-1}n_1^{-\frac{1}{2}}$.
	
	\paragraph{Bound $\mathcal{T}_1$}
	To bound $\mathcal{T}_1$, we first derive a bound for fixed $\bm{Q} \in \text{St}(n,m)$ and then take a union bound over $N(\epsilon)$. By Bernstein inequality for bounded random variables [Theorem 2.8.4 in \cite{vershynin2018high}], we have
	
	\begin{align*}
	\mathbb{P} \left(\left\|\mathbb{E} f_{\bm{Q}}(\bar{\bm{z}}_i) - \mathbb{E}f_{\bm{Q}}(\bm{z})\right\| \geq \frac{t}{2}  \right) \leq d_1 \exp\left(-\frac{rt^2}{8R_2+8R_1t/3}   \right).
	\end{align*}
	
	\paragraph{Covering over the Stiefel manifold} We know that 
	\begin{align*}
		\forall \bm{Q} \in \text{St}(n,m), \quad \exists \bm{Q}' \in N(\epsilon), \quad  \text{such that} \quad \|\bm{Q}-\bm{Q}'\| \leq \epsilon, \quad \text{and } |N(\epsilon)| \leq \left(\frac{6}{\epsilon}  \right)^{mn}.
	\end{align*}
	We have 
	\begin{align*}
		&\sup_{\bm{Q} \in \text{St}(n,m)}  \left\|\frac{1}{r} \sum_{i=1}^r f_{\bm{Q}}(\bar{\bm{z}}_i) - \mathbb{E}f_{\bm{Q}}(\bm{z})\right\| = \sup_{\bm{Q} \in \text{St}(n,m), \|\bm{e}\| \leq \epsilon }  \left\|\frac{1}{r} \sum_{i=1}^rf_{\bm{Q}+\bm{e}}(\bar{\bm{z}}_i) - \mathbb{E}f_{\bm{Q}+\bm{e}}(\bm{z})\right\| \\
		&\leq \sup_{\bm{Q}' \in N(\epsilon)}  \left\|\frac{1}{r} \sum_{i=1}^r f_{\bm{Q}'}(\bar{\bm{z}}_i) - \mathbb{E}f_{\bm{Q'}}(\bm{z})\right\| + \sup_{\bm{Q}' \in N(\epsilon), \|\bm{e}\| \leq \epsilon }  \left\|\frac{1}{r} \sum_{i=1}^r f_{\bm{Q}'+\bm{e}}(\bar{\bm{z}}_i) - \frac{1}{r} \sum_{i=1}^r f_{\bm{Q}'}(\bar{\bm{z}}_i)\right\|\\ &+ \sup_{\bm{Q}' \in N(\epsilon), \|\bm{e}\| \leq \epsilon }  \left\|\mathbb{E}f_{\bm{Q}'+\bm{e}}(\bm{z}) - \mathbb{E}f_{\bm{Q}'}(\bm{z})\right\|.
	\end{align*}
	
	Due to the Lipschitz condition in Equation (\ref{eq:f1}) and Equation (\ref{eq:f4}), we have 
	\begin{align*}
		\left\|\mathbb{E}f_{\bm{Q}'+\bm{e}}(\bm{z}) - \mathbb{E}f_{\bm{Q}'}(\bm{z})\right\| &\leq L_f\|\bm{e}\|, \\
		\left\|\frac{1}{r} \sum_{i=1}^r f_{\bm{Q}'+\bm{e}}(\bar{\bm{z}}_i) - \frac{1}{r} \sum_{i=1}^r f_{\bm{Q}'}(\bar{\bm{z}}_i)\right\| &\leq \| f_{\bm{Q}'+\bm{e}}(\bar{\bm{z}}) - f_{\bm{Q}'}(\bar{\bm{z}})\| \leq \bar{L}_f \|\bm{e}\|.
	\end{align*}
	
	This implies 
	\begin{align*} 
		\sup_{\bm{Q} \in \text{St}(n,m)}  \left\|\frac{1}{r} \sum_{i=1}^r f_{\bm{Q}}(\bar{\bm{z}}_i) - \mathbb{E}f_{\bm{Q}}(\bm{z})\right\| \leq  \sup_{\bm{Q}' \in N(\epsilon)}  \left\|\frac{1}{r} \sum_{i=1}^r f_{\bm{Q}'}(\bar{\bm{z}}_i) - \mathbb{E}f_{\bm{Q'}}(\bm{z})\right\| + (L_f+\bar{L}_f)\epsilon.
	\end{align*}
	
	Choose $\epsilon \leq \frac{t}{2(L_f+\bar{L}_f)}$, and we have
	\begin{align*}
	\mathcal{T}_1 \leq& \mathbb{P}\left(\sup_{\bm{Q}' \in N(\epsilon)}  \left\|\frac{1}{r} \sum_{i=1}^r f_{\bm{Q}'}(\bar{\bm{z}}_i) - \mathbb{E}f_{\bm{Q}'}(\bm{z})\right\| \geq t - (L_f+\bar{L}_f)\epsilon   \right) \\
	\leq& \mathbb{P}\left(\sup_{\bm{Q}' \in N(\epsilon)}  \left\|\frac{1}{r} \sum_{i=1}^r f_{\bm{Q}'}(\bar{\bm{z}}_i) - \mathbb{E}f_{\bm{Q}'}(\bm{z})\right\| \geq t/2   \right) \\
	\overset{(a)}{\leq}& |N(\epsilon)|  \mathbb{P}\left(\left\|\frac{1}{r} \sum_{i=1}^r f_{\bm{Q}}(\bar{\bm{z}}_i) - \mathbb{E}f_{\bm{Q}}(\bm{z})\right\| \geq t/2\right) \\
	\overset{(b)}{\leq}& \left(\frac{6}{\epsilon}\right)^{mn} d_1\exp\left(-\frac{pt^2}{32R_2+16R_1t/3}   \right) \\
	\leq& \exp \left(-\min \left\{ \frac{rt^2}{64R_2}, \frac{3rt}{32R_1}\right\} + nm\log\left(\frac{12(L_f + \bar{L}_f)}{t}\right) + \log(d_1)\right).
	\end{align*}
	Inequality (a) can be obtained by taking a union bound over the $\epsilon$-net and inequality (b) follows Lemma \ref{lem:e_st}.
	
	We finish the proof by taking $t = m\delta$.
	
\end{proof}

We then use Lemma \ref{lem:c_st} to establish concentration for $\ell_p$ objectives.

\begin{lem} (Concentration) \label{lem:c_o}
	Suppose $\bm{X} \in \mathbb{R}^{n \times r}$ follows $\mathcal{BG}(\theta)$. For any given $\theta \in (0,1)$ and $\delta > 0$, whenever
	$$r \geq C \delta^{-2} n\log ( n/\delta) (\theta n \log^2 n)^{\frac{p}{2}},$$
	we have 
	$$\underset{\bm{A} \in \mathbb{O}(n) }{\sup} \frac{1}{nr}\left| \|\bm{A}\bm{Y}\|_p^p - \mathbb{E}(\|\bm{A}\bm{Y}\|_p^p) \right| \leq \delta,$$
	with probability at least $1-r^{-1}$.
\end{lem}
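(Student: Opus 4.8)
The plan is to derive Lemma~\ref{lem:c_o} as a special case of the general Stiefel-manifold concentration result Lemma~\ref{lem:c_st}, taken with $m=n$ (so that $\text{St}(n,n)=\mathbb{O}(n)$), $n_1=n$, and a scalar-valued process ($d_1=1$). First I would fold the fixed dictionary into the optimization variable: since $\bm{D}_0\in\mathbb{O}(n)$, the substitution $\bm{Q}=\bm{D}_0^*\bm{A}^*$ is a bijection of $\mathbb{O}(n)$ onto itself with $\bm{A}\bm{Y}=\bm{Q}^*\bm{X}$, so that $\|\bm{A}\bm{Y}\|_p^p=\sum_{j=1}^r\|\bm{Q}^*\bm{x}_j\|_p^p$. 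I therefore take $\bm{z}_j=\bm{x}_j\in\mathbb{R}^n$ (i.i.d. centered subgaussian with $\sigma=1$, since each entry is $\mathcal{BG}(\theta)$) and define $f_{\bm{Q}}(\bm{z})=\|\bm{Q}^*\bm{z}\|_p^p$. Then the target quantity $\tfrac{1}{nr}\big|\,\|\bm{A}\bm{Y}\|_p^p-\mathbb{E}\|\bm{A}\bm{Y}\|_p^p\,\big|$ is exactly $\big|\tfrac{1}{mr}\sum_j f_{\bm{Q}}(\bm{z}_j)-\mathbb{E}f_{\bm{Q}}(\bm{z})\big|$ with $m=n$, and the conclusion of Lemma~\ref{lem:c_o} follows once the four hypotheses of Lemma~\ref{lem:c_st} are verified and the resulting complexity simplified.

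Next I would check the four conditions. For the heavy-tailed bound (\ref{eq:f1}): because $p>2$ we have $\|\bm{Q}^*\bm{z}\|_p\le\|\bm{Q}^*\bm{z}\|_2=\|\bm{z}\|_2$, and $\|\bm{z}\|_2^2=\sum_j x_j^2$ is a subexponential sum concentrating near $\theta n$, so $f_{\bm{Q}}(\bm{z})\le(\|\bm{z}\|_2^2)^{p/2}$ inherits a tail of the form $2\exp(-Ct^{2/p})$. For the bounded, Lipschitz expectation, Lemma~\ref{lem:go_s} and Lemma~\ref{lem:go_st} give $\mathbb{E}f_{\bm{Q}}(\bm{z})=\gamma_p\sum_i\mathbb{E}_{\Omega}\|(\bm{q}_i)_{\Omega}\|_2^p\le n\gamma_p\theta$, i.e. $B_f=\mathcal{O}(n\theta)$, while differentiating $\|\bm{Q}^*\bm{z}\|_p^p$ in $\bm{Q}$ and taking expectations supplies the Lipschitz constant $L_f$. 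For the truncation conditions (\ref{eq:f4}) I would use the prescribed level $B=2\sigma\sqrt{\log(n_1r)}=2\sqrt{\log(nr)}$ and bound $R_1=\sup_{\bm{Q}}\|\bm{Q}^*\bar{\bm{z}}\|_p^p$, $R_2=\sup_{\bm{Q}}\mathbb{E}\|\bm{Q}^*\bar{\bm{z}}\|_p^{2p}$, and the truncated Lipschitz constant $\bar{L}_f$.

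The main obstacle is obtaining sharp truncated moment bounds $R_1$ and $R_2$ with the correct dependence on $\theta$, $n$, and $\log n$; the naive estimate $\|\cdot\|_p\le\|\cdot\|_2\le\sqrt{n}\,B$ loses both the sparsity factor and the norm gap and gives the wrong exponent. Instead I would exploit that truncation caps $\|\bar{\bm{z}}\|_\infty\le B=2\sqrt{\log(nr)}$ while the Bernoulli mask keeps $\|\bar{\bm{z}}\|_0$ of order $\theta n$, so that $\|\bm{Q}^*\bar{\bm{z}}\|_2^2=\|\bar{\bm{z}}\|_2^2$ is of order $\theta n\log(nr)$; combined with $\|\cdot\|_p\le\|\cdot\|_2$ this yields $R_1$ of order $(\theta n\log^2 n)^{p/2}$, and splitting one factor of $\|\bm{Q}^*\bar{\bm{z}}\|_p^{p}$ against its mean $\mathcal{O}(n\theta)$ yields $R_2$ of order $n(\theta n\log^2 n)^{p/2}$. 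Isolating the $\theta^{p/2}$ contribution from the sparsity and the $\log^{p/2}n$ from the truncation, rather than the crude worst-case support of size $n$, is the delicate computation.

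Finally I would substitute these constants into the sample-complexity conclusion of Lemma~\ref{lem:c_st}. With $m=n$ and $d_1=1$, the term $R_2/(m\delta)$ dominates both $R_1$ and the lower-order $B_f/(\sqrt{n_1}\delta)$ correction, so the requirement $r\ge C\tfrac{n}{\delta}\log\!\big(\tfrac{(L_f+\bar{L}_f)d_1}{\delta}\big)\max\{R_2/(m\delta),R_1\}+\tfrac{B_f}{\sqrt{n_1}\delta}$ becomes, up to constants,
\[
r\ \ge\ C\,\frac{n}{\delta}\,\log\!\Big(\frac{n}{\delta}\Big)\cdot\frac{(\theta n\log^2 n)^{p/2}}{\delta}\ =\ C\,\delta^{-2}\,n\log(n/\delta)\,(\theta n\log^2 n)^{p/2},
\]
which is exactly the stated threshold. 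Under this choice the failure probability from Lemma~\ref{lem:c_st} is the truncation term $(n_1r)^{-1}=(nr)^{-1}$ plus the covering term, both at most $r^{-1}$, giving the claimed success probability $1-r^{-1}$ after taking $t=m\delta=n\delta$.
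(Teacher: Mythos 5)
Your proposal is correct and takes essentially the same route as the paper: both reduce the claim to the Stiefel-manifold concentration bound of Lemma \ref{lem:c_st} with $\bm{Q}=\bm{A}\bm{D}_0$ and $f_{\bm{Q}}(\bm{z})=\|\bm{Q}\bm{z}\|_p^p$, bound $R_1$ via $\|\bm{Q}\bar{\bm{z}}\|_p^p\le(\|\bar{\bm{z}}\|_2^2)^{p/2}\le(B^2\|\bar{\bm{z}}\|_0)^{p/2}$ combined with the Bernoulli support concentration of Lemma \ref{lem:ber}, bound $B_f\le n\theta\gamma_p$ via Lemma \ref{lem:go_s}, and note that only $\log(L_f+\bar{L}_f)=\Theta(\log n)$ enters the final count. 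The one local divergence is $R_2$: the paper expands $\mathbb{E}(\|\bm{Q}\bm{z}\|_p^p)^2$ into diagonal and cross terms and uses Gaussian moments to get $\gamma_{2p}\theta n+\theta^2\gamma_p^2(n^2-n)$, whereas you use the cruder sup-times-mean split $R_2\lesssim R_1\cdot B_f\sim n\theta(\theta n\log^2 n)^{p/2}$; your larger bound still yields the stated threshold after dividing by $m\delta=n\delta$ (it is precisely the term producing the $\delta^{-2}$ dependence), though note it inherits the same gloss as the paper's own $R_1$ estimate, which holds only with probability $1-\exp(-\theta n)$ rather than deterministically as the hypotheses of Lemma \ref{lem:c_st} formally require.
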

\begin{proof}
	Note that 
	\begin{align*}
	\|\bm{A}\bm{D}_0\bm{X}_0\|_p^p = \sum_{i=1}^r \|\bm{A}\bm{D}_0\bm{x}_i\|_p^p.
	\end{align*}
	To use Lemma \ref{lem:c_st}, we define $\bm{Q}=\bm{A}\bm{D}_0$, $\bm{z} = \bm{x}$ and $f_{\bm{Q}}(\bm{z}) = \|\bm{Q}\bm{z}\|_p^p$.
	
	\textbf{Bound $R_2$}
	
	Denote $\bm{Q} \in \mathbb{O}(n)$ and $\bm{Q} = [\bm{q}_1^*;\cdots;\bm{q}_n^*]$.
	\begin{align} \label{eq:r2_1}
	&\mathbb{E}\|f_{\bm{Q}}(\bar{\bm{z}})\|^2 \leq \mathbb{E}|f_{\bm{Q}}(\bm{z})|^2 = \mathbb{E}\|\bm{Q}\bm{z}\|_p^{2p} = \mathbb{E}(\|\bm{Q}\bm{z}\|_p^p)^2 = \mathbb{E} \left(\sum_{i=1}^n|\bm{q}_i^*\bm{z} |^p  \right)^2 = \sum_{i=1}^n\mathbb{E} |\bm{q}_i\bm{z}|^{2p} + \sum_{i=1}^n \sum_{j=1,j\neq i}^n \mathbb{E} |\bm{q}_i\bm{z}|^p\mathbb{E}|\bm{q}_j\bm{z}|^p \\ \notag
	\overset{(a)}{\leq} & \gamma_{2p}\theta n+ \theta^2\gamma_{p}^2(n^2-n) = R_2.
	\end{align}
	
	\textbf{Bound $R_1$}
	By Cauchy inequality, we have
	\begin{align} \label{eq:r1_1}
	\|\bm{Q}\bar{\bm{z}}\|_p^p \leq \|\bm{Q}\bar{\bm{z}}\|_2^p \leq \|\bm{Q}\|_2^p \|\bar{\bm{z}}\|_2^p = (\|\bar{\bm{z}}\|_2^2)^{p/2} \leq (B^2 \|\bar{\bm{z}}\|_0)^{p/2} \overset{(a)}{\leq} (B^2 4 \theta n \log(r))^{p/2}.
	\end{align}
	with probability at least $1-\exp(-\theta n)$ and (a) follows Lemma \ref{lem:ber}. 
	
	\textbf{Bound $L_f, \bar{L}_f$}
	Note that sample complexity bound in Lemma \ref{lem:c_st} is propositional to $\log(L_f+\bar{L}_f)$ and $\log(L_f+\bar{L}_f) = \Theta(\log n)$ as long as $L_f$ and $\bar{L}_f$ is in the polynomial of $n$. Thus, it is sufficient to bound them in the polynomials of $n$.
	By calculation, we have
	\begin{align} \label{eq:Lf1}
	\bar{L}_f \leq c_1 p n^{p+1} B^p, \quad L_f \leq c_2 n p.
	\end{align}
	
	\textbf{Bound $B_f$}
	
	Apply Lemma \ref{lem:go_s}, we have 
	\begin{align} \label{eq:Bf_1}
	B_f = \mathbb{E}\|\bm{Q}\bm{z}\|_p^p \leq \sum_{i=1}^n\mathbb{E}|\bm{q}_i\bm{z}|^p \leq n\theta\gamma_p.
	\end{align}
	
	We finish the proof by substituting (\ref{eq:r2_1}-\ref{eq:Bf_1}) into Lemma \ref{lem:c_st}.
\end{proof}

In the following two lemmas, we show the sharpness of $\ell_p$ objectives.

\begin{lem} \label{lem:shp}
	Suppose $\bm{q} \in \mathbb{S}^{n-1}, r = 0.5\min_{1\leq i\leq n}\{\|\bm{q}-\bm{e}_i\|_2^2,\|\bm{q}+\bm{e}_i\|_2^2\}$,  then we have
	\begin{align} \label{eq:shp_s}
	r \leq \frac{C_p}{\theta(1-\theta)}(\theta-\mathbb{E}_{\Omega}\|\bm{q}_{\Omega}\|_2^p),
	\end{align}
	where $C_p = (1- 2(0.5)^{\frac{p}{2}})^{-1}$.
\end{lem}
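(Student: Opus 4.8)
The plan is to collapse the entire claim to a one-dimensional statement about the scalar random variable $S_\Omega := \|\bm{q}_\Omega\|_2^2 = \sum_{i\in\Omega} q_i^2 \in [0,1]$ and then exploit the concavity of $x \mapsto x - x^{p/2}$. First I would rewrite both sides. Since $\bm{q}\in\mathbb{S}^{n-1}$, a direct computation gives $\|\bm{q}\mp\bm{e}_i\|_2^2 = 2\mp 2q_i$, so $r = 1-\max_i|q_i|$; write $m = \max_i|q_i|$. On the other side, because $\mathbb{E}_\Omega S_\Omega = \theta\|\bm{q}\|_2^2 = \theta$ and $\|\bm{q}_\Omega\|_2^p = S_\Omega^{p/2}$, the quantity to be bounded below becomes $\theta - \mathbb{E}_\Omega\|\bm{q}_\Omega\|_2^p = \mathbb{E}_\Omega\,\phi(S_\Omega)$ with $\phi(x) = x - x^{p/2}$. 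Thus the lemma reduces to the scalar estimate $\mathbb{E}_\Omega\,\phi(S_\Omega)\geq \tfrac{\theta(1-\theta)}{C_p}(1-m)$.

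The key step is a chord bound for $\phi$. For $p>2$ the map $x\mapsto x^{p/2}$ is convex, so $\phi$ is concave on $[0,1]$ with $\phi(0)=\phi(1)=0$. Comparing $\phi$ to the two chords joining $(0,0)$ and $(1,0)$ to the value at $x=\tfrac12$ gives $\phi(x)\geq 2\phi(\tfrac12)\min(x,1-x)$ for all $x\in[0,1]$. The point of anchoring at the midpoint is that $2\phi(\tfrac12) = 1 - 2(0.5)^{p/2} = C_p^{-1}$, which is exactly the constant in the statement; this is the observation that makes the constant come out tight.

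It then remains to lower bound $\mathbb{E}_\Omega\min(S_\Omega, 1-S_\Omega)$. Using $1-S_\Omega = \|\bm{q}_{\Omega^c}\|_2^2$ together with the elementary inequality $\min(a,1-a)\geq a(1-a)$, I would pass to a second-moment computation:
\begin{align*}
\mathbb{E}_\Omega\min(S_\Omega,1-S_\Omega)\geq \mathbb{E}_\Omega\big[S_\Omega(1-S_\Omega)\big] = \theta - \mathbb{E}_\Omega S_\Omega^2 = \theta(1-\theta)\Big(1-\sum_i q_i^4\Big),
\end{align*}
where the last equality follows from $\mathbb{E}_\Omega S_\Omega^2 = \theta^2 + \theta(1-\theta)\sum_i q_i^4$ (expand $S_\Omega = \sum_i b_i q_i^2$ with $b_i\sim\text{Ber}(\theta)$ independent). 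Finally $\sum_i q_i^4 \leq (\max_i q_i^2)\sum_i q_i^2 = m^2$, hence $1-\sum_i q_i^4 \geq 1-m^2 \geq 1-m = r$. Chaining the three bounds yields $\mathbb{E}_\Omega\,\phi(S_\Omega)\geq C_p^{-1}\theta(1-\theta)(1-m)$, which is the lemma after rearrangement.

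I expect the main obstacle to be matching the constant exactly rather than up to an absolute factor. The naive route of bounding $\phi(x)\geq c\,x(1-x)$ by a single quadratic fails near $x=1$ for small $p$ (already for $p=3$), so the argument must instead use the piecewise-linear min-envelope anchored at $x=\tfrac12$; recognizing that this anchor reproduces $C_p^{-1}=2\phi(\tfrac12)$ is what makes the bound sharp. The remaining ingredients, namely the spherical identity for $r$, the variance computation, and $\sum_i q_i^4\leq m^2$, are routine.
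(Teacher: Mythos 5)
Your proof is correct, and it takes a genuinely different route from the paper's. The paper argues coordinate-wise: assuming the minimum is attained at $i=n$, it conditions on whether $n \in \Omega$, asserts that the conditional expectation is maximized by a $2$-sparse configuration (its inequality (a), stated without proof), invokes the scalar inequality $f(\epsilon) \geq 2f(\tfrac{1}{\sqrt{2}})\epsilon^2$ on $[0,\tfrac{1}{\sqrt{2}}]$ (also asserted, via the unverified claim $g(\epsilon)\geq 0$), and then needs a separate rescaling argument for the regime $\epsilon > \tfrac{1}{\sqrt{2}}$. You instead collapse everything onto the scalar $S_\Omega=\|\bm{q}_\Omega\|_2^2\in[0,1]$ with $\mathbb{E}_\Omega S_\Omega=\theta$, apply concavity of $\phi(x)=x-x^{p/2}$ anchored at $x=\tfrac12$ --- note $2\phi(\tfrac12)=1-2(0.5)^{p/2}$ is exactly the paper's $f(\tfrac{1}{\sqrt{2}})=C_p^{-1}$, since $\epsilon=\tfrac{1}{\sqrt{2}}$ corresponds to $S=\tfrac12$ --- and finish with the exact second-moment identity $\mathbb{E}_\Omega S_\Omega^2=\theta^2+\theta(1-\theta)\sum_i q_i^4$ together with $\sum_i q_i^4\leq m^2$. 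I verified each link: the chord bound $\phi(x)\geq 2\phi(\tfrac12)\min(x,1-x)$, the pointwise bound $\min(a,1-a)\geq a(1-a)$ on $[0,1]$, and the variance computation are all correct, and the chain yields precisely the claimed constant. What your approach buys: no case split, no extremality claim about $2$-sparse vectors, a proof that works verbatim for all real $p>2$, and the correct identification $r=1-\max_i|q_i|$ (the paper's ``$r=|1-q_n|^2=\epsilon^2$'' is off, since $r=1-q_n$ while $\epsilon^2=1-q_n^2$; the slack is only absorbed there by the extra factor $2$ the paper carries). What the paper's route buys is structural information you discard --- it identifies the $2$-sparse configurations as the extremizers, which explains where the constant $C_p$ is tight --- but as a proof of the stated inequality yours is more complete than the one in the paper.

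One inaccuracy in your closing commentary, though it does not affect the proof: the single quadratic bound $\phi(x)\geq C_p^{-1}x(1-x)$ in fact holds on $[0,1]$ for every $p>2$, including $p=3$. For $p\geq 4$ one has $x^{p/2-1}\leq x$, so $\phi(x)/(x(1-x))=(1-x^{p/2-1})/(1-x)\geq 1$; for $2<p<4$ the ratio is nonincreasing with infimum $\tfrac{p}{2}-1$ at $x\to 1$, and $\tfrac{p}{2}-1\geq 1-2^{1-p/2}=C_p^{-1}$ follows from $e^{-u\ln 2}\geq 1-u$ with $u=\tfrac{p}{2}-1$. So the min-envelope anchored at $\tfrac12$ is not forced by a failure of the quadratic; it is simply one convenient device that reproduces the paper's constant.
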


\begin{proof}

	Without loss of generality, we assume $r = 0.5\min\{\|\bm{q}-\bm{e}_n\|_2^2,\|\bm{q}+\bm{e}_n\|_2^2\}$ and thus $r = |1-q_n|^2 = \epsilon^2$. 
	
	We first show (\ref{eq:shp_s}) holds when $\epsilon \leq \frac{1}{\sqrt{2} }$. Let $\Omega' = \Omega\backslash\{n\}$, we have
	\begin{align*}
	&\mathbb{E}_{\Omega}\|\bm{q}_{\Omega}\|_2^p = \mathbb{E}_{\Omega}(\|\bm{q}_{\Omega}\|_2^2)^{\frac{p}{2}} = \theta \mathbb{E}_{\Omega'} (\|\bm{q}_{\Omega'}\|^2+1-\epsilon^2)^{p/2} + (1-\theta )\mathbb{E}_{\Omega'} (\|\bm{q}_{\Omega'}\|^2)^{p/2} \\
	\overset{(a)}{\leq} &\theta(1-\theta)(\epsilon^p + (1-\epsilon^2)^{p/2}) + (1-\theta)^2 \cdot 0 + \theta^2 \cdot 1. 
	\end{align*}
	Equality in (a) holds only if $\|\bm{q}\|_0 = 2$.
	
	Define $f(\epsilon) = 1 - \epsilon^p - (1-\epsilon^2)^{p/2}$ and $g(\epsilon) = f(\epsilon) - 2f(\frac{1}{\sqrt{2}})\epsilon^2$. For $0 \leq \epsilon \leq \frac{1}{\sqrt{2}}$, $g(\epsilon) \geq 0$ and the equality holds only if $\epsilon = 0$ or $\epsilon = \frac{1}{\sqrt{2}}$.
	
	Thus, we have 
	\begin{align*}
		&\inf_{\bm{q} \in \mathbb{S}^{n-1}}\theta - \mathbb{E}_{\Omega}\|\bm{q}_{\Omega}\|_2^p = \theta - \sup_{\bm{q} \in \mathbb{S}^{n-1}} \mathbb{E}_{\Omega}\|\bm{q}_{\Omega}\|_2^p
		= \theta(1-\theta)f(\epsilon) \geq 2\theta(1-\theta) f(\frac{1}{\sqrt{2}})\epsilon^2 = 2\theta(1-\theta) f(\frac{1}{\sqrt{2}}) r,
	\end{align*}
	which implies 
	\begin{align*}
		r \leq \frac{C_p}{2\theta(1-\theta)}(\theta-\mathbb{E}\|\bm{q}_{\Omega}\|_2^p), 
	\end{align*}
	for $\epsilon \leq \frac{1}{\sqrt{2}}$, where $C_p = f^{-1}(\frac{1}{\sqrt{2}})$.
	
	Next, we show that (\ref{eq:shp_s}) holds when $\epsilon > \frac{1}{\sqrt{2}}$. As $r\leq 1$ always holds, we can take the smallest $c$ such that
	\begin{align} \label{eq:ep_g}
		1 \leq \frac{c}{\theta(1-\theta)} (\theta - \mathbb{E}_{\Omega}\|\bm{q}_{\Omega}\|_2^p),
	\end{align} 
	holds for $\epsilon > \frac{1}{\sqrt{2}}$.
	
	Denote $\bm{w} = \bm{q}_{1:n-1}$ and $g(\bm{w}) = \mathbb{E}_{\Omega}\|[\bm{w},\sqrt{1-\|\bm{w}\|_2^2}]_{\Omega}\|_2^p$. Then, we have
	\begin{align*}
		\theta - \mathbb{E}_{\Omega}\|\bm{q}_{\Omega}\|_2^p = \theta - g(\bm{w}) \geq \theta -  g\left(\frac{\bm{w}}{\sqrt{2}\epsilon}\right) \geq 2\theta(1-\theta)f(\frac{1}{\sqrt{2}})\left(\frac{1}{\sqrt{2}}\right)^2 \geq f(\frac{1}{\sqrt{2}}) \theta(1-\theta).
	\end{align*}
	
	Thus, we take $c=C_p$ in (\ref{eq:ep_g}) to finish the proof.

\end{proof}

\begin{lem} (Sharpness)\label{lem:shp_o}
	Suppose $\bm{Q} \in \mathbb{O}(n)$, and then $\exists \bm{P} \in \text{SP}(n)$ such that
	\begin{align}\label{eq:shp}
		\theta - \frac{1}{nr\gamma_p}  \mathbb{E}_{\bm{Y}} \|\bm{Q}\bm{Y}\|_p^p \geq \frac{\theta(1-\theta)}{2nC_p} \|\bm{Q} - \bm{P}\|_2^2,
	\end{align}
	where $C_p = (1 - 2(0.5)^{\frac{p}{2}})^{-1}$.
\end{lem}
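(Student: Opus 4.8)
The plan is to reduce the matrix statement to the per-row (per-column) sharpness already packaged in Lemma~\ref{lem:shp}, and then stitch the rows back into a signed permutation. First I would use orthogonal invariance to assume $\bm{D}_0 = \bm{I}$ (as the paper does for the convergence analysis), so that $\bm{Y} = \bm{X}_0$ and, writing the rows of $\bm{Q}$ as $\bm{q}_1^*,\dots,\bm{q}_n^*$, we have $\bm{Q}\bm{Y} = \bm{Q}\bm{X}_0$. Applying Lemma~\ref{lem:go_s} to each row (with $\bm{D}_0=\bm{I}$) gives $\mathbb{E}_{\bm{x}}|\bm{q}_i^*\bm{x}|^p = \gamma_p\,\mathbb{E}_{\Omega}\|(\bm{q}_i)_{\Omega}\|_2^p$, so summing over the $n$ rows and the $r$ independent columns of $\bm{X}_0$ yields the identity $\frac{1}{nr\gamma_p}\mathbb{E}_{\bm{Y}}\|\bm{Q}\bm{Y}\|_p^p = \frac{1}{n}\sum_{i=1}^n \mathbb{E}_{\Omega}\|(\bm{q}_i)_{\Omega}\|_2^p$. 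Hence the left-hand side of (\ref{eq:shp}) equals $\frac{1}{n}\sum_{i=1}^n\big(\theta - \mathbb{E}_{\Omega}\|(\bm{q}_i)_{\Omega}\|_2^p\big)$, an average of the per-row sharpness gaps.

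Next I would apply Lemma~\ref{lem:shp} to each unit row $\bm{q}_i$. Setting $r_i = 1 - \max_j|q_{ij}| = \tfrac12\|\bm{q}_i - \bm{p}_i\|_2^2$, where $\bm{p}_i = s_i\bm{e}_{j(i)}$ is the signed standard basis vector nearest to $\bm{q}_i$, the lemma gives $\theta - \mathbb{E}_{\Omega}\|(\bm{q}_i)_{\Omega}\|_2^p \ge \frac{\theta(1-\theta)}{C_p}r_i = \frac{\theta(1-\theta)}{2C_p}\|\bm{q}_i - \bm{p}_i\|_2^2$. Summing and dividing by $n$ produces $\theta - \frac{1}{nr\gamma_p}\mathbb{E}_{\bm{Y}}\|\bm{Q}\bm{Y}\|_p^p \ge \frac{\theta(1-\theta)}{2nC_p}\sum_{i=1}^n\|\bm{q}_i - \bm{p}_i\|_2^2$. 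Since the paper's $\|\cdot\|_2$ on matrices is the entrywise $2$-norm, i.e. the Frobenius norm, the last sum equals $\|\bm{Q} - \bm{P}\|_2^2$, where $\bm{P}$ is the matrix with rows $\bm{p}_i^*$, \emph{provided} $\bm{P}\in\text{SP}(n)$; this would close the argument with exactly the claimed constant.

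The crux, and the step I expect to be the main obstacle, is therefore showing that the row-wise nearest-neighbour assignment $i\mapsto j(i)$ is a bijection, so that $\bm{P}$ really is a signed permutation. The natural lever is orthogonality: each column of $\bm{Q}$ has unit norm, so $\sum_i q_{ij}^2 = 1$ for every $j$. If two distinct rows $i_1,i_2$ had their maximal-magnitude entry in the same column $k$ with $|q_{i_1 k}|,|q_{i_2 k}|>1/\sqrt{2}$, then $q_{i_1 k}^2 + q_{i_2 k}^2 > 1$ would contradict the unit column norm; hence the assignment is injective among all rows whose dominant entry exceeds $1/\sqrt{2}$. The delicate part is the remaining rows, those with $\max_j|q_{ij}|\le 1/\sqrt{2}$: here two rows may genuinely peak at the same column, so the literal nearest-neighbour map need not be a permutation, and one cannot simply identify $\sum_i\|\bm{q}_i-\bm{p}_i\|_2^2$ with the distance to the \emph{nearest} signed permutation (the latter can be strictly larger).

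To finish I would complete the injective partial assignment obtained above to a full signed permutation on the leftover columns and control the extra cost. Concretely, writing $A=\sum_i\max_j|q_{ij}|$ and $B=\max_{\bm{P}\in\text{SP}(n)}\langle|\bm{Q}|,\bm{P}\rangle$ for the optimal matching value, the target reduces to the clean inequality $2A - B \le n$, which uses the factor-of-two margin between the statement of Lemma~\ref{lem:shp} and the sharper bound actually proved in its proof. I would establish $2A-B\le n$ from the fact that $M_{ij}=q_{ij}^2$ is doubly stochastic (rows and columns sum to $1$ by orthogonality), so that rows with large $\max_j|q_{ij}|$ nearly exhaust their columns and thus cannot participate in collisions, forcing any matching loss $A-B$ to be paid for by rows whose maxima are small. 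Since for the use in Theorem~\ref{thm:cor_g} it suffices to exhibit a single $\bm{P}\in\text{SP}(n)$, and the desired inequality only becomes easier as $\|\bm{Q}-\bm{P}\|_2$ shrinks, I would take $\bm{P}$ to be this completed (near-optimal) signed permutation, which delivers (\ref{eq:shp}) with the stated constant.
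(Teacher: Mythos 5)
Your opening steps coincide exactly with the paper's proof: the paper also writes $\frac{1}{nr\gamma_p}\mathbb{E}_{\bm{Y}}\|\bm{Q}\bm{Y}\|_p^p = \frac{1}{n}\sum_{i=1}^n \mathbb{E}_\Omega\|\bm{q}_{i,\Omega}\|_2^p$ and applies Lemma~\ref{lem:shp} row by row to obtain the lower bound $\frac{\theta(1-\theta)}{2nC_p}\sum_i \min_{j,\pm}\|\bm{q}_i \mp \bm{e}_j\|_2^2$. Where you diverge is the stitching step: the paper simply asserts the identity $\sum_i \min_{j,\pm}\|\bm{q}_i \mp \bm{e}_j\|_2^2 = \|\bm{Q}-\bm{P}\|_2^2$ by citing equation (A.36) of \cite{zhai2019complete}, whereas you correctly flag that the row-wise nearest signed basis vectors need not form a signed permutation. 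Your worry is genuine: for the orthogonal matrix with rows $(\tfrac{1}{\sqrt2},\tfrac12,\tfrac12)$, $(\tfrac{1}{\sqrt2},-\tfrac12,-\tfrac12)$, $(0,\tfrac{1}{\sqrt2},-\tfrac{1}{\sqrt2})$, the first two rows both peak strictly at column $1$, so the sum of row-wise minima ($\approx 1.76$) is strictly smaller than $\min_{\bm{P}\in\text{SP}(n)}\|\bm{Q}-\bm{P}\|_F^2$ ($\approx 2.17$), and the naive identification fails. Your observation that Lemma~\ref{lem:shp}'s proof actually establishes the stronger constant $\frac{C_p}{2\theta(1-\theta)}$ (at least in the regime $\epsilon\le\frac{1}{\sqrt2}$; for far rows the weaker constant suffices anyway since those rows need no matched value), and your reduction of the lemma to the inequality $2A - B \le n$ with $A=\sum_i\max_j|q_{ij}|$ and $B=\max_{\bm{P}\in\text{SP}(n)}\langle|\bm{Q}|,\bm{P}\rangle$, are both correct.

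The genuine gap is that $2A - B \le n$ is never proved, and the sketched justification via double stochasticity of $(q_{ij}^2)$ does not suffice as stated. Local per-column accounting fails: if two rows collide at a column with maxima $m_1=m_2=0.7$ (feasible, since $0.49+0.49\le 1$), their local contribution is $2(m_1+m_2)-m_1 = 2.1 > 2$, so the displaced row \emph{must} recoup value from elsewhere in the matching, and whether it always can is exactly the global Hall/augmenting-path question your sketch waves at ("forcing any matching loss to be paid for by rows whose maxima are small") without resolving. Note also that $2A-B\le n$ holds with \emph{equality} at every $\bm{Q}\in\text{SP}(n)$, so there is no slack for crude bounds; any proof must be tight, and your injectivity argument only covers rows with $\max_j|q_{ij}|>\tfrac{1}{\sqrt2}$, leaving the mid-range rows (where collisions genuinely occur, as in the example above) unhandled. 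The paper avoids this entire difficulty by delegating the row-stitching to the cited result in \cite{zhai2019complete}; without either that citation or a complete proof of your combinatorial inequality, your argument is incomplete at precisely the step you yourself identified as the crux.
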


\begin{proof}
	Denote $\bm{Q} = [\bm{q}_1;\cdots;\bm{q}_n]$, we have
	\begin{align*}
		&\theta - \frac{1}{nr\gamma_p} \mathbb{E}_{\bm{Y}} \|\bm{Q}\bm{Y}\|_p^p = \theta - \frac{1}{n} \sum_{i=1}^n\mathbb{E}_{\Omega} \|\bm{q}_{i,\Omega}\|_2^p = \frac{1}{n}\sum_{i=1}^n(\theta -  \mathbb{E}_{\Omega} \|\bm{q}_{i,\Omega}\|_2^p) \overset{(a)}{\geq} \frac{\theta(1-\theta)}{2nC_p} \sum_{i=1}^n \min_{1\leq j\leq n} \{\|\bm{q}_i-\bm{e}_j\|_2^2,\|\bm{q}_i+\bm{e}_j\|_2^2\} \\
		\overset{(b)}{=} & \frac{\theta(1-\theta)}{2nC_p} \|\bm{Q}-\bm{P}\|_2^2
	\end{align*}
	where (a) follows Lemma \ref{lem:shp} and (b) follows (A.36) in \cite{zhai2019complete}.
\end{proof}

We give a proof for Theorem \ref{thm:ex_re} through Theorem \ref{thm:cor_g}, assisted by Lemma \ref{lem:c_o} and Lemma \ref{lem:shp_o}. 

\begin{thm} 
	Let $\bm{X} \in \mathbb{R}^{n \times r}, x_{i,j} \sim \mathcal{BG}(\theta)$ with $\theta \in (0,1)$, $\bm{D}_0 \in \mathbb{O}(n)$ is an orthogonal dictionary, and $\bm{Y} = \bm{D}_0\bm{X}$. Suppose $\hat{\bm{A}}$ is a global maximizer to 
	\begin{equation} 
	\begin{aligned}
	&\underset{\bm{A}}{\text{maximize}}
	& & \|\bm{A}\bm{Y}\|_p^p \text{ subject to } \bm{A} \in \mathbb{O}(n). \notag
	\end{aligned}
	\end{equation}
	Provided that the sample size $r = \Omega\left(\theta \delta^{-2} n\log (n/\delta) (n \log^2 n)^{\frac{p}{2}}\right)$, then for  $\delta > 0$, there exists a signed permutation $\bm{\Pi}$, such that
	\begin{align*}
	\frac{1}{n}\left\|\hat{\bm{A}}^* - \bm{D}_0\bm{\Pi} \right\|_F^2 \leq C_{\theta} \delta,
	\end{align*}
	with probability at least $1-r^{-1}$ and $C_{\theta}$ is a constant that depends on $\theta$.
\end{thm}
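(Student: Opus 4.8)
The plan is to instantiate the general correctness result of Theorem \ref{thm:cor_g} with the normalized objective $f(\bm{A},\bm{Y}) = \frac{1}{nr}\|\bm{A}\bm{Y}\|_p^p$, so that the entire argument reduces to verifying its two hypotheses—concentration and sharpness—which are exactly the content of Lemma \ref{lem:c_o} and Lemma \ref{lem:shp_o}. First I would set $g(\bm{A}) = \mathbb{E}_{\bm{Y}} f(\bm{A},\bm{Y}) = \frac{1}{nr}\mathbb{E}_{\bm{Y}}\|\bm{A}\bm{Y}\|_p^p$ and record from Lemma \ref{lem:go_st} that $g$ attains its maximum value $\gamma_p\theta$ exactly on the signed permutations of $\bm{D}_0^*$, i.e. at $\bm{A} = \bm{D}_0^*\bm{\Pi}^*$ with $\bm{\Pi} \in \text{SP}(n)$, which identifies the maximizer set entering the sharpness bound.

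For the concentration hypothesis \eqref{cond:con}, Lemma \ref{lem:c_o} directly gives $\sup_{\bm{A}\in\mathbb{O}(n)}|f(\bm{A},\bm{Y}) - g(\bm{A})| \leq \delta$ with probability at least $1 - r^{-1}$ as soon as $r \geq C\delta^{-2}n\log(n/\delta)(\theta n\log^2 n)^{p/2}$, which fixes the threshold $\Phi(\delta,n)$ and hence the claimed sample complexity. For the sharpness hypothesis \eqref{cond:shp}, I would substitute $\bm{Q} = \bm{A}\bm{D}_0$ into Lemma \ref{lem:shp_o} (observing that $\bm{A} \in \mathbb{O}(n)$ iff $\bm{Q} \in \mathbb{O}(n)$ since $\bm{D}_0 \in \mathbb{O}(n)$) and multiply its inequality by $\gamma_p$ to obtain $\gamma_p\theta - g(\bm{A}) \geq \frac{\gamma_p\theta(1-\theta)}{2nC_p}\|\bm{Q} - \bm{P}\|_F^2$ for some $\bm{P} \in \text{SP}(n)$. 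Because the Frobenius norm is invariant under right-multiplication by the orthogonal matrix $\bm{D}_0^*$, I can rewrite $\|\bm{Q} - \bm{P}\|_F = \|\bm{A}\bm{D}_0 - \bm{P}\|_F = \|\bm{A} - \bm{P}\bm{D}_0^*\|_F = \|\bm{A}^* - \bm{D}_0\bm{P}^*\|_F$, so that setting $\bm{\Pi} = \bm{P}^*$ establishes \eqref{cond:shp} with $\alpha = \frac{\gamma_p\theta(1-\theta)}{2nC_p}$.

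With both hypotheses verified, Theorem \ref{thm:cor_g} yields $\|\hat{\bm{A}}^* - \bm{D}_0\bm{\Pi}\|_F^2 \leq 2\alpha^{-1}\delta = \frac{4nC_p}{\gamma_p\theta(1-\theta)}\delta$. Dividing through by $n$ gives $\frac{1}{n}\|\hat{\bm{A}}^* - \bm{D}_0\bm{\Pi}\|_F^2 \leq C_\theta\delta$ with $C_\theta = \frac{4C_p}{\gamma_p\theta(1-\theta)}$, which is positive (since $C_p = (1 - 2(0.5)^{p/2})^{-1} > 0$ for $p>2$) and depends only on $\theta$ and the fixed $p$, completing the proof.

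The step demanding the most care is the bookkeeping of normalizations in the sharpness bound. The constant $\alpha$ delivered by Lemma \ref{lem:shp_o} scales like $1/n$, so $2\alpha^{-1}\delta$ scales like $n\delta$, and it is only after dividing by $n$ that the dimension-free bound $C_\theta\delta$ emerges; the apparent $n$ in the intermediate estimate is therefore essential rather than an error. Matching the factors of $\gamma_p$, $n$, and $r$ between the concentration statement (normalized by $nr$) and the sharpness statement (normalized so that $\max g = \gamma_p\theta$) is the one place where a slip would silently corrupt the final constant, so I would track these normalizations explicitly rather than hide them inside $\Omega(\cdot)$ notation.
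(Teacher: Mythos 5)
Your proposal is correct and follows essentially the same route as the paper's proof: instantiate Theorem \ref{thm:cor_g} with the $\frac{1}{nr}$-normalized objective, verify the concentration hypothesis via Lemma \ref{lem:c_o} and the sharpness hypothesis via Lemma \ref{lem:shp_o} with $\alpha = \frac{\gamma_p\theta(1-\theta)}{2nC_p}$, and conclude with $C_\theta = \frac{4C_p}{\gamma_p\theta(1-\theta)}$, which is exactly the paper's constant. Your explicit substitution $\bm{Q}=\bm{A}\bm{D}_0$ together with the orthogonal-invariance identity $\|\bm{A}\bm{D}_0-\bm{P}\|_F = \|\bm{A}^*-\bm{D}_0\bm{P}^*\|_F$ only makes precise a bookkeeping step the paper leaves implicit.
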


\begin{proof}
	To use Theorem \ref{thm:cor_g}, we have to check the concentration of empirical objective and sharpness of the population objective.
	
	\paragraph{Concentration} From Lemma \ref{lem:c_o}, for any $\delta > 0$, whenever $r > \Phi(\delta,n) = \Omega\left(\theta \delta^{-2} n\log (n/\delta) (n \log^2 n)^{\frac{p}{2}}\right)$, we have
	$$\underset{\bm{A} \in \mathbb{O}(n) }{\sup} \frac{1}{nr}\left| \|\bm{A}\bm{Y}\|_p^p - \mathbb{E}(\|\bm{A}\bm{Y}\|_p^p) \right| \leq \delta,$$
	with probability at least $1-r^{-1}$.
	
	\paragraph{Sharpness} 
	From Lemma \ref{lem:shp_o}, $\forall \bm{A} \in \mathbb{O}(n)$, we
	\begin{align*}
		\frac{1}{nr}(\mathbb{E}(\|\bm{D}_0^*\bm{Y}\|_p^p) - \mathbb{E}(\|\bm{A}\bm{Y}\|_p^p)) = \gamma_p\theta - \frac{1}{nr}\mathbb{E}(\|\bm{A}\bm{Y}\|_p^p) \geq \frac{\gamma_p\theta(1-\theta)}{2nC_p} \|\bm{A} - \bm{P}\|_2^2
	\end{align*}
	Let $C_{\theta} = \frac{4C_p}{\theta(1-\theta)\gamma_{p}}$, by Theorem \ref{thm:cor_g}, we have 
	\begin{align*}
	\frac{1}{n} \|\hat{\bm{A}}^* - \bm{D}_0\bm{\Pi}\|_F^2 \leq C_{\theta} \delta,
	\end{align*}
	whenever $p\geq \Omega\left(\theta \delta^{-2} n\log (n/\delta) (n \log^2 n)^{\frac{p}{2}}\right)$.
\end{proof}

\subsection{Proof of Theorem \ref{thm:c_obj_n}} \label{sec:noisy}
In this subsection, we prove Theorem \ref{thm:c_obj_n} via Theorem \ref{thm:cor_g}. Specifically, the concentration and sharpness conditions are established in Lemma \ref{lem:con_obj_n} and Lemma \ref{lem:shp_o_n} respectively. Then we show the global maximizers of all $\ell_p$-based formulation are very close to the true dictionary with high probability via Theorem \ref{thm:cor_g}.

\begin{lem} (Robustness under Gaussian noise) \label{lem:go_st_n}
	Let $\bm{X} \in \mathbb{R}^{n \times r}, x_{i,j} \sim \mathcal{BG}(\theta)$, $\bm{D}_0 \in \mathbb{O}(n)$ is an orthogonal dictionary, $\bm{Y}_N = \bm{D}_0\bm{X} + \bm{G}$, and $\bm{G} \in \mathbb{R}^{n \times r}$ with $G_{i,j} \sim \mathcal{N}(0,\eta^2)$.
	The only global maximizers to 
	\begin{equation}
	\begin{aligned}
	&\underset{\bm{A}}{\text{maximize}}
	& & \mathbb{E}_{\bm{X}_0,\bm{G}}\|\bm{A}\bm{Y}_N\|_p^p \text{ subject to } \bm{A} \in \mathbb{O}(n), \notag
	\end{aligned}
	\end{equation}
	are $\bm{A}^* = \bm{D}_0 \bm{\Pi}$, where $\bm{\Pi}$ is any signed permutation matrix.  
\end{lem}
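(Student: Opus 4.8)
The plan is to exploit the separability of $\|\cdot\|_p^p$ together with the rotation invariance of the Gaussian noise to reduce the population objective to a coordinate-wise variance computation, and then to a convex maximization over the probability simplex, paralleling the noiseless Lemma \ref{lem:go_st} but with an additive term $\eta^2$ appearing inside the power. First I would substitute $\bm{Q}=\bm{A}\bm{D}_0$, which ranges over all of $\mathbb{O}(n)$ as $\bm{A}$ does, and write $\bm{A}\bm{Y}_N = \bm{Q}\bm{X}+\bm{A}\bm{G}$. Denoting $\bm{Q}=[\bm{q}_1^*;\cdots;\bm{q}_n^*]$ with each row a unit vector, the $(i,j)$ entry equals $\bm{q}_i^*\bm{x}_j+\bm{a}_i^*\bm{g}_j$. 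Conditioned on the support $\Omega$ of the Bernoulli mask of $\bm{x}_j$, the term $\bm{q}_i^*\bm{x}_j$ is centered Gaussian with variance $\|\bm{q}_{i,\Omega}\|_2^2$, while $\bm{a}_i^*\bm{g}_j$ is an independent $\mathcal{N}(0,\eta^2)$ variable (using $\|\bm{a}_i\|_2=1$). Hence $(\bm{A}\bm{Y}_N)_{ij}\mid\Omega$ is $\mathcal{N}(0,\|\bm{q}_{i,\Omega}\|_2^2+\eta^2)$, and by the Gaussian absolute-moment identity (Lemma \ref{lem:gm}, as used in Lemma \ref{lem:go_s}) together with separability of the $p$-norm,
\begin{align*}
\mathbb{E}_{\bm{X}_0,\bm{G}}\|\bm{A}\bm{Y}_N\|_p^p = r\,\gamma_p\sum_{i=1}^n \mathbb{E}_{\Omega}\big(\|\bm{q}_{i,\Omega}\|_2^2+\eta^2\big)^{p/2},
\end{align*}
where $\gamma_p$ is the constant of Lemma \ref{lem:go_s} with $\sigma=1$.

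Next I would analyze the single-row functional $f(\bm{q})=\mathbb{E}_{\bm{b}}\big(\sum_j b_j q_j^2+\eta^2\big)^{p/2}$ over $\|\bm{q}\|_2=1$, with $b_j\sim\text{Ber}(\theta)$ i.i.d. Substituting $t_j=q_j^2$ recasts this as maximizing $\bm{t}\mapsto\mathbb{E}_{\bm{b}}\big(\langle\bm{b},\bm{t}\rangle+\eta^2\big)^{p/2}$ over the probability simplex $\{\bm{t}\ge0,\ \sum_j t_j=1\}$. Because $s\mapsto(s+\eta^2)^{p/2}$ is convex on $[0,\infty)$ for $p>2$ and is composed with the affine map $\bm{t}\mapsto\langle\bm{b},\bm{t}\rangle$, each summand, and hence $f$, is convex in $\bm{t}$; a convex function on a polytope attains its maximum at a vertex $\bm{t}=\bm{e}_k$, which corresponds to $\|\bm{q}\|_0=1$.

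The \emph{crux} is uniqueness: I would show the maximum is attained \emph{only} at vertices. Given any feasible $\bm{t}$ with two strictly positive coordinates $t_{j_1},t_{j_2}>0$, restrict $f$ to the segment through $\bm{t}$ in direction $\bm{e}_{j_1}-\bm{e}_{j_2}$ (feasible until one coordinate vanishes). Along this segment the contribution of the event $\{b_{j_1}=1,\,b_{j_2}=0\}$, of probability $\theta(1-\theta)>0$, is a strictly convex function of the line parameter (since $s\mapsto(s+\eta^2)^{p/2}$ has strictly positive second derivative for $p>2$), while every other term is convex; thus $f$ is strictly convex along the segment and cannot be maximal at the interior point $\bm{t}$. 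Therefore the maximizers of $f$ are exactly the signed basis vectors, with per-row maximum $f_{\max}=\theta(1+\eta^2)^{p/2}+(1-\theta)\eta^p$. Summing over rows gives $\mathbb{E}\|\bm{A}\bm{Y}_N\|_p^p\le r\gamma_p n f_{\max}$ with equality iff every $\bm{q}_i$ is a signed basis vector; combined with the orthonormality of the rows this forces $\bm{Q}=\bm{A}\bm{D}_0$ to be a signed permutation matrix $\bm{\Pi}$, i.e. $\bm{A}^*=\bm{D}_0\bm{\Pi}$ (up to transposing the signed permutation, which stays in $\text{SP}(n)$).

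The main obstacle I anticipate is this uniqueness step rather than the moment computation: one must rule out non-vertex maximizers of the convex functional, which the strict-convexity-along-edges argument handles, but it requires verifying that the strictly convex contribution carries positive weight, guaranteed precisely by $\theta\in(0,1)$ and $p>2$. As a consistency check, setting $\eta=0$ recovers the noiseless functional $\mathbb{E}_{\Omega}\|\bm{q}_\Omega\|_2^p$ of Lemma \ref{lem:go_s}, and the midpoint value $2(0.5+\eta^2)^{p/2}$ entering the later sharpness bound $\xi_\eta$ matches $f$ evaluated at $t_{j_1}=t_{j_2}=\tfrac12$, so this reduction is the correct noisy analogue of Lemma \ref{lem:go_st}.
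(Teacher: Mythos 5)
Your proof is correct, and its first half coincides with the paper's: conditioning on the Bernoulli support and using Gaussian rotation invariance (Lemma \ref{lem:gm}) to reduce the population objective to $r\gamma_p\sum_{i=1}^n\mathbb{E}_{\Omega}\left(\|\bm{q}_{i,\Omega}\|_2^2+\eta^2\right)^{p/2}$ is exactly the computation in the paper's proof (which drops the immaterial constant $\gamma_p$). Where you genuinely diverge is on the crux you correctly flagged: the paper's proof of this lemma simply \emph{asserts} that the maximum is attained only when $\|\bm{a}_i\bm{D}_0\|_0=1$, with the real justification deferred to the quantitative sharpness Lemma \ref{lem:shp_n}, which conditions on whether the largest coordinate lies in $\Omega$, reduces to the scalar function $f(\epsilon) = (1+\eta^2)^{p/2} + \eta^p - (1-\epsilon^2+\eta^2)^{p/2} - (\epsilon^2+\eta^2)^{p/2}$, and lower-bounds the optimality gap by $2\theta(1-\theta)f(1/\sqrt{2})\epsilon^2$. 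Your route instead substitutes $t_j=q_j^2$ and maximizes the convex functional $\bm{t}\mapsto\mathbb{E}_{\bm{b}}\left(\langle\bm{b},\bm{t}\rangle+\eta^2\right)^{p/2}$ over the simplex, obtaining vertex optimality for free and uniqueness from strict convexity along the edge direction $\bm{e}_{j_1}-\bm{e}_{j_2}$, carried by the event $\{b_{j_1}=1,b_{j_2}=0\}$ of probability $\theta(1-\theta)>0$. This buys a clean, self-contained qualitative uniqueness argument valid for any real $p>2$ — indeed it supplies the step the paper's own proof of this lemma leaves implicit — while the paper's scalar reduction buys the quantitative gap bound that Theorem \ref{thm:c_obj_n} ultimately needs; your observation that the edge-midpoint value $2(0.5+\eta^2)^{p/2}$ reappears in $\xi_\eta$ confirms the two arguments probe the same worst-case direction. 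One small point worth making explicit: strict positivity of the second derivative of $s\mapsto(s+\eta^2)^{p/2}$ requires the argument to stay away from zero, which holds on your chosen event since $\langle\bm{b},\bm{t}\rangle\geq t_{j_1}>0$ at the interior point (and for $\eta=0$ strict convexity on the open segment still rules out an interior maximizer), so the argument is sound as stated.
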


\begin{proof} 
	Consider
	\begin{equation} 
	\begin{aligned}
	&\underset{\bm{A}}{\text{maximize}}
	& & \mathbb{E}_{\bm{Y}}\|\bm{A}\bm{Y}\|_p^p \quad \text{ subject to } \bm{A} \in \mathbb{O}(n). \notag
	\end{aligned}
	\end{equation}
	
	Denote $\bm{A} = [\bm{a}_1^*;\cdots;\bm{a}_m^*]$ where $\bm{a}_i \in \mathbb{R}^n$, $\bm{Y} = [\bm{y}_1,\cdots,\bm{y}_r]$, and $\bm{G} = [\bm{g}_1,\cdots,\bm{g}_m]$.
	\begin{equation}
	\begin{aligned}
	&\mathbb{E}_{\bm{Y}_N} \|\bm{A}\bm{Y}_N\|_p^p = \mathbb{E}_{\bm{Y},\bm{G}} \|\bm{A}(\bm{Y}+\bm{G}) \|_p^p = \mathbb{E}_{\bm{Y},\bm{G}} \|\bm{A}(\bm{Y}+\bm{G}) \|_p^p \\ 
	= &\sum_{i=1}^n \sum_{j=1}^r \mathbb{E}_{\bm{y}_j,\bm{g}_j}|\bm{a}_i^*\bm{y}_j+\bm{a}_i^*\bm{g}_j|_p^p = r \sum_{i=1}^n \mathbb{E}_{\Omega}(\|(\bm{a}_{i}\bm{D}_0)_{\Omega}\|_2^2 + \eta^2)^{\frac{p}{2}}.
	\end{aligned}
	\end{equation}
	This term achieves its maxima only if $\|\bm{a}_i\bm{D}_0\|_0=1,\forall i$ and $\bm{A} \in \mathbb{O}(n)$. Thus, the global maximum is achieved only if $\bm{A}^* = \bm{D}_0 \bm{P}$, where $\bm{P}$ is any signed permutation matrix.  
\end{proof}

\begin{lem} (Concentration)\label{lem:con_obj_n} Suppose $\bm{X} \in \mathbb{R}^{n \times r}$ follows $\mathcal{BG}(\theta)$. For any given $\theta \in (0,1)$, whenever
	$$r \geq C \delta^{-2}n\log (n/\delta)((1+\eta^2) n \log n)^{\frac{p}{2}},$$ 
	we have 
	$$\underset{\bm{A} \in \mathbb{O}(n) }{\sup} \frac{1}{nr}\left| \|\bm{A}\bm{Y}_N\|_p^p - \mathbb{E}(\|\bm{A}\bm{Y}_N\|_p^p) \right| \leq \delta,$$
	with probability at least $1-r^{-1}$.
\end{lem}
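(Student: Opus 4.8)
The plan is to mirror the noiseless concentration proof (Lemma~\ref{lem:c_o}) by reducing the noisy objective to the generic Stiefel-manifold concentration bound of Lemma~\ref{lem:c_st}, with the data vector replaced by a noise-augmented version. First I would rewrite $\|\bm{A}\bm{Y}_N\|_p^p = \sum_{i=1}^r \|\bm{A}\bm{D}_0\bm{x}_i + \bm{A}\bm{g}_i\|_p^p$ and set $\bm{Q} = \bm{A}\bm{D}_0 \in \mathbb{O}(n)$. Since $\bm{A}\bm{g}_i = \bm{Q}(\bm{D}_0^*\bm{g}_i)$ and $\bm{D}_0^*\bm{g}_i \sim \mathcal{N}(0,\eta^2\bm{I})$ by rotational invariance of the isotropic Gaussian, I can define $\bm{z}_i = \bm{x}_i + \bm{D}_0^*\bm{g}_i$ and $f_{\bm{Q}}(\bm{z}) = \|\bm{Q}\bm{z}\|_p^p$, so that $\|\bm{A}\bm{Y}_N\|_p^p = \sum_i f_{\bm{Q}}(\bm{z}_i)$, exactly the form required by Lemma~\ref{lem:c_st} with $d_1 = 1$.

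The second step is to verify the hypotheses of Lemma~\ref{lem:c_st} for this $\bm{z}$. Each coordinate $z_j$ is a mixture: with probability $1-\theta$ it is $\mathcal{N}(0,\eta^2)$ and with probability $\theta$ it is $\mathcal{N}(0,1+\eta^2)$, so $\bm{z}$ is centered and subgaussian with parameter $\sigma^2 = 1+\eta^2$, while $\|\bm{Q}\bm{z}\|_p^p \leq \|\bm{z}\|_2^p$ gives the heavy-tailed bound (\ref{eq:f1}). The key moment computation is that, conditioned on the Bernoulli support $\Omega$ of $\bm{x}_i$, the scalar $\bm{q}_j^*\bm{z}$ is Gaussian with variance $\|\bm{q}_{j,\Omega}\|_2^2 + \eta^2 \leq 1+\eta^2$, exactly as in Lemma~\ref{lem:go_st_n}. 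Hence $\mathbb{E}|\bm{q}_j^*\bm{z}|^p \leq \gamma_p$ and $\mathbb{E}|\bm{q}_j^*\bm{z}|^{2p}\leq \gamma_{2p}$ with $\gamma_p$ now evaluated at $\sigma^2 = 1+\eta^2$, which reproduces the bounds on $R_2$ and $B_f$ from (\ref{eq:r2_1}) and (\ref{eq:Bf_1}) after the substitution $1 \mapsto 1+\eta^2$.

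The one genuinely different ingredient, and where I expect the main work, is the bound on $R_1 = \sup\|f_{\bm{Q}}(\bar{\bm{z}})\|$. In the noiseless proof one exploits sparsity: $\|\bar{\bm{z}}\|_0 \leq 4\theta n\log r$ by Lemma~\ref{lem:ber}, which injects the factor $\theta$ and an extra $\log$ into $R_1$. Here the additive Gaussian noise makes $\bm{z}$ dense, so every coordinate of $\bar{\bm{z}}$ is generically nonzero and I can only use $\|\bar{\bm{z}}\|_0 \leq n$. With the truncation level $B = 2\sigma\sqrt{\log(nr)}$ and $\sigma^2 = 1+\eta^2$, the chain $\|\bm{Q}\bar{\bm{z}}\|_p^p \leq \|\bar{\bm{z}}\|_2^p \leq (B^2\|\bar{\bm{z}}\|_0)^{p/2}$ then yields $R_1 = \mathcal{O}\big(((1+\eta^2)n\log(nr))^{p/2}\big)$. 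This is precisely the term that produces the $((1+\eta^2)n\log n)^{p/2}$ factor in the stated sample complexity, replacing the noiseless $(\theta n\log^2 n)^{p/2}$.

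Finally, I would bound the Lipschitz constants $L_f, \bar{L}_f$ as in (\ref{eq:Lf1}); since these enter Lemma~\ref{lem:c_st} only through $\log(L_f+\bar{L}_f) = \Theta(\log n)$, polynomial-in-$n$ (and $\eta$) estimates suffice and do not affect the leading order. Substituting $R_1$, $R_2$, $B_f$, $L_f$, $\bar{L}_f$ into Lemma~\ref{lem:c_st} and noting that the dominant requirement is $r = \Omega(\delta^{-2}n\log(n/\delta)\,R_1)$ gives the claimed threshold $r \geq C\delta^{-2}n\log(n/\delta)((1+\eta^2)n\log n)^{p/2}$ with failure probability at most $r^{-1}$.
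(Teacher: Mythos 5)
Your proposal matches the paper's proof essentially step for step: the same reduction $\bm{Q}=\bm{A}\bm{D}_0$, $f_{\bm{Q}}(\bm{z})=\|\bm{Q}\bm{z}\|_p^p$ feeding into Lemma~\ref{lem:c_st}, the same bounds $R_2 = \mathcal{O}(n^2(1+\eta^2)^p)$ and $B_f \leq n\gamma_p(1+\eta^2)^{p/2}$ via the conditional Gaussian variance $\|\bm{q}_{\Omega}\|_2^2+\eta^2 \leq 1+\eta^2$, and crucially the same $R_1 = (nB^2)^{p/2}$ obtained by replacing the noiseless sparsity bound $\|\bar{\bm{z}}\|_0 \lesssim \theta n \log r$ with the trivial $\|\bar{\bm{z}}\|_0 \leq n$ because the noise densifies $\bm{z}$ --- which is exactly the mechanism by which $(\theta n\log^2 n)^{p/2}$ becomes $((1+\eta^2)n\log n)^{p/2}$. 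You are in fact slightly more careful than the paper in passing the noise through $\bm{D}_0$ via rotational invariance ($\bm{z}_i = \bm{x}_i + \bm{D}_0^*\bm{g}_i \sim \bm{x}_i + \mathcal{N}(0,\eta^2\bm{I})$), but this is a presentational refinement, not a different argument.
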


\begin{proof}
	Similar to the proof for Lemma \ref{lem:c_o}, we use Lemma \ref{lem:c_st} to prove this lemma. Note that 
	\begin{align*}
	\|\bm{A}\bm{Y}_N\|_p^p = \|\bm{A}\bm{D}_0(\bm{X}_0+\bm{D}_0^*\bm{G})\|_p^p = \sum_{i=1}^r \|\bm{A}\bm{D}_0(\bm{x}_i+\bm{g}_i)\|_p^p,
	\end{align*}
	and we can define $\bm{Q}=\bm{A}\bm{D}_0$ and $f_{\bm{Q}}(\bm{z}) = \|\bm{Q}\bm{z}\|_p^p$ and to use Lemma \ref{lem:c_st}.
	
	\textbf{Bound $R_2$}
	
	Denote $\bm{Q} \in \mathbb{O}(n)$ and $\bm{Q} = [\bm{q}_1^*;\cdots;\bm{q}_n^*]$.
	\begin{align} 
	&\mathbb{E}\|f_{\bm{Q}}(\bar{\bm{z}})\|^2 \leq \mathbb{E}|f_{\bm{Q}}(\bm{z})|^2 = \mathbb{E}\|\bm{Q}\bm{z}\|_p^{2p} = \mathbb{E}(\|\bm{Q}\bm{z}\|_p^p)^2 = \mathbb{E} \left(\sum_{i=1}^n|\bm{q}_i^*\bm{z} |^p  \right)^2 = \sum_{i=1}^n\mathbb{E} |\bm{q}_i\bm{z}|^{2p} + \sum_{i=1}^n \sum_{j=1,j\neq i}^n \mathbb{E} |\bm{q}_i\bm{z}|^p\mathbb{E}|\bm{q}_j\bm{z}|^p \\ \notag
	\leq & Cn^2(1+\eta^2)^p = R_2.
	\end{align}
	
	\textbf{Bound $R_1$}
	By Cauchy inequality, we have
	\begin{align}
	\|\bm{Q}\bar{\bm{z}}\|_p^p \leq \|\bm{Q}\bar{\bm{z}}\|_2^p \leq \|\bm{Q}\|_2^p \|\bar{\bm{z}}\|_2^p = (\|\bar{\bm{z}}\|_2^2)^{p/2} \leq (nB^2)^{p/2} = R_1.
	\end{align}	
	\textbf{Bound $L_f, \bar{L}_f$}
	Note that the sample complexity bound in Lemma \ref{lem:c_st} is propositional to $\log(L_f+\bar{L}_f)$ and $\log(L_f+\bar{L}_f) = \Theta(\log n)$ as long as $L_f$ and $\bar{L}_f$ is in the polynomial of $n$. Thus, it is sufficient to bound them in the polynomials of $n$.
	By calculation, we have
	\begin{align} 
	\bar{L}_f \leq c_1 p n^{p+1} B^p, \quad L_f \leq c_2 n p (1+\eta)^p.
	\end{align}
	
	\textbf{Bound $B_f$}
	
	Applying Lemma \ref{lem:go_st_n}, we have 
	\begin{align} 
	B_f = \mathbb{E}\|\bm{Q}\bm{z}\|_p^p \leq \sum_{i=1}^n\mathbb{E}|\bm{q}_i\bm{z}|^p \leq n\gamma_p(1+\eta^2)^{p/2}.
	\end{align}
	
\end{proof}

\begin{lem} \label{lem:shp_n}
	Suppose $\bm{q} \in \mathbb{S}^{n-1}, r = 0.5\min_{1\leq i \leq n}\{\|\bm{q}-\bm{e}_i\|_2^2,\|\bm{q}+\bm{e}_i\|_2^2\}$, and then we have
	\begin{align} \label{eq:shp_s_n}
	r \leq \frac{C_{\eta,p}}{\theta(1-\theta)}(\theta(1+\eta^2)^{\frac{p}{2}} + (1-\theta)\eta^p-\mathbb{E}_{\Omega}(\|\bm{q}_{\Omega}\|_2^2+\eta^2)^{\frac{p}{2}}),
	\end{align}
	where $C_{\eta,p} = \left((1+\eta^2)^{p/2} + \eta^p - 2(0.5 + \eta^2)^{p/2} \right)^{-1}$.
\end{lem}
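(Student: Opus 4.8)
The plan is to mirror the proof of Lemma~\ref{lem:shp}, replacing the quantity $\|\bm{q}_{\Omega}\|_2^2$ by its noisy counterpart $\|\bm{q}_{\Omega}\|_2^2+\eta^2$ everywhere. As in the noiseless case, by the sign--permutation symmetry I may assume without loss of generality that the nearest signed basis vector to $\bm{q}$ is $\bm{e}_n$ with $q_n\ge 0$, and I parametrize the deviation by the perpendicular energy $\epsilon^2 := 1-q_n^2 = \|\bm{q}_{1:n-1}\|_2^2$, so that $r\le \epsilon^2$. The object to control is the population profile $h(\bm{q}) := \mathbb{E}_{\Omega}(\|\bm{q}_{\Omega}\|_2^2+\eta^2)^{p/2}$, whose value at the spike is $h(\bm{e}_n)=\theta(1+\eta^2)^{p/2}+(1-\theta)\eta^p$; the right-hand side of \eqref{eq:shp_s_n} is exactly $\frac{C_{\eta,p}}{\theta(1-\theta)}\bigl(h(\bm{e}_n)-h(\bm{q})\bigr)$, so it suffices to lower bound this deficit.

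First I would reduce to a two-nonzero configuration. Writing $\Omega'=\Omega\setminus\{n\}$ and conditioning on whether $n\in\Omega$,
\begin{align*}
h(\bm{q}) = \theta\,\mathbb{E}_{\Omega'}\bigl(\|\bm{q}_{\Omega'}\|_2^2+q_n^2+\eta^2\bigr)^{p/2} + (1-\theta)\,\mathbb{E}_{\Omega'}\bigl(\|\bm{q}_{\Omega'}\|_2^2+\eta^2\bigr)^{p/2}.
\end{align*}
Since $t\mapsto (t+c)^{p/2}$ is convex for $p>2$, moving perpendicular mass onto a single coordinate only increases each expectation (a one-line exchange argument: for convex $\psi$, $\psi(S)+\psi(S+u+v)\ge \psi(S+u)+\psi(S+v)$). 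Hence $h$ is maximized, subject to fixed $q_n^2=1-\epsilon^2$, by the two-sparse vector, which gives
\begin{align*}
h(\bm{e}_n)-h(\bm{q}) \ge \theta(1-\theta)\,F_{\eta}(\epsilon), \qquad F_{\eta}(\epsilon):=(1+\eta^2)^{p/2}+\eta^p-(1-\epsilon^2+\eta^2)^{p/2}-(\epsilon^2+\eta^2)^{p/2}.
\end{align*}
Note $F_{\eta}(0)=0$ and $F_{\eta}(1/\sqrt 2)=C_{\eta,p}^{-1}$.

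The remaining work splits into two ranges of $\epsilon$, exactly as in Lemma~\ref{lem:shp}. For $\epsilon\le 1/\sqrt 2$, I observe that $s\mapsto F_{\eta}(\sqrt s)$ is concave on $[0,1]$ (it is a constant minus the convex, symmetric map $s\mapsto (1-s+\eta^2)^{p/2}+(s+\eta^2)^{p/2}$) and vanishes at $s=0$; the chord bound then yields $F_{\eta}(\epsilon)\ge 2F_{\eta}(1/\sqrt 2)\,\epsilon^2$, so that $r\le \epsilon^2$ gives $r\le \frac{C_{\eta,p}}{2\theta(1-\theta)}(h(\bm{e}_n)-h(\bm{q}))$. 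For $\epsilon>1/\sqrt 2$ I use $r\le 1$ together with a radial-shrinking step: writing $\bm{w}=\bm{q}_{1:n-1}$, I claim the deficit is monotone under scaling $\bm{w}\mapsto \bm{w}/(\sqrt 2\,\epsilon)$ (which has norm $1/\sqrt 2$), i.e. $h$ increases as the perpendicular part is shrunk toward the axis, so the deficit at $\epsilon$ is at least its value at $1/\sqrt 2$, namely $\theta(1-\theta)F_{\eta}(1/\sqrt 2)=\theta(1-\theta)/C_{\eta,p}$; combining the two cases gives the claimed bound with the looser constant $C_{\eta,p}/(\theta(1-\theta))$.

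The main obstacle is the radial-monotonicity step. Differentiating $\Phi(\lambda):=h([\lambda\bm{w},\sqrt{1-\lambda^2\|\bm{w}\|_2^2}])$ shows that shrinking $\lambda$ simultaneously raises the signal term (the $\theta$-weighted expectation) and lowers the noise-only term (the $(1-\theta)$-weighted one), so unlike the pure convexity arguments above one must show the former dominates uniformly in $\eta\ge 0$ and $p>2$. I expect this to follow because, for every fixed support $\Omega'$, the argument of the signal term stays at least as large as that of the noise term (both bases are bounded by $1+\eta^2$ while $\lambda^2\|\bm{w}\|_2^2\le 1$), so the larger base of the convex power favors the signal side of the derivative comparison; however, the differing probability weights $\theta$ versus $1-\theta$ and the mass factors $\epsilon_0^2-\|\bm{w}_{\Omega'}\|_2^2$ versus $\|\bm{w}_{\Omega'}\|_2^2$ make this inequality genuinely delicate, and it is the step that requires the most care. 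The secondary point to check is that the noise floor $\eta^2$ does not spoil the concavity of $F_{\eta}(\sqrt{\cdot})$, which again reduces to convexity of $(\cdot+\eta^2)^{p/2}$ and therefore holds for all $p>2$.
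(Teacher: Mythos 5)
Your proposal takes essentially the same route as the paper's proof of Lemma \ref{lem:shp_n}: condition on $n\in\Omega$, reduce to the two-sparse extremal configuration (the paper's inequality (a), your convexity exchange), lower bound $f(\epsilon)=F_\eta(\epsilon)$ by the quadratic chord $2f(1/\sqrt{2})\,\epsilon^2$ for $\epsilon\le 1/\sqrt{2}$, and for $\epsilon>1/\sqrt{2}$ combine $r\le 1$ with the rescaling $\bm{w}\mapsto\bm{w}/(\sqrt{2}\,\epsilon)$. The radial-monotonicity inequality $g(\bm{w})\le g\bigl(\bm{w}/(\sqrt{2}\,\epsilon)\bigr)$ that you flag as the delicate step is precisely the inequality the paper asserts without justification (both here and in the noiseless Lemma \ref{lem:shp}), so your argument is no less complete than the paper's own.
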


\begin{proof}
	
	Without loss of generality, we assume $r = 0.5\min\{\|\bm{q}-\bm{e}_n\|_2^2,\|\bm{q}+\bm{e}_n\|_2^2\}$ and thus $r = |1-q_n|^2 = \epsilon^2$. 
	
	We first show (\ref{eq:shp_s_n}) holds when $\epsilon \leq \frac{1}{\sqrt{2} }$. Let $\Omega' = \Omega\backslash\{n\}$, and we have
	\begin{align*}
	&\mathbb{E}_{\Omega}(\|\bm{q}_{\Omega}\|_2^2 + \eta^2)^\frac{p}{2} =  \theta \mathbb{E}_{\Omega'} (\|\bm{q}_{\Omega'}\|^2+1-\epsilon^2+\eta^2)^{p/2} + (1-\theta )\mathbb{E}_{\Omega'} (\|\bm{q}_{\Omega'}\|^2+\eta^2)^{p/2} \\
	\overset{(a)}{\leq} &\theta(1-\theta)((\epsilon^2+\eta^2)^\frac{p}{2} + (1-\epsilon^2 + \eta^2)^{p/2}) + (1-\theta)^2 \eta^p + \theta^2 (1+\eta^2)^\frac{p}{2}.
	\end{align*}
	Equality in (a) holds only if $\|\bm{q}\|_0 = 2$.
	
	Define $f(\epsilon) = (1+\eta^2)^\frac{p}{2} + \eta^p - (1-\epsilon^2+\eta^2)^\frac{p}{2} - (\epsilon^2 + \eta^2)^\frac{p}{2}$ and $g(\epsilon) = f(\epsilon) - 2f(\frac{1}{\sqrt{2}})\epsilon^2$. For $0 \leq \epsilon \leq \frac{1}{\sqrt{2}}$, we have $g(\epsilon) \geq 0$ and the equality holds only if $\epsilon = 0$ or $\epsilon = \frac{1}{\sqrt{2}}$.
	
	Thus, we have 
	\begin{align*}
	&\inf_{\bm{q} \in \mathbb{S}^{n-1}}\theta(1+\eta^2)^{\frac{p}{2}} + (1-\theta)\eta^p-\mathbb{E}_{\Omega}(\|\bm{q}_{\Omega}\|_2^2+\eta^2)^{\frac{p}{2}} = \theta(1+\eta^2)^{\frac{p}{2}} + (1-\theta)\eta^p- \sup_{\bm{q} \in \mathbb{S}^{n-1}}\mathbb{E}_{\Omega}(\|\bm{q}_{\Omega}\|_2^2+\eta^2)^{\frac{p}{2}}
	\\
	= &\theta(1-\theta)f(\epsilon) \geq 2\theta(1-\theta) f(\frac{1}{\sqrt{2}})\epsilon^2 = 2\theta(1-\theta) f(\frac{1}{\sqrt{2}}) r,
	\end{align*}
	which implies 
	\begin{align*}
	r \leq \frac{C_{\eta,p}}{2\theta(1-\theta)}(\theta(1+\eta^2)^{\frac{p}{2}} + (1-\theta)\eta^p-\mathbb{E}_{\Omega}(\|\bm{q}_{\Omega}\|_2^2+\eta^2)^{\frac{p}{2}}),
	\end{align*}
	for $\epsilon \leq \frac{1}{\sqrt{2}}$, where $C_{\eta,p} = f^{-1}(\frac{1}{\sqrt{2}})$.
	
	Next, we show that (\ref{eq:shp_s_n}) holds when $\epsilon > \frac{1}{\sqrt{2}}$. Considering that we always have $r\leq 1$, we can take the smallest $c$ such that
	\begin{align} \label{eq:ep_g_n}
	1 \leq \frac{c}{2\theta(1-\theta)}(\theta(1+\eta^2)^{\frac{p}{2}} + (1-\theta)\eta^p-\mathbb{E}_{\Omega}(\|\bm{q}_{\Omega}\|_2^2+\eta^2)^{\frac{p}{2}},
	\end{align} 
	holds for $\epsilon > \frac{1}{\sqrt{2}}$.
	
	Denote $\bm{w} = \bm{q}_{1:n-1}$ and define $g(\bm{w}) = \mathbb{E}_{\Omega}(\|[\bm{w},\sqrt{1-\|\bm{w}\|_2^2}]_{\Omega}\|_2^2+\eta^2)^{\frac{p}{2}}$. Then we have
	\begin{align*}
	&\theta(1+\eta^2)^{\frac{p}{2}} + (1-\theta)\eta^p - \mathbb{E}_{\Omega}\|\bm{q}_{\Omega}\|_2^p = \theta(1+\eta^2)^{\frac{p}{2}} + (1-\theta)\eta^p - g(\bm{w}) \\
	\geq& \theta(1+\eta^2)^{\frac{p}{2}} + (1-\theta)\eta^p -  g\left(\frac{\bm{w}}{\sqrt{2}\epsilon}\right) \geq 2\theta(1-\theta)f(\frac{1}{\sqrt{2}})\left(\frac{1}{\sqrt{2}}\right)^2 = f(\frac{1}{\sqrt{2}}) \theta(1-\theta).
	\end{align*}
	
	Thus, we take $c=C_{\eta,p}$ in (\ref{eq:ep_g_n}) to finish the proof.

\end{proof}

\begin{lem} (Sharpness)\label{lem:shp_o_n}
	Suppose $\bm{Q} \in \mathbb{O}(n)$, and then $\exists \bm{P} \in \text{SP}(n)$ such that
	\begin{align}
	\theta(1+\eta^2)^{\frac{p}{2}} + (1-\theta)\eta^p - \frac{1}{nr\gamma_p}  \mathbb{E}_{\bm{Y}} \|\bm{Q}\bm{Y}\|_p^p \geq \frac{\theta(1-\theta)}{2nC_{\eta,p}}\|\bm{Q} - \bm{P}\|_2^2, 
	\end{align}
	where $C_{\eta,p} = \left((1+\eta^2)^{p/2} + \eta^p - 2(0.5 + \eta^2)^{p/2} \right)^{-1}$.
\end{lem}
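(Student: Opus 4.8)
The plan is to mirror the noiseless sharpness proof of Lemma~\ref{lem:shp_o} line by line, substituting the noisy per-row estimate (Lemma~\ref{lem:shp_n}) wherever the noiseless one (Lemma~\ref{lem:shp}) was invoked. First I would write $\bm{Q} = [\bm{q}_1;\cdots;\bm{q}_n]$ in terms of its rows, each satisfying $\|\bm{q}_i\|_2 = 1$ since $\bm{Q} \in \mathbb{O}(n)$. The crucial preliminary is the moment identity already computed inside the proof of Lemma~\ref{lem:go_st_n}: conditioning on the support $\Omega$ of the Bernoulli mask, the signal part of the $i$-th coordinate is Gaussian with variance $\|\bm{q}_{i,\Omega}\|_2^2$, while the independent noise part $\bm{q}_i^*\bm{g}_j$ is Gaussian with variance $\eta^2$ (because $\|\bm{q}_i\|_2 = 1$); their sum is therefore Gaussian with variance $\|\bm{q}_{i,\Omega}\|_2^2 + \eta^2$, so its $p$-th absolute moment equals $\gamma_p(\|\bm{q}_{i,\Omega}\|_2^2 + \eta^2)^{p/2}$. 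Summing over coordinates and samples and dividing by the $\gamma_p$ normalization gives
\begin{align*}
\frac{1}{nr\gamma_p}\,\mathbb{E}\|\bm{Q}\bm{Y}_N\|_p^p = \frac{1}{n}\sum_{i=1}^n \mathbb{E}_{\Omega}\left(\|\bm{q}_{i,\Omega}\|_2^2 + \eta^2\right)^{p/2}.
\end{align*}

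Next I would subtract this normalized objective from the per-row maximum. By the maximizer identified in Lemma~\ref{lem:go_st_n}, the supremum of $\mathbb{E}_{\Omega}(\|\bm{q}_{\Omega}\|_2^2 + \eta^2)^{p/2}$ over $\mathbb{S}^{n-1}$ equals $\theta(1+\eta^2)^{p/2} + (1-\theta)\eta^p$, attained at $\bm{q} = \pm\bm{e}_j$. The left-hand side of the claimed bound thus becomes the average of the per-row gaps
\begin{align*}
\theta(1+\eta^2)^{p/2} + (1-\theta)\eta^p - \mathbb{E}_{\Omega}\left(\|\bm{q}_{i,\Omega}\|_2^2 + \eta^2\right)^{p/2}.
\end{align*}
To each of these I would apply Lemma~\ref{lem:shp_n}, which states exactly that this quantity dominates $\frac{\theta(1-\theta)}{C_{\eta,p}}\,r_i$ with $r_i = \tfrac12 \min_{j}\{\|\bm{q}_i - \bm{e}_j\|_2^2, \|\bm{q}_i + \bm{e}_j\|_2^2\}$. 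Summing over $i$ and pulling out the constants yields a lower bound of $\frac{\theta(1-\theta)}{2nC_{\eta,p}}\sum_{i=1}^n \min_{j}\{\|\bm{q}_i - \bm{e}_j\|_2^2, \|\bm{q}_i + \bm{e}_j\|_2^2\}$.

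Finally I would invoke the combinatorial matching step — the same identity (A.36 in \cite{zhai2019complete}) used at point (b) of Lemma~\ref{lem:shp_o} — guaranteeing that the row-wise nearest signed basis vectors assemble into a single signed permutation $\bm{P} \in \text{SP}(n)$ with $\sum_{i=1}^n \min_{j}\{\|\bm{q}_i - \bm{e}_j\|_2^2, \|\bm{q}_i + \bm{e}_j\|_2^2\} = \|\bm{Q} - \bm{P}\|_F^2$, which closes the argument.

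The only genuinely new ingredient relative to the noiseless case is the conditional moment identity under independent Gaussian noise, and it hinges entirely on the orthogonality of $\bm{Q}$: unit-norm rows make the noise add exactly $\eta^2$ to each conditional variance, keeping everything in the clean $(\cdot + \eta^2)^{p/2}$ form for which Lemma~\ref{lem:shp_n} is tailored. I expect no real obstacle here: the delicate combinatorial matching is inherited verbatim because Lemma~\ref{lem:shp_n} already delivers the per-row bound in the identical $\min_j\{\|\cdot-\bm{e}_j\|, \|\cdot+\bm{e}_j\|\}$ form, so all the hard work has been pushed into Lemmas~\ref{lem:go_st_n} and~\ref{lem:shp_n}, and this statement is essentially their assembly.
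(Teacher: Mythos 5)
Your proposal is correct and is exactly the argument the paper intends: its proof of Lemma~\ref{lem:shp_o_n} is stated only as ``by a similar argument in Lemma~\ref{lem:shp_o},'' and you have faithfully instantiated that argument --- the row decomposition with the conditional Gaussian moment identity $\frac{1}{nr\gamma_p}\mathbb{E}\|\bm{Q}\bm{Y}_N\|_p^p = \frac{1}{n}\sum_{i=1}^n \mathbb{E}_{\Omega}(\|\bm{q}_{i,\Omega}\|_2^2+\eta^2)^{p/2}$ from Lemma~\ref{lem:go_st_n}, the per-row gap bound from Lemma~\ref{lem:shp_n}, and the signed-permutation matching via (A.36) of the cited reference. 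No gaps; the constant bookkeeping (the factor $\tfrac{1}{2}$ from $r_i = \tfrac{1}{2}\min_j\{\|\bm{q}_i-\bm{e}_j\|_2^2,\|\bm{q}_i+\bm{e}_j\|_2^2\}$) also checks out.
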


\begin{proof}
	By a similar argument in Lemma \ref{lem:shp_o}.
\end{proof}

\begin{thm}
	Let $\bm{X} \in \mathbb{R}^{n \times r}, x_{i,j} \sim \mathcal{BG}(\theta)$, $\bm{D}_0 \in \mathbb{O}(n)$ is orthogonal dictionary, and $\bm{Y}_N = \bm{D}_0\bm{X} + \bm{G}$, and $\bm{G} \in \mathbb{R}^{n \times r}$ with $G_{i,j} \sim \mathcal{N}(0,\eta^2)$. Suppose $\hat{\bm{A}}$ is a global maximizer to 
	\begin{equation} 
	\begin{aligned}
	&\underset{\bm{A}}{\text{maximize}}
	& & \|\bm{A}\bm{Y}_N\|_p^p \text{ subject to } \bm{A} \in \mathbb{O}(n), \notag
	\end{aligned}
	\end{equation}
	then for $\delta>0$, there exists a signed permutation $\bm{\Pi}$, such that
	\begin{align*}
	\frac{1}{n}\left\|\hat{\bm{A}}^* - \bm{D}_0\bm{\Pi} \right\|_F^2 \leq C_{\theta} \delta,
	\end{align*}
	with probability at least $1-r^{-1}$ as long as $r=\Omega(\delta^{-2}n\log (n/\delta)((1+\eta^2) n \log n)^{\frac{p}{2}}\xi_\eta^2)$ where $\xi_\eta = (1+\eta^2)^{p/2} + \eta^p - 2(0.5 + \eta^2)^{p/2} $ and $C_{\theta}$ is a constant depends on $\theta$.
\end{thm}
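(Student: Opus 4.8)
The plan is to obtain this theorem from the general correctness result Theorem \ref{thm:cor_g} in exactly the way Theorem \ref{thm:ex_re} was obtained in the noiseless case, now instantiated with the noisy objective $f(\bm{A},\bm{Y}_N)=\frac{1}{nr}\|\bm{A}\bm{Y}_N\|_p^p$ and its expectation $g(\bm{A})=\frac{1}{nr}\mathbb{E}\|\bm{A}\bm{Y}_N\|_p^p$. The two ingredients Theorem \ref{thm:cor_g} demands have already been isolated: the concentration condition (\ref{cond:con}) is supplied by Lemma \ref{lem:con_obj_n}, with threshold $\Phi(\delta_0,n)=C\delta_0^{-2}n\log(n/\delta_0)((1+\eta^2)n\log n)^{p/2}$, and the sharpness condition (\ref{cond:shp}) is supplied by Lemma \ref{lem:shp_o_n}. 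First I would record, via Lemma \ref{lem:go_st_n}, that the Gaussian noise does not move the population maximizers: they remain the signed permutations $\bm{A}^*=\bm{D}_0\bm{P}$, with maximal value $g(\bm{D}_0^*\bm{\Pi}^*)=\gamma_p\big[\theta(1+\eta^2)^{p/2}+(1-\theta)\eta^p\big]$, so that after multiplying Lemma \ref{lem:shp_o_n} through by $\gamma_p$ its left-hand side is exactly $g(\bm{D}_0^*\bm{\Pi}^*)-g(\bm{A})$.

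Next I would read off the sharpness constant. Lemma \ref{lem:shp_o_n} yields
\[
g(\bm{D}_0^*\bm{\Pi}^*)-g(\bm{A})\;\ge\;\alpha\,\min_{\bm{P}\in\text{SP}(n)}\|\bm{A}-\bm{D}_0^*\bm{P}^*\|_F^2,\qquad \alpha=\frac{\gamma_p\theta(1-\theta)}{2n\,C_{\eta,p}}=\frac{\gamma_p\theta(1-\theta)\,\xi_\eta}{2n},
\]
using $C_{\eta,p}=\xi_\eta^{-1}$. This is the only place the noise enters the geometry: as $\eta$ grows the landscape near the optimum flattens, which is quantified precisely by $\xi_\eta$ shrinking and weakening $\alpha$. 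I would then apply Theorem \ref{thm:cor_g}: running the concentration lemma at tolerance $\delta_0$ and combining with the above $\alpha$, any global maximizer $\hat{\bm{A}}$ satisfies $\|\hat{\bm{A}}^*-\bm{D}_0\bm{\Pi}\|_F^2\le 2\alpha^{-1}\delta_0$, hence $\frac{1}{n}\|\hat{\bm{A}}^*-\bm{D}_0\bm{\Pi}\|_F^2\le \frac{4\delta_0}{\gamma_p\theta(1-\theta)\,\xi_\eta}$.

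To finish I would choose the concentration tolerance so that the error constant becomes $\eta$-free: rescaling $\delta_0$ by $\xi_\eta$ cancels the $\xi_\eta$ in the denominator, leaving $\frac{1}{n}\|\hat{\bm{A}}^*-\bm{D}_0\bm{\Pi}\|_F^2\le C_\theta\delta$ with $C_\theta=4/(\gamma_p\theta(1-\theta))$ depending only on $\theta$ (and $p$ through $\gamma_p$). Substituting the rescaled tolerance into $\Phi(\cdot,n)$ converts the $\delta_0^{-2}$ into the advertised $\delta^{-2}$ while transferring the entire $\eta$-dependence of the sharpness constant into the sample size, which is where the extra $\xi_\eta^2$ factor in the stated threshold comes from. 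For the theorem itself the genuinely new mathematics lives inside the two lemmas, so the assembly is mostly constant-chasing; the one point I would check carefully is the consistent tracking of $\xi_\eta$ between the sharpness constant $C_{\eta,p}$ and the rescaled concentration tolerance, since that is precisely what determines the exponent of $\xi_\eta$ in the final sample complexity and what keeps $C_\theta$ independent of $\eta$.
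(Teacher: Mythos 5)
Your route is exactly the paper's: the paper disposes of this theorem with the one-line remark ``by a similar argument in Theorem~\ref{thm:ex_re} and $C_\theta = \frac{4}{\gamma_p\theta(1-\theta)}$,'' i.e., instantiate Theorem~\ref{thm:cor_g} with the concentration of Lemma~\ref{lem:con_obj_n} and the sharpness of Lemma~\ref{lem:shp_o_n}, and your writeup supplies precisely the constant-chasing the paper leaves implicit. Your identification of the population maximum via Lemma~\ref{lem:go_st_n}, the sharpness constant $\alpha=\frac{\gamma_p\theta(1-\theta)\xi_\eta}{2n}$ (using $C_{\eta,p}=\xi_\eta^{-1}$), and the rescaling of the concentration tolerance to make $C_\theta=4/(\gamma_p\theta(1-\theta))$ free of $\eta$ all match the intended argument and the paper's stated constant.

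There is, however, one slip at the very end, on exactly the point you flagged as delicate. With your (correct) choice $\delta_0=\xi_\eta\delta$, substituting into $\Phi(\delta_0,n)$ gives $\delta_0^{-2}=\xi_\eta^{-2}\delta^{-2}$, i.e., the sample-size threshold acquires a multiplicative factor $\xi_\eta^{-2}=C_{\eta,p}^{2}$, \emph{not} the $\xi_\eta^{2}$ you assert ``the stated threshold comes from.'' With $\xi_\eta$ defined as in the statement (so $\xi_\eta=C_{\eta,p}^{-1}$), only the $\xi_\eta^{-2}$ direction closes the argument; it is also the physically sensible one, since for $p=3$ one has $\xi_\eta\to 0$ as $\eta$ grows, so a factor $\xi_\eta^{2}$ would absurdly demand \emph{fewer} samples under heavier noise, whereas $\xi_\eta^{-2}$ correctly inflates the sample complexity as the sharpness constant degrades. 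The inconsistency appears to originate in the paper's own statement (the exponent of $\xi_\eta$ there is evidently a typo for $-2$, or equivalently $\xi_\eta$ was meant to denote $C_{\eta,p}$ itself); your algebra is right, but you silently reconciled it with the printed exponent when the reconciliation in fact fails --- what your proof establishes is the theorem with $\xi_\eta^{-2}$ in place of $\xi_\eta^{2}$ (and, pedantically, with $\log(n/(\xi_\eta\delta))$ in place of $\log(n/\delta)$, a difference absorbed into the $\Omega(\cdot)$ for fixed $\eta$ and $p$).
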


\begin{proof}
	By a similar argument in Theorem \ref{thm:ex_re} and $C_{\theta} = \frac{4}{\gamma_p\theta(1-\theta)}$.
\end{proof}

\section{Convergence Result}
\subsection{Proof of Proposition \ref{prop:snr}} \label{sec:sor}
\begin{prop} \label{prop:snr2}
	Denote $\text{SOR}_i^{(t)}$ as the value of $\text{SOR}_i$ at the $t$-th iteration and $\bm{q} = \bm{a}^{(t)}$ as the variable at the $t$-th iteration. Then the evolution of $\text{SOR}_i$ follows
	\begin{align*}
	\text{SOR}_i^{(t+1)} = \text{SOR}^{(t)}_i \left(1+ \tau_i(\bm{q}) \right),
	\end{align*}
	and 
	\begin{align*}
	\tau_i(\bm{q}) = \frac{\mathbb{E}_{\Omega'} \|[\bm{q}_{\Omega'},q_n]\|_2^k -  \mathbb{E}_{\Omega'} \|[\bm{q}_{\Omega'},q_i]\|_2^k }{ \frac{\theta}{1-\theta} \mathbb{E}_{\Omega'} \|[\bm{q}_{\Omega'},q_i,q_n]\|_2^k +  \mathbb{E}_{\Omega'} \|[\bm{q}_{\Omega'},q_i]\|_2^k },
	\end{align*} 
	where $\Omega' = \Omega\backslash\{n\}\backslash\{i\}$ and $k=p-2$. Two properties of $\tau_i(\bm{q})$ are listed below
	\begin{enumerate}
		\item $0 \leq \tau_i(\bm{q}) \leq \frac{1-\theta}{\theta}$ always holds and $\tau_i(\bm{q}) > 0$ if $q_n > q_i$.
		\item $\tau_i(\bm{q})$ is monotonically increasing in $q_n$ and decreasing in $q_i$. 
	\end{enumerate}
\end{prop}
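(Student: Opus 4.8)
The plan is to read the multiplicative update for $\text{SOR}_i$ straight off the population recursion and then verify the two listed properties from the resulting closed form. First I would form the ratio of the signal and orthogonal components, $\text{SOR}_i^{(t+1)} = a_n^{(t+1)}/a_i^{(t+1)}$. Both components carry the same normalizer $\|\nabla F(\bm a^{(t)})\|_2$ and the same constant $c_p$, so these cancel and, writing $\bm q = \bm a^{(t)}$, $k=p-2$, and $q_{j,\Omega}=q_j\mathbf{1}[j\in\Omega]$,
\begin{align*}
\text{SOR}_i^{(t+1)} = \frac{\mathbb{E}_{\Omega}[\|\bm q_{\Omega}\|_2^{k}\, q_{n,\Omega}]}{\mathbb{E}_{\Omega}[\|\bm q_{\Omega}\|_2^{k}\, q_{i,\Omega}]}.
\end{align*}
I would then evaluate each expectation by conditioning on the independent events $\{n\in\Omega\}$ and $\{i\in\Omega\}$ (each of probability $\theta$) with remaining support $\Omega'=\Omega\setminus\{n\}\setminus\{i\}$, using $\|\bm q_{\Omega}\|_2^2=\|\bm q_{\Omega'}\|_2^2+q_n^2\mathbf{1}[n\in\Omega]+q_i^2\mathbf{1}[i\in\Omega]$. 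The numerator becomes $q_n\theta\{\theta C+(1-\theta)A\}$ and the denominator $q_i\theta\{\theta C+(1-\theta)B\}$, where $A=\mathbb{E}_{\Omega'}\|[\bm q_{\Omega'},q_n]\|_2^{k}$, $B=\mathbb{E}_{\Omega'}\|[\bm q_{\Omega'},q_i]\|_2^{k}$, $C=\mathbb{E}_{\Omega'}\|[\bm q_{\Omega'},q_i,q_n]\|_2^{k}$. Since $q_n/q_i=\text{SOR}_i^{(t)}$, this gives $\text{SOR}_i^{(t+1)}=\text{SOR}_i^{(t)}\cdot\frac{\theta C+(1-\theta)A}{\theta C+(1-\theta)B}$; subtracting $1$ and dividing numerator and denominator by $1-\theta$ yields exactly $\tau_i(\bm q)=\frac{A-B}{\frac{\theta}{1-\theta}C+B}$.

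For Property 1 I would argue directly on this form. Since $\|[\bm q_{\Omega'},q_n]\|_2^{k}=(\|\bm q_{\Omega'}\|_2^2+q_n^2)^{k/2}$ and $x\mapsto x^{k/2}$ is strictly increasing on $[0,\infty)$, the difference $A-B$ is nonnegative under the standing assumption $q_n^2\ge q_i^2$ and strictly positive once $q_n^2>q_i^2$; the denominator is strictly positive because $\tfrac{\theta}{1-\theta}C\ge\tfrac{\theta}{1-\theta}|q_n|^{k}>0$. For the upper bound, $\tau_i\le\frac{1-\theta}{\theta}$ is equivalent to $\tfrac{\theta}{1-\theta}(A-B)\le\tfrac{\theta}{1-\theta}C+B$, i.e. $\tfrac{\theta}{1-\theta}(A-C)\le B\bigl(1+\tfrac{\theta}{1-\theta}\bigr)$; since $A\le C$ pointwise (appending the extra coordinate $q_i$ only enlarges the norm) the left side is nonpositive and the right side nonnegative, so the bound holds.

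Property 2 splits into an easy direction and the crux. Monotonicity in $q_i$ is immediate from $\tau_i=(A-B)/(\tfrac{\theta}{1-\theta}C+B)$: raising $q_i^2$ increases $B$ and $C$ while leaving $A$ fixed, so the nonnegative numerator $A-B$ decreases and the denominator increases, forcing $\tau_i$ to decrease (here "decreasing in $q_i$" is read as decreasing in $|q_i|$, since $q_i$ enters only through $q_i^2$). The hard direction is monotonicity in $q_n$. Writing $\lambda=\tfrac{\theta}{1-\theta}$ and $a=\mathbb{E}_{\Omega'}(\|\bm q_{\Omega'}\|_2^2+q_n^2)^{k/2-1}$, $c=\mathbb{E}_{\Omega'}(\|\bm q_{\Omega'}\|_2^2+q_i^2+q_n^2)^{k/2-1}$, I would differentiate in $v:=q_n^2$ and find that the sign of $\partial_v\tau_i$ equals that of $a(\lambda C+B)-\lambda c(A-B)=\lambda(aC+cB-cA)+aB$; since $aB\ge 0$ it suffices to prove $aC+cB\ge cA$.

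I expect that last inequality to be the main obstacle, because $a,A,c,C,B$ are all expectations of the same random $S:=\|\bm q_{\Omega'}\|_2^2$ but appear as products of separate expectations, so a naive tilted-measure comparison shows $aC\ge cA$ can fail and the term $cB$ is genuinely needed. The plan is to introduce two independent copies $S_1,S_2$ of $S$, rewrite $aC+cB-cA=\mathbb{E}_{S_1,S_2}[H(S_1,S_2)]$, and establish $H\ge0$ (or its symmetrization $H(S_1,S_2)+H(S_2,S_1)\ge0$) as an elementary two-variable inequality for $S_1,S_2\ge0$ and $0\le q_i^2\le q_n^2$. I have checked that in the degenerate case $S\equiv\mathrm{const}$ this collapses to the manifestly nonnegative expression $(S+q_i^2+q_n^2)^{k/2-1}\bigl(q_i^2(S+q_n^2)^{k/2-1}+(S+q_i^2)^{k/2}\bigr)$, which is the encouraging special case; carrying the pointwise inequality through for all $S_1,S_2$ is the one delicate computation, while everything preceding it is bookkeeping.
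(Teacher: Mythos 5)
Your derivation of the recursion is exactly the paper's argument: condition the population gradient on the four events generated by $\{n\in\Omega\}$ and $\{i\in\Omega\}$, note the normalizer and $c_p$ cancel in the ratio, and read off $\tau_i(\bm q)=(A-B)/\bigl(\tfrac{\theta}{1-\theta}C+B\bigr)$ with $A=\mathbb{E}_{\Omega'}\|[\bm q_{\Omega'},q_n]\|_2^{k}$, $B=\mathbb{E}_{\Omega'}\|[\bm q_{\Omega'},q_i]\|_2^{k}$, $C=\mathbb{E}_{\Omega'}\|[\bm q_{\Omega'},q_i,q_n]\|_2^{k}$; the paper obtains the same ratio $\tfrac{q_n}{q_i}\cdot\tfrac{\theta^2 C+\theta(1-\theta)A}{\theta^2 C+\theta(1-\theta)B}$. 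It is worth knowing that the paper's proof in Section \ref{sec:sor} stops at this recursion: properties 1 and 2 are asserted there without any proof, so everything you do past the formula is additional relative to the paper. Your treatment of property 1 is complete and correct (including the right reading of ``always'' as holding under the paper's standing assumption $q_n\geq|q_i|$, and the positivity of the denominator via $C\geq|q_n|^{k}$ and $A\leq C$ for the upper bound), as is the $q_i$-half of property 2.

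The genuine gap is the monotonicity in $q_n$: you correctly reduce it, via differentiation in $v=q_n^2$, to the inequality $aC+cB\geq cA$ with $a=\mathbb{E}(S+v)^{m}$, $c=\mathbb{E}(S+u+v)^{m}$, $m=k/2-1$, $u=q_i^2$, $S=\|\bm q_{\Omega'}\|_2^2$, but you prove it only in the degenerate case $S\equiv\mathrm{const}$ (your collapsed expression there checks out). Your caution that $aC\geq cA$ alone can fail is justified: for $p=5$ (so $m=\tfrac12$), $u=v=1$ and $S$ uniform on $\{0,100\}$ one gets $aC\approx 2854 < cA\approx 2925$, so the $cB$ term is genuinely load-bearing. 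The needed inequality does appear to be true, and for $p\in\{3,4\}$ (i.e., $m\leq 0$) your symmetrization plan closes pointwise in two lines: writing $H(S_1,S_2)=(S_1+v)^m(S_2+u+v)^{m+1}+(S_1+u+v)^m\bigl[(S_2+u)^{m+1}-(S_2+v)^{m+1}\bigr]$, the bound $(S_1+v)^m\geq(S_1+u+v)^m$ gives $H\geq(S_1+u+v)^m\bigl[(S_2+u+v)^{m+1}-(S_2+v)^{m+1}+(S_2+u)^{m+1}\bigr]\geq 0$ since $m+1\geq\tfrac12>0$; for $m=0$ one has $H=S_2+2u\geq0$, and for $m=1$ a direct expansion also yields $H\geq0$. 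For general integer $p>2$, however, the pointwise claim $H\geq0$ (or its symmetrized version) is exactly the delicate step you flag and is not established in the proposal, so as written property 2 is proved only for $q_i$, not for $q_n$ — the one substantive item still requiring an argument, which, notably, the paper itself never supplies either.
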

\begin{proof}
	To compute $\text{SOR}_i$, we separate the population gradient into several parts
	\begin{align*}
	&\mathbb{E}_{\Omega}[\|\bm{a}_{\Omega}\|_2^{k}\bm{a}_{\Omega}]= \mathbb{P}(n \in \Omega, i \in \Omega) \mathbb{E}_{\Omega}[\|\bm{a}_{\Omega}\|_2^{k}\bm{a}_{\Omega}|n \in \Omega, i \in \Omega ] +  \mathbb{P}(n \in \Omega, i \notin \Omega) \mathbb{E}_{\Omega}[\|\bm{a}_{\Omega}\|_2^{k}\bm{a}_{\Omega}|n \in \Omega, i \notin \Omega ]
	\\ + & \mathbb{P}(n \notin \Omega, i \in \Omega) \mathbb{E}_{\Omega}[\|\bm{a}_{\Omega}\|_2^{k}\bm{a}_{\Omega}|n \notin \Omega, i \in \Omega ] 
	+ \mathbb{P}(n \notin \Omega, i \notin \Omega) \mathbb{E}_{\Omega}[\|\bm{a}_{\Omega}\|_2^{k}\bm{a}_{\Omega}|n \notin \Omega, i \notin \Omega ]
	\end{align*}
	where $k = p - 2$.
	
	The probability for each part is 
	\begin{align*}
		&\mathbb{P}(n \in \Omega, i \in \Omega) = \theta^2, 
		&\mathbb{P}(n \in \Omega, i \notin \Omega) = \mathbb{P}(n \notin \Omega, i \in \Omega) = \theta(1-\theta), 
		&&\mathbb{P}(n \notin \Omega, i \notin \Omega) = (1-\theta)^2.
	\end{align*}
	
	Thus, denote $\Omega' = \Omega \backslash \{n\} \backslash \{i\}$ we can the $i$-th 
	\begin{align*}
	&\mathbb{E}_{\Omega}[\|\bm{a}_{\Omega}\|_2^{k}\bm{a}_{\Omega}]_n \\
	= &\theta^2 \mathbb{E}_{\Omega}[\|\bm{a}_{\Omega}\|_2^{k}\bm{a}_{\Omega}|n \in \Omega, i \in \Omega ] +  \theta(1-\theta)\mathbb{E}_{\Omega}[\|\bm{a}_{\Omega}\|_2^{k}\bm{a}_{\Omega}|n \in \Omega, i \notin \Omega ]
 +  \theta(1-\theta) \mathbb{E}_{\Omega}[\|\bm{a}_{\Omega}\|_2^{k}\bm{a}_{\Omega}|n \notin \Omega, i \in \Omega ] \\
	+& (1-\theta)^2 \mathbb{E}_{\Omega}[\|\bm{a}_{\Omega}\|_2^{k}\bm{a}_{\Omega}|n \notin \Omega, i \notin \Omega ] \\ 
	= &\theta^2 \mathbb{E}_{\Omega'}[\|[\bm{a}_{\Omega'}, a_i, a_n]\|_2^{k}] a_i +  \theta(1-\theta) \mathbb{E}_{\Omega'}[\|[\bm{a}_{\Omega'},a_i]\|_2^{k}]a_i + 0 \theta(1-\theta) \mathbb{E}_{\Omega'}[\|[\bm{a}_{\Omega'}, a_n]\|_2^{k}] +  0 (1-\theta)^2 \mathbb{E}_{\Omega'}[\|[\bm{a}_{\Omega'}]\|_2^{k}]  \\
	=& a_i (\theta^2 \mathbb{E}_{\Omega'}[\|[\bm{a}_{\Omega'}, a_i, a_n]\|_2^{k}] +  \theta(1-\theta) \mathbb{E}_{\Omega'}[\|[\bm{a}_{\Omega'},a_i]\|_2^{k}]).
	\end{align*}
	
	Thus, $\text{SOR}_i$ can be computed as 
	\begin{align*}
		\text{SOR}_i^{(t+1)} &= \frac{a_n}{a_i} \frac{(\theta^2 \mathbb{E}_{\Omega'}[\|[\bm{a}_{\Omega'}, a_i, a_n]\|_2^{k}] +  \theta(1-\theta) \mathbb{E}_{\Omega'}[\|[\bm{a}_{\Omega'},a_n]\|_2^{k}]}{(\theta^2 \mathbb{E}_{\Omega'}[\|[\bm{a}_{\Omega'}, a_i, a_n]\|_2^{k}] +  \theta(1-\theta) \mathbb{E}_{\Omega'}[\|[\bm{a}_{\Omega'},a_i]\|_2^{k}]} \\
		&= \frac{a_n}{a_i} \left(1+\frac{\theta(1-\theta) \mathbb{E}_{\Omega'}[\|[\bm{a}_{\Omega'},a_n]\|_2^{k}] - \theta(1-\theta) \mathbb{E}_{\Omega'}[\|[\bm{a}_{\Omega'},a_i]\|_2^{k}]}{\theta^2 \mathbb{E}_{\Omega'}[\|[\bm{a}_{\Omega'}, a_i, a_n]\|_2^{k}] +  \theta(1-\theta) \mathbb{E}_{\Omega'}[\|[\bm{a}_{\Omega'},a_i]\|_2^{k}]}\right) \\
		&= \text{SOR}_i^{(t)} (1+\tau_i(\bm{q})).
	\end{align*}
\end{proof}
\subsection{Proof of Theorem \ref{thm:conv}} \label{sec:conv}
\begin{thm} 
	Assume we apply Algorithm \ref{alg:gpm_lp_s} to solve problem (\ref{eq:lp_s2}) and $\bm{a}^{(0)}$ follows a uniform distribution over the sphere, and denote $\tau(\bm{q}) = \min_{i=1,\cdots,n-1} \tau_i(\bm{q})$, then there exists $T_{\tau} \leq  \log_{1+\tau(\bm{a}^{(0)})}\left(\sqrt{n}\right)$, and $1 \leq i \leq n$ such that
	\begin{align*}
	\left\|\bm{a}^{(t)} - \bm{D}_0\bm{e}_i\right\|_2 \leq \left(1+\tau\left(\bm{a}^{(T_{\tau})}\right)\right)^{T_{\tau} - t}, \forall t \geq T_{\tau}, 
	\end{align*}
	almost surely and the convergence rate $\lim_{k\rightarrow \infty} \frac{\|\bm{a}^{(k+1)} - \bm{D}_0\bm{e}_i\|_2}{\|\bm{a}^{(k)} -  \bm{D}_0\bm{e}_i\|_2} = \frac{1}{\theta}$.
\end{thm}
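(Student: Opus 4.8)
The plan is to reduce everything to the scalar quantity $\text{SOR}^{(t)} = a_n^{(t)}/\|\bm{a}^{(t)}_{-n}\|_2$ and push the estimates of Proposition \ref{prop:snr} through. Working under the stated reduction $\bm{D}_0 = \bm{I}$ with target $\bm{e}_n$ and $a_n^{(0)} \geq |a_i^{(0)}|$, I first record two elementary facts. From $\|\bm{a} - \bm{e}_n\|_2^2 = 2 - 2\,\text{SOR}/\sqrt{\text{SOR}^2+1}$ one gets the one-sided bound $\|\bm{a} - \bm{e}_n\|_2 \leq 1/\text{SOR}$ for $\text{SOR}>0$, since $\sqrt{s^2+1}(\sqrt{s^2+1}+s) > 2s^2$. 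Second, writing $1/(\text{SOR}^{(t+1)})^2 = \sum_{i\neq n} 1/(\text{SOR}_i^{(t+1)})^2$ and inserting the multiplicative update $\text{SOR}_i^{(t+1)} = \text{SOR}_i^{(t)}(1+\tau_i(\bm{q}^{(t)}))$ with $\tau_i \geq \tau := \min_j \tau_j$ yields the aggregate recursion $\text{SOR}^{(t+1)} \geq (1 + \tau(\bm{q}^{(t)}))\,\text{SOR}^{(t)}$. These two facts are exactly what converts SOR-growth into error-decay.

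Next I would establish that $\tau(\bm{q}^{(t)})$ is non-decreasing along the trajectory. Because $\tau_i \geq 0$, each $|\text{SOR}_i^{(t)}|$ is non-decreasing, so $a_n^{(t)} \uparrow 1$ and the off-target magnitudes $|a_i^{(t)}|$ shrink; property $2$ of Proposition \ref{prop:snr} ($\tau_i$ increasing in $q_n$, decreasing in $q_i$) should then give $\tau(\bm{q}^{(t)}) \geq \tau(\bm{q}^{(T_\tau)}) \geq \tau(\bm{a}^{(0)})$ for all $t \geq T_\tau$. I expect this monotonicity to be the main obstacle: $\tau_i$ depends not only on $q_n$ and $q_i$ but on the whole orthogonal block through the expectations $\mathbb{E}_{\Omega'}\|[\bm{q}_{\Omega'},\cdot]\|_2^k$, so I must argue that contracting all off-target coordinates simultaneously cannot decrease $\tau_i$ — most cleanly by checking that $\tau_i$ is monotone in each coordinate $q_j$ ($j\neq n$) in the relevant direction, or via a projection/scaling argument reducing to the two-coordinate comparison already available.

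With monotonicity in hand the two stages follow. For Stage $1$, $\text{SOR}_i^{(0)}\geq 1$ and $\tau(\bm{q}^{(s)})\geq\tau(\bm{a}^{(0)})$ give $\min_i|\text{SOR}_i^{(t)}| \geq (1+\tau(\bm{a}^{(0)}))^t$, hence $1/(\text{SOR}^{(t)})^2 = \sum_{i\neq n}|\text{SOR}_i^{(t)}|^{-2} \leq n(1+\tau(\bm{a}^{(0)}))^{-2t}$ and so $\text{SOR}^{(t)} \geq (1+\tau(\bm{a}^{(0)}))^t/\sqrt{n}$. Choosing $T_\tau$ as the first time this lower bound reaches $1$ gives $T_\tau \leq \log_{1+\tau(\bm{a}^{(0)})}(\sqrt{n})$ and $\text{SOR}^{(T_\tau)}\geq 1$. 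For Stage $2$, iterating the aggregate recursion from $T_\tau$ gives $\text{SOR}^{(t)} \geq \text{SOR}^{(T_\tau)}(1+\tau(\bm{a}^{(T_\tau)}))^{t-T_\tau} \geq (1+\tau(\bm{a}^{(T_\tau)}))^{t-T_\tau}$, and combining with $\|\bm{a}^{(t)}-\bm{e}_n\|_2 \leq 1/\text{SOR}^{(t)}$ produces exactly $\|\bm{a}^{(t)}-\bm{e}_n\|_2 \leq (1+\tau(\bm{a}^{(T_\tau)}))^{T_\tau - t}$, with the base case $t=T_\tau$ covered by $\text{SOR}^{(T_\tau)}\geq 1$.

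Finally, for the asymptotic rate I would let $\bm{q}\to\bm{e}_n$ in the formula for $\tau_i$: there $\bm{q}_{\Omega'}\to 0$, $q_n\to 1$, $q_i\to 0$, so the numerator $\to 1$ and the denominator $\to \theta/(1-\theta)$, giving $\tau_i(\bm{q}) \to (1-\theta)/\theta$ uniformly in $i$. Using the exact identity $1/(\text{SOR}^{(t+1)})^2 = \sum_{i\neq n}(\text{SOR}_i^{(t)})^{-2}(1+\tau_i^{(t)})^{-2}$ then gives $\text{SOR}^{(t+1)}/\text{SOR}^{(t)} \to 1/\theta$, and since $\|\bm{a}^{(t)}-\bm{e}_n\|_2 = (1/\text{SOR}^{(t)})(1+o(1))$ as $\text{SOR}^{(t)}\to\infty$, the error ratio $\|\bm{a}^{(t+1)}-\bm{e}_n\|_2/\|\bm{a}^{(t)}-\bm{e}_n\|_2 \to \theta$, i.e.\ the error contracts by the factor $\theta$ while the SOR inflates by $1/\theta$, consistent with the $\tfrac{1}{10}$ rate reported for $\theta=0.1$. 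The "almost surely" is needed only to guarantee a unique dominant coordinate at initialization, so that $\bm{e}_n$ is well defined and $\tau(\bm{a}^{(0)})>0$, making $\log_{1+\tau(\bm{a}^{(0)})}(\sqrt{n})$ finite.
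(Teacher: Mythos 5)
Your proposal is correct and follows the same two-stage SOR-growth argument as the paper's proof, but it is noticeably more careful at exactly the points where the paper is terse, so a comparison is worthwhile. First, the paper jumps from the per-coordinate update $\text{SOR}_i^{(t+1)} = \text{SOR}_i^{(t)}(1+\tau_i(\bm{q}))$ of Proposition \ref{prop:snr} directly to the aggregate bound $\text{SOR}^{(t_2)} \geq \text{SOR}^{(t_1)}\prod_{i=t_1}^{t_2-1}(1+\tau(\bm{a}^{(i)}))$ without comment; your identity $1/(\text{SOR})^2 = \sum_{i\neq n} 1/(\text{SOR}_i)^2$ is precisely the missing bridge, and your derivation of $\text{SOR}^{(t+1)} \geq (1+\tau(\bm{q}^{(t)}))\,\text{SOR}^{(t)}$ from it is correct. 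Second, the monotonicity of $\tau$ along the trajectory, which you rightly single out as the main obstacle, is \emph{also} unproved in the paper: its inequality $\prod_{i=t_1}^{t_2-1}(1+\tau(\bm{a}^{(i)})) > (1+\tau(\bm{a}^{(t_1)}))^{t_2-t_1}$ silently assumes $\tau(\bm{a}^{(i)}) \geq \tau(\bm{a}^{(t_1)})$ for $i \geq t_1$, and property 2 of Proposition \ref{prop:snr} (monotone in $q_n$ and $q_i$ only) does not by itself deliver this, since $\tau_i$ depends on the whole block $\bm{q}_{\Omega'}$; your plan to check coordinate-wise monotonicity in each $q_j$, $j\neq n$, is the natural repair, but as written this step remains a sketch in your proposal just as it is an assertion in the paper — a shared gap rather than a defect unique to you. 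Third, the paper's proof stops at the error bound and never establishes the stated limit at all, whereas your computation ($\bm{q}_{\Omega'}\to 0$, $q_n \to 1$, $q_i \to 0$ gives $\tau_i \to (1-\theta)/\theta$ uniformly, hence SOR inflates by $1/\theta$ and the error contracts by $\theta$) actually proves the rate and, in doing so, reveals that the theorem's displayed limit $\frac{1}{\theta}$ is inverted: the error ratio tends to $\theta$, which is what matches the reported $\frac{1}{10}$ rate at $\theta = 0.1$. Your treatment of the almost-sure qualifier (strict dominance of the top coordinate under uniform initialization, giving $\tau(\bm{a}^{(0)})>0$ and a finite $\log_{1+\tau(\bm{a}^{(0)})}(\sqrt{n})$) likewise matches the paper's $\text{SOR}^{(0)} > 1/\sqrt{n}$ step. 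In short: same route, but your version fills in the aggregation identity and the rate computation that the paper omits, and honestly flags the one genuinely unproven step that both arguments rest on.
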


\begin{proof}
	As discussed in Section \ref{sec:conv}, we assume $\bm{D}_0 = \bm{I}$ and $i=n$. From Proposition \ref{prop:snr2}, we know that for $0<t_1<t_2$
	\begin{align} \label{eq:sor_inc}
		\text{SOR}^{(t_2)} \geq \text{SOR}^{(t_1)} \prod_{i=t_1}^{t_2-1} (1 + \tau(\bm{a}^{(i)})) > \text{SOR}^{(t_1)} (1 + \tau(\bm{a}^{(t_1)}))^{t_2-t_1}
	\end{align}
	
	At initialization, we have
	\begin{align} \label{eq:sor_init}
		\text{SOR}^{(0)} = \frac{a^{(0)}_n}{\|\bm{a}^{(0)}_{-n}\|_2} > \frac{1}{\sqrt{n}}
	\end{align}
	almost surely since we assume $a_n \geq a_i, \forall i$.
	
	Thus, combining (\ref{eq:sor_inc}) and (\ref{eq:sor_init}), there exists $T_\tau \leq \log_{1+\tau(\bm{a}^{(0)})}(\sqrt{n})$ such that $\text{SOR}^{(T_\tau)} > 1$. Then, for $t>T_\tau$, we have
	\begin{align*}
		&\left\|\bm{a}^{(t)}- \bm{e}_n\right\|_2^2 = 2 - 2\sqrt{\frac{(\text{SOR}^{(t)})^2}{(\text{SOR}^{(t)})^2+1}  } \leq \frac{1}{(\text{SOR}^{(t)})^2} \leq \frac{1}{\left(\text{SOR}^{(T_\tau)} (1 + \tau(\bm{a}^{(T_\tau)}))^{t-T_\tau}\right)^2} \\
		\leq &\left(1+\tau\left(\bm{a}^{(T_\tau)}\right)\right)^{2(T_\tau - t)}.
	\end{align*}
\end{proof}

\section{Technical Lemmas}
\begin{lem} \label{lem:gm}
	Suppose $\bm{g} \in \mathbb{R}^n$ with $g_i \sim \mathcal{N}(0,\sigma^2)$ and $\bm{a} \in \mathbb{R}^n$ is a fixed vector. Then 
	$$\mathbb{E}(|\bm{a}^*\bm{g}|^p) = \gamma_p \|\bm{a}\|_2^p,$$
	where $\gamma_p = \sigma^{p} 2^{p/2} \frac{\Gamma(\frac{p+1}{2})}{\sqrt{\pi}}$.
\end{lem}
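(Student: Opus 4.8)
The plan is to exploit the stability of Gaussians under linear combinations, reducing the $n$-dimensional expectation to a one-dimensional absolute-moment integral. First I would note that $\bm{a}^*\bm{g} = \sum_{i=1}^n a_i g_i$ is a linear combination of independent $\mathcal{N}(0,\sigma^2)$ random variables and is therefore itself Gaussian, with mean $\sum_i a_i \mathbb{E}[g_i] = 0$ and variance $\sum_i a_i^2 \sigma^2 = \sigma^2 \|\bm{a}\|_2^2$. Hence $\bm{a}^*\bm{g} \sim \mathcal{N}(0, \sigma^2\|\bm{a}\|_2^2)$, which immediately collapses all dependence on the vector $\bm{a}$ into the single scalar $\sigma\|\bm{a}\|_2$.

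Next I would rescale to a standard normal. Setting $Z = \bm{a}^*\bm{g}$, we have $Z \overset{d}{=} \sigma\|\bm{a}\|_2\, W$ with $W \sim \mathcal{N}(0,1)$, so that $\mathbb{E}(|Z|^p) = (\sigma\|\bm{a}\|_2)^p\,\mathbb{E}(|W|^p)$. This isolates the claimed factor $\|\bm{a}\|_2^p$ and reduces the lemma to evaluating the universal constant $\mathbb{E}(|W|^p)$, which I expect to equal $2^{p/2}\Gamma(\tfrac{p+1}{2})/\sqrt{\pi}$.

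The only genuine computation is this standard-normal absolute moment. Using the symmetry of the integrand I would write $\mathbb{E}(|W|^p) = \frac{2}{\sqrt{2\pi}}\int_0^\infty w^p e^{-w^2/2}\,dw$, then apply the substitution $u = w^2/2$ (so $w = \sqrt{2u}$ and $dw = du/\sqrt{2u}$) to obtain $\frac{2^{p/2}}{\sqrt{\pi}}\int_0^\infty u^{(p-1)/2} e^{-u}\,du = \frac{2^{p/2}}{\sqrt{\pi}}\Gamma(\tfrac{p+1}{2})$ directly from the definition of the Gamma function. Multiplying by $(\sigma\|\bm{a}\|_2)^p$ then yields exactly $\gamma_p\|\bm{a}\|_2^p$.

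I do not anticipate any real obstacle: this is a routine Gaussian-moment identity whose sole purpose is to supply the constant $\gamma_p$ invoked in step $(a)$ of Lemma \ref{lem:go_s}. The one place demanding care is the bookkeeping of the powers of $2$ in the substitution, so that the factor $2^{p/2}$ appearing in $\gamma_p$ emerges correctly rather than, say, $2^{(p-1)/2}$; everything else is mechanical.
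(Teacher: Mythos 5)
Your proposal is correct and matches the paper's proof: both observe that $\bm{a}^*\bm{g}\sim\mathcal{N}(0,\sigma^2\|\bm{a}\|_2^2)$ by Gaussian stability (the paper cites rotation invariance) and then reduce to the standard-normal absolute $p$-th moment, which the paper dispatches as a ``simple calculation'' and you compute explicitly via the substitution $u=w^2/2$. Your Gamma-function bookkeeping, including the factor $2^{p/2}$, checks out exactly.
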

\begin{proof}
	Due to the rotation invariance property of Gaussian, $\bm{a}^*\bm{g} \sim \mathcal{N}(0,\|\bm{a}\|_2)$. Therefore, $\mathbb{E}(|\bm{a}^*\bm{g}|^p)$ is the $p$-th moment of an absolute Gaussian random variable with zero mean and variance $\|\bm{a}\|_2^2\sigma^2$. By simple calculation $$\mathbb{E}(|\bm{a}^*\bm{g}|^p) = \sigma^{p} 2^{p/2} \frac{\Gamma(\frac{p+1}{2})}{\sqrt{\pi}} \|\bm{a}\|_2^p.$$
\end{proof}

\begin{lem} ($\epsilon$-net over Stiefel manifold) \cite{zhai2019complete} \label{lem:e_st}
	There is a covering $\epsilon$-net $N(\epsilon$) for Stiefel manifold $\mathcal{M} = \{\bm{W} \in \mathbb{R}^{n \times m} \bm{W}^*\bm{W} = \bm{I}, n>m \}$, in operator norm
	\begin{align*}
		\forall \bm{W} \in \mathcal{M}, \exists \bm{W}' \in N(\epsilon), \text{ such that } \|\bm{W}-\bm{W}'\| \leq \epsilon
	\end{align*}
	of size $|N(\epsilon)| \leq \left(\frac{6}{\epsilon}\right)^{nm}$.
	
\end{lem}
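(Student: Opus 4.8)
The plan is to prove the covering bound by a standard volumetric packing argument, viewing the Stiefel manifold as a subset of the operator-norm unit ball of the ambient space $\mathbb{R}^{n \times m}$. The starting observation is that every $\bm{W} \in \mathcal{M}$ has orthonormal columns, so its largest singular value equals one; hence $\|\bm{W}\| = 1$ and $\mathcal{M} \subseteq \mathcal{B}$, where $\mathcal{B} = \{\bm{W} \in \mathbb{R}^{n\times m} : \|\bm{W}\| \leq 1\}$ is the unit ball of the operator norm. Since the operator norm is a genuine norm on $\mathbb{R}^{n\times m} \cong \mathbb{R}^{nm}$, the set $\mathcal{B}$ is a symmetric convex body of full dimension $nm$, and in particular $\mathrm{vol}(t\mathcal{B}) = t^{nm}\,\mathrm{vol}(\mathcal{B})$ for every $t>0$.

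First I would construct the net as a \emph{maximal} $\epsilon$-separated subset $N(\epsilon) \subseteq \mathcal{M}$, that is, a maximal collection of points of $\mathcal{M}$ that are pairwise at operator-norm distance greater than $\epsilon$. Maximality immediately forces $N(\epsilon)$ to be an $\epsilon$-net: were some $\bm{W} \in \mathcal{M}$ at distance larger than $\epsilon$ from every element of $N(\epsilon)$, it could be appended, contradicting maximality. Building the net from points of $\mathcal{M}$ itself has the added benefit that all net points are genuine Stiefel matrices, which is exactly what the downstream union-bound arguments (such as the covering step inside Lemma \ref{lem:c_st}) implicitly require.

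Next I would bound the cardinality by packing. Because the elements of $N(\epsilon)$ are more than $\epsilon$ apart, the translates $\{\bm{W} + \tfrac{\epsilon}{2}\mathcal{B} : \bm{W} \in N(\epsilon)\}$ are pairwise disjoint; and since each center obeys $\|\bm{W}\| = 1$, each translate lies inside $(1+\tfrac{\epsilon}{2})\mathcal{B}$. Comparing volumes then gives
\begin{align*}
|N(\epsilon)|\,\Bigl(\tfrac{\epsilon}{2}\Bigr)^{nm}\mathrm{vol}(\mathcal{B}) \;\leq\; \Bigl(1+\tfrac{\epsilon}{2}\Bigr)^{nm}\mathrm{vol}(\mathcal{B}),
\end{align*}
so that $|N(\epsilon)| \leq (1 + 2/\epsilon)^{nm} \leq (6/\epsilon)^{nm}$ throughout the relevant small-$\epsilon$ regime (indeed $1+2/\epsilon \le 6/\epsilon$ whenever $\epsilon \le 4$), which is precisely the claimed bound, with the constant $6$ leaving comfortable slack.

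I do not anticipate a serious obstacle, since this is a classical covering estimate; the only point needing care is that $\mathcal{M}$ is a \emph{lower}-dimensional submanifold of $\mathbb{R}^{nm}$, so one must not compare the (vanishing) intrinsic volume of $\mathcal{M}$ directly. The packing argument sidesteps this by inflating each center to a full-dimensional ball in the ambient operator-norm geometry, which is why I phrase everything through $\mathcal{B} \subset \mathbb{R}^{nm}$ rather than through any surface measure on the manifold. An equally valid alternative, exploiting $\|\cdot\| \le \|\cdot\|_F$, would cover the Frobenius sphere $\{\|\bm{W}\|_F = \sqrt{m}\}$ and note that a Frobenius $\epsilon$-net is automatically an operator-norm $\epsilon$-net; but that route introduces spurious $\sqrt{m}$ factors, so I prefer the operator-norm packing bound above.
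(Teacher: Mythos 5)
Your volumetric packing argument is correct, and it is essentially the standard proof of this covering bound: the paper itself gives no argument, deferring entirely to Lemma D.4 of \cite{zhai2019complete}, which establishes the same $(6/\epsilon)^{nm}$ estimate by exactly this kind of volume comparison (maximal $\epsilon$-separated set inside the operator-norm unit ball, disjoint $\tfrac{\epsilon}{2}$-balls inside $(1+\tfrac{\epsilon}{2})\mathcal{B}$). Your additional care in taking net points from $\mathcal{M}$ itself, and in noting that one must inflate to full-dimensional ambient balls rather than compare the degenerate intrinsic volume of the submanifold, is exactly the right way to make the cited argument self-contained.
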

\begin{proof}
	See Lemma D.4 in \cite{zhai2019complete}.
\end{proof}

\begin{lem} (\cite{zhang2019structured}) \label{lem:ber}
	Let $v \in \mathbb{R}^d$ with each entry following i.i.d. Ber($\theta$), then
	\begin{align*}
		\mathbb{P}(|\|v\|_0 - \theta d| \geq t\theta d) \leq 2\exp\left(-\frac{3t^2}{2t+6} \theta d  \right)
	\end{align*} 
	
\end{lem}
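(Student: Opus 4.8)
The plan is to recognize that $\|v\|_0 = \sum_{i=1}^d v_i$ is a sum of $d$ independent $\text{Ber}(\theta)$ indicators, hence $\text{Binomial}(d,\theta)$ with mean $\mu = \theta d$, and then to apply a two-sided Bernstein inequality. First I would center the summands, writing $Y_i = v_i - \theta$, so that $|Y_i| \leq 1$ almost surely and $\mathbb{E}[Y_i^2] = \theta(1-\theta)$. The total variance proxy is $V = \sum_{i=1}^d \mathbb{E}[Y_i^2] = d\theta(1-\theta)$, which I would crudely upper bound by $V \leq \theta d = \mu$.

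Next I would invoke Bernstein's inequality for bounded, independent, mean-zero variables, namely $\mathbb{P}\big(|\sum_i Y_i| \geq s\big) \leq 2\exp\big(-\tfrac{s^2/2}{V + s/3}\big)$ for any $s > 0$. Substituting the deviation $s = t\theta d = t\mu$ together with $V \leq \mu$ gives
\[
\mathbb{P}\big(|\|v\|_0 - \theta d| \geq t\theta d\big) \leq 2\exp\!\left(-\frac{(t\mu)^2/2}{\mu + t\mu/3}\right) = 2\exp\!\left(-\frac{t^2\mu/2}{1 + t/3}\right).
\]
Simplifying the exponent, $\frac{t^2\mu/2}{1+t/3} = \frac{3t^2\mu}{2(t+3)} = \frac{3t^2}{2t+6}\,\theta d$, which is exactly the claimed bound.

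I expect no serious obstacle; the only point requiring care is matching the stated constant. The precise form $\frac{3t^2}{2t+6}$ is what Bernstein produces once the variance is replaced by the looser surrogate $\theta(1-\theta) \leq \theta$, so I would be deliberate in using that crude bound rather than the exact variance (which would leave a denominator $(1-\theta)+t/3$ in place of $1+t/3$). As a self-contained alternative that avoids citing Bernstein, I could instead combine the standard multiplicative Chernoff tails for $S = \|v\|_0$, namely $\mathbb{P}(S \geq (1+t)\mu) \leq \exp(-\mu\,h_+(t))$ and $\mathbb{P}(S \leq (1-t)\mu) \leq \exp(-\mu\,h_-(t))$, and then use the elementary inequalities $h_+(t) \geq \frac{3t^2}{2t+6}$ and $h_-(t) \geq \frac{t^2}{2} \geq \frac{3t^2}{2t+6}$ (valid for $t > 0$, the lower tail being vacuous once $t \geq 1$). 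Both tails are then bounded by $\exp(-\frac{3t^2}{2t+6}\mu)$, and the factor $2$ arises from the union of the two deviation events.
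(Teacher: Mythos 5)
Your proposal is correct. Note that the paper does not actually prove this lemma --- its ``proof'' is the single line ``See Lemma A.4 in \cite{zhang2019structured}'' --- so you have supplied a complete argument where the paper only cites. Your Bernstein computation checks out exactly: with $Y_i = v_i - \theta$ one has $|Y_i|\leq 1$ and $\mathbb{E}[Y_i^2]=\theta(1-\theta)\leq\theta$, and since enlarging the variance proxy only weakens Bernstein's bound, the relaxation $V\leq\mu=\theta d$ is legitimate; substituting $s=t\mu$ then gives
\begin{align*}
\frac{s^2/2}{V+s/3}\;\geq\;\frac{t^2\mu/2}{1+t/3}\;=\;\frac{3t^2}{2t+6}\,\theta d,
\end{align*}
which reproduces the stated constant with no slack, confirming your observation that the crude bound $\theta(1-\theta)\leq\theta$ is precisely what the cited form encodes. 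Your alternative Chernoff route is also sound: the standard inequality $(1+t)\log(1+t)-t\geq \tfrac{3t^2}{2t+6}$ handles the upper tail, $(1-t)\log(1-t)+t\geq t^2/2\geq\tfrac{3t^2}{2t+6}$ (the last step since $2t+6\geq 6$ for $t\geq 0$) handles the lower tail, and the lower-tail event is indeed vacuous for $t\geq 1$ because $\|v\|_0\geq 0$. Either version is a faithful, self-contained substitute for the external citation; the Bernstein version is shorter, while the Chernoff version makes the provenance of the $\frac{3t^2}{2t+6}$ rate function explicit.
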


\begin{proof}
	See Lemma A.4 in \cite{zhang2019structured}.
\end{proof}

\end{document}